\documentclass[twoside]{article}

\usepackage{aistats2022}
\usepackage{amsmath, amssymb, hyperref}
\usepackage{caption}
\usepackage{subcaption}
\usepackage{dsfont}
\usepackage{algorithm}
\usepackage[noend]{algpseudocode}
\usepackage{comment}
\usepackage{graphicx}
\usepackage{natbib}
\usepackage{amsthm}

\renewcommand{\phi}{\varphi}
\renewcommand{\epsilon}{\varepsilon}

\usepackage{amsthm}
\theoremstyle{definition}
\newtheorem{theorem}{Theorem}
\newtheorem{lemma}[theorem]{Lemma}
\newtheorem{remark}{Remark}
\newtheorem{claim}{Claim}
\newtheorem{corollary}{Corollary}

\DeclareRobustCommand{\divby}{%
  \mathrel{\text{\vbox{\baselineskip.65ex\lineskiplimit0pt\hbox{.}\hbox{.}\hbox{.}}}}%
}

%
%


\setlength{\pdfpageheight}{11in}
\setlength{\pdfpagewidth}{8.5in}



\begin{document}

%

%

\twocolumn[

\aistatstitle{Multilayer Lookahead: a Nested Version of Lookahead}

\aistatsauthor{ Denys Pushkin \And Luis Barba }

\aistatsaddress{EPFL, Switzerland \\ \texttt{luis.barbaflores@epfl.ch} \And EPFL, Switzerland \\ \texttt{denys.pushkin@epfl.ch}}]

\begin{abstract}
In recent years, SGD and its variants have become the standard tool to train Deep Neural Networks. In this paper, we focus on the recently proposed variant Lookahead, which improves upon SGD in a wide range of applications. Following this success, we study an extension of this algorithm, the \emph{Multilayer Lookahead} optimizer, which recursively wraps Lookahead around itself. 
We prove the convergence of Multilayer Lookahead with two layers to a stationary point of smooth non-convex functions with $O(\frac{1}{\sqrt{T}})$ rate. 
We also justify the improved generalization of both Lookahead over SGD, and of Multilayer Lookahead over Lookahead, by showing how they amplify the implicit regularization effect of SGD.
We empirically verify  our results and show that Multilayer Lookahead outperforms Lookahead on CIFAR-10 and CIFAR-100 classification tasks, and on GANs training on the MNIST dataset. 
\end{abstract}

\section{INTRODUCTION}

Because of their simplicity, performance, and better generalization properties, stochastic variants of gradient descent have become the standard tool to train Deep Neural Networks. Experimental evidence~\citep{keskar2016large, hoffer2017train} shows that the use of smaller batch sizes further improves generalization, though it hinders the possibility of using parallelism. The development of new stochastic methods remains an active area of research. 
In particular, the Lookahead optimizer introduced by~\citet{zhang2019lookahead} has been shown to improve the performance of first-order stochastic methods for training deep neural networks. It also improves the convergence speed and the generalization of stochastic methods in numerous applications~\citep{zhang2019lookahead,chavdarova2020taming}.
In this paper, we further research around this optimizer. 

Lookahead is a wrapper around any other optimizer, which is called the \emph{base}, or \emph{inner optimizer}. It stores two states of the parameters, namely \emph{fast} and \emph{slow weights}. At each \emph{round}, it initializes fast weights to the current value of slow weights, then performs $k$ updates of the fast weights, using the inner optimizer, and assigns a new value of slow weights to be the convex combination of fast and slow weights with coefficients $\alpha$ and $1-\alpha$. Parameter $\alpha$ is called the \emph{slow learning rate}, or \emph{synchronization parameter}. 
Lookahead achieves faster convergence than its inner optimizer for classification tasks such as LSTM language models, transformers ~\citep{zhang2019lookahead} and GANs ~\citep{chavdarova2020taming}. 
Moreover, Lookahead is stable to suboptimal choice of hyperparameters ~\citep{zhang2019lookahead}.

However, a sufficient theoretical explanation of the success of Lookahead is still missing. 
\citet{zhang2019lookahead} claim that Lookahead can be considered as a method for variance reduction. They proved that Lookahead with SGD as the inner optimizer reduces the variance on the noisy quadratic loss function. \citet{wang2020lookahead} consider Lookahead from the perspective of multi-agent methods of optimization and show its convergence (with SGD as the inner optimizer) to a stationary point for smooth non-convex functions with $O(\frac{1}{ \sqrt{T}})$ rate, which matches the rate of SGD. Even though their analysis gives the same constant for the asymptotically most significant terms of SGD and Lookahead, it produces an additional term for Lookahead, which makes its convergence guarantee slightly worse~\citep{chavdarova2020taming}. However, in practice, we observe the opposite result. 

While Lookahead drastically improves the performance at the early stages of training, it usually produces only marginally better test loss after its convergence~\citep{zhang2019lookahead,chavdarova2020taming}.

Given the success of Lookahead in a wide range of applications, we go further and study a new extension, which we call the \emph{Multilayer Lookahead} optimizer. Succinctly, we can define it recursively as follows: Multilayer Lookahead with $n$ layers is a Lookahead optimizer, whose inner optimizer is Multilayer Lookahead with $n-1$ layers~\footnote{This algorithm for 2 layers was briefly discussed by~\citet{chavdarova2020taming}.}.

Our contributions are the following:

1) We introduce Multilayer Lookahead - an algorithm obtained by recursively iterating the Lookahead optimizer. We empirically verify that Multilayer Lookahead outperforms both its inner optimizer and Lookahead on image classification and GANs training tasks. 

2) We extend the analysis of~\citet{wang2020lookahead} for Lookahead and prove the convergence of the Lookahead with two layers with SGD as the inner optimizer towards a stationary point with rate $O(\frac{1}{\sqrt{T}})$. 

3) We prove that if the inner optimizer has a linear convergence rate, then both Lookahead and Multilayer Lookahead preserve the linear convergence rate (but, possibly, with a worse constant).

4) We propose a new explanation for the better generalization of Lookahead over SGD following the idea of implicit regularization and the backward analysis framework~\citep{smith2021origin, nichol2018first}. 

5) We show that Multilayer Lookahead exhibits a behavior similar to the ``super-convergence'' phenomenon studied by \citep{smith2019super}, where near top performance is achieved after just a few epochs.

\section{METHOD}

First, we formally revisit the original Lookahead algorithm~\citep{zhang2019lookahead} and then describe our extension of this method--- Multilayer Lookahead. 

Lookahead stores two parameters: fast weights $x_{r,i}$ and slow weights $y_r$. At each round $r$, it starts by initializing the fast weights $x_{r,0}$ to the current value of $y_r$, then performs $k$ updates of the fast weights using the inner optimizer, and obtains $x_{r,k}$. Finally, it performs the \emph{synchronization step} which moves the slow weights towards the final value of the fast weights by taking their convex combination: $y_{r+1} = (1-\alpha) y_{r} + \alpha x_{r,k}$.
The idea of using slow weights makes the trajectory of the optimizer smoother in a setting with a high variance of stochastic gradients.
See the pseudo-code in Algorithm \ref{Lookahead}.

\begin{algorithm}
\caption{\small Lookahead} \label{Lookahead}
\begin{algorithmic}[1]
\Require Loss function $f$; inner optimizer $A$; initial point $y_0$; synchronization parameter $\alpha$; length of the inner loop $k$; number of rounds $R$
\For {$r=0,1,\ldots R-1$}
\State $x_{r,0} = y_{r}$
\For {$i = 0, 1, \ldots k-1$}
\State Sample mini-batch $\xi_{r,i}$
\State $x_{r, i+1} = x_{r,i} - A(f_{\xi_{r,i}}, x_{r,i})$
\EndFor
\State $y_{r+1} = (1 - \alpha) y_r + \alpha x_{r,k}$ \Comment{synchronization}
\EndFor
\end{algorithmic}
\end{algorithm}

Comparing to the base optimizer, Lookahead needs extra space to store slow weights, which is at most two times more space in total. It requires also computing one additional convex combination per round, which has a negligible impact on time complexity.

We proceed now to formally introduce the Multilayer Lookahead optimizer. 
The easiest way to define it is recursive: Lookahead with $n$ layers is a Lookahead, whose inner optimizer is Lookahead with $n-1$ layers. 
Thus, it stores $n$ states of the parameters, has $n$ synchronization parameters $\alpha_1, \ldots, \alpha_n$, and $n$ numbers of steps per update $k_1, \ldots, k_n$, one per each layer. 
We provide a pseudo-code for Lookahead with two layers in Algorithm \ref{2-Lookahead}. 
Among all $n$ weights, we will often refer to the \emph{outer weights} (weights at level $1$; those that update the least frequently) and the \emph{inner weights} (weights at level $n$; those that update the most frequently). 
As before, by \emph{round} we mean one update of the outer variable, and by \emph{iteration} - one update of the inner variable, possibly followed by the synchronization with the variables from lower levels. 
One update of the outer variable requires computing $k_n$ updates of the variable from the level $2$, that in turn requires $k_n k_{n-1}$ updates of the variable from the level $3$ and so on. 
Hence, one update of the outer variable requires $k_n \ldots k_1$ updates of the inner variable, or, equivalently, one round consists of $k_n \ldots k_1$ iterations.
In the following, we will refer to $R$ as the number of rounds, and $T = R k_1 k_2 \ldots k_n$ as the total number of iterations.

Since the Lookahead with $n$ layers needs to store $n+1$ states of the parameters, it may increase the space complexity up to $n+1$ times. However, the additional time cost is still insignificant. Indeed, it requires the same number of gradients evaluation as the inner optimizer. Moreover, during each round it requires $1 + k_n + k_n k_{n-1} + \ldots + k_{n} k_{n-1}...k_2 < k_n k_{n-1} ... k_1$ synchronizations (under the natural assumption that $k_i \geq 2$ $\forall i$). Therefore, it requires less than 1 synchronization per iteration in average, independently of the number of layers.

\section{CONVERGENCE ANALYSIS} \label{sec:convergence_to_stationary}


\citet{wang2020lookahead} proved the convergence of Lookahead, which uses SGD as the base optimizer, to a stationary point for smooth, not necessarily convex functions $f$. Their analysis relies on considering Lookahead as a specific instance of local SGD (\citet{stich2018local, koloskova2020unified, woodworth2020local}), where two agents solve the composite minimization problem $min_{x,y} f_1(x) + f_2(y)$, subject to consensus constraint $x=y$ at the end of each round. Specifically, the first agent updates the fast weights by optimizing function $f_1 = f$, and the second one updates the slow weights by optimizing function $f_2 \equiv 0$. After each agent performs $k$ steps, they synchronize their weights by taking their convex combination. 

Following the same idea, we prove similar result for Lookahead with two layers. Let's represent 2-layers Lookahead as an instance of local SGD for a specific composite optimization problem. Consider problem of minimizing $f_1(x) + f_2(y) + f_3(z)$, where $f_1 = f$, $f_2 = f_3 \equiv 0$, subject to consensus constraints $x=y$, if $t \divby k_1$ (by $\divby$ we denote ``divisible by"), and $x=y=z$, if $t \divby k_1 k_2$, where $t$ is a number of local updates, or iterations. Denote by $x_t$, $y_t$, $z_t$ the values of the parameters and by $\gamma_t$ the learning rate at iteration $t$. Define parameters matrix $X_t = (x_t, y_t, z_t) \in \mathbb{R}^{d \times 3}$, matrix of stochastic gradients $G_t = (g(x_t, \xi_t), 0, 0) \in \mathbb{R}^{d \times 3}$, and synchronization matrix:
\begin{gather}
    \nonumber P_t = 
    \begin{cases}
        I_3, \: \text{if} \: t+1 \not \: \divby k_1 \\
        \begin{pmatrix}
            \alpha_1 & \alpha_1 & 0 \\
            1 - \alpha_1 & 1 - \alpha_1 & 0 \\
            0 & 0 & 1
        \end{pmatrix}, \: \text{if} \:
        t+1 \divby k_1, \\ t+1 \not \: \divby k_1 k_2 \\
        \begin{pmatrix}
            \alpha_1 \alpha_2 & \alpha_1 \alpha_2 & \alpha_1 \alpha_2 \\
            (1 - \alpha_1) \alpha_2 & (1 - \alpha_1) \alpha_2 & (1 - \alpha_1) \alpha_2 \\
            1 - \alpha_2 & 1 - \alpha_2 & 1 - \alpha_2
        \end{pmatrix}, \\ \text{if} \:
        t+1 \divby k_1 k_2
    \end{cases}
\end{gather}
and define the update rule as 
\begin{gather}\label{nine}
    X_{t+1} = (X_t - \gamma_t G_t) P_t
\end{gather}
with initial conditions $X_0 = (z_0, z_0, z_0)^T$. 
Then the sequence $(x_t, y_t, z_t, t\geq 0)$, produced by this instance of local SGD, coincides with the sequence $(x_{r,i,j}, y_{r,i}, z_r, t \geq 0)$, produced by 2-layers Lookahead applied to optimizing $f$, where $r,i,j$ are defined from $t$ using the equality $t = r k_1 k_2 + i k_1 + j$, $0 \leq i \leq k_2-1$, $0 \leq j \leq k_1-1$.
Indeed, these sequences start from the same initialization and have the same update rule. 
Thus, instead of directly proving the convergence of 2-layers Lookahead, we can show an equivalent statement for the considered local SGD instance.

\begin{algorithm}
\caption{\small Lookahead with two layers} \label{2-Lookahead}
\begin{algorithmic}[1]
\Require Loss function $f$; inner optimizer $A$; initial point $z_0$; synchronization parameters $\alpha_1$, $\alpha_2$; lengths of the inner loops $k_1$, $k_2$; number of rounds $R$
\For {$r=0,1,\ldots R-1$}
\State $y_{r,0} = z_{r}$
\For {$i = 0, 1, \ldots k_2-1$}
\State $x_{r,i,0} = y_{r,i}$
\For {$j = 0, 1, \ldots k_1-1$}
\State Sample mini-batch $\xi_{r,i,j}$
\State $x_{r,i,j+1} = x_{r,i,j} - A(f_{\xi_{r,i,j}}, x_{r,i,j})$
\EndFor
\State $y_{r, i+1} = (1 - \alpha_1) y_{r,i} + \alpha_1 x_{r,i,k_1}$
\EndFor
\State $z_{r+1} = (1 - \alpha_2) z_{r} + \alpha_2 y_{r, k_2}$
\EndFor
\end{algorithmic}
\end{algorithm}

Denote $a = (\alpha_1 \alpha_2, (1 - \alpha_1) \alpha_2, 1 - \alpha_2)^T \in \mathbb{R}^3$. It can be easily verified that $P_t a = a$ holds for any $t$, that is, $a$ is an eigenvector of $P_t$ with eigenvalue 1 for any $t$. Thus, multiplying (\ref{nine}) by $a$ from the right, we get:

\begin{equation}
    X_{t+1} a = X_t a - \gamma_t G_t a = X_t a - \gamma_t \alpha_1 \alpha_2 g(x_t, \xi_t). \label{conv stat: 1}
\end{equation}

Let $\theta_t = X_t a = \alpha_1 \alpha_2 x_t + (1 - \alpha_1) \alpha_2 y_t + (1 - \alpha_2) z_t$.
Using this notation, we can rewrite equation (\ref{conv stat: 1}) as:

\begin{equation}\label{theta}
    \theta_{t+1} = \theta_t - \gamma_t \alpha_1 \alpha_2 g(x_t, \xi_t)
\end{equation}

Intuitively, $\theta_t$ is a convex combination of $x_t$, $y_t$ and $z_t$ that evolves smoothly, regardless of the synchronizations.
Consequently, $\theta_t$ coincides with $x_t$, $y_t$ and $z_t$ at the end of each round.

As in ~\cite{wang2020lookahead}, we assume that the function $f$ is $L$-smooth, i.e. satisfies:
\begin{gather}\label{smoothness}
    \lVert\nabla f(x) - \nabla f(y)\rVert \leq L \lVert x - y \rVert, \forall x, y \in \mathbb{R}^d
\end{gather}
and that stochastic gradients are independent, unbiased, and have uniformly bounded variance:
\begin{gather}
    \label{sg:independent} \{g(x, \xi), x \in \mathbb{R}^d\} - \text{independent}\\
    \label{sg:unbiased} \mathbb{E}_{\xi} [g(x, \xi) | x] = \nabla f(x) \\
    \label{sg:bounded} \mathbb{E}_{\xi} [\lVert g(x, \xi) - \nabla f(x)\rVert^2 | x] \leq \sigma^2
\end{gather}
We will need one more technical assumption, which arises from the proof:
\begin{gather}
    \nonumber \forall t \geq 0: \quad
    1 - \gamma_{t} \alpha_1 \alpha_2 L - 2 \gamma_{t}^2 L^2 k_1^2 \times \\
    \times \Big((1-\alpha_1)^2 \alpha_2^2 + 2(1-\alpha_2)^2 + 2 \alpha_1^2 (1-\alpha_2)^2 k_2^2\Big) \geq 0, \label{gamma:constraint_2}
\end{gather}
Let us elaborate on the last assumption. It can be considered in the form $g(\gamma_t) \geq 0$, where $g(\gamma_t)$ is the left side of the inequality as a function of learning rate $\gamma_t$. Note that $g$ is continuous and strictly decreasing (for non-negative $\gamma_t$). Moreover, $g(0) = 1$ satisfies the inequality. Thus, the solution set is always non-empty and has the form:
\begin{gather}
    \gamma_t \leq \gamma_*, \: \forall t \geq 0 \label{gamma:constraint_2_alt}
\end{gather}
where $\gamma_* = g^{-1}(0)$ is implicitly defined from (\ref{gamma:constraint_2}). 

\begin{theorem}\label{theorem:1}

Suppose that learning rate is kept constant within each round: $\gamma_{r k_1 k_2 + i k_1 + j} = \gamma_{r k_1 k_2}$, $\forall r \geq 0$, $0 \leq i \leq k_2-1$, $0 \leq j \leq k_1-1$, and satisfies the following conditions:
\begin{gather}
    \lim_{r \rightarrow \infty} \gamma_{r k_1 k_2} = 0, \quad \sum_{r=0}^{\infty} \gamma_{r k_1 k_2} = \infty \label{gamma:constraint_1}
\end{gather}
Then under the assumptions (\ref{smoothness}) - (\ref{gamma:constraint_2}) we have:
\begin{gather*}
    \frac{1}{k_1 k_2 \sum_{r=0}^{R-1} \gamma_{r k_1 k_2}} \sum_{r=0}^{R-1} \\
    \Big(\gamma_{r k_1 k_2} \sum_{i=0}^{k_2-1} \sum_{j=0}^{k_1 - 1} \mathbb{E}[\lVert\nabla f(\theta_{r k_1 k_2+i k_1+j})\rVert^2]\Big) \rightarrow 0, r \rightarrow \infty
\end{gather*}
i.e., the weighted average of the expected squared norms of the gradients approaches to 0.

\end{theorem}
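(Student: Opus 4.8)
The plan is to run the standard nonconvex SGD descent argument on the smooth surrogate sequence $\theta_t$ from (\ref{theta}), and then to pay for the fact that the stochastic gradient is evaluated at the inner weight $x_t$ rather than at the point $\theta_t$ whose gradient we want to control. First I would apply $L$-smoothness (\ref{smoothness}) in its descent-lemma form to consecutive iterates of $\theta$,
\[
f(\theta_{t+1}) \leq f(\theta_t) + \langle \nabla f(\theta_t), \theta_{t+1} - \theta_t \rangle + \tfrac{L}{2}\lVert \theta_{t+1} - \theta_t \rVert^2,
\]
substitute the clean update $\theta_{t+1} - \theta_t = -\gamma_t \alpha_1 \alpha_2\, g(x_t,\xi_t)$ from (\ref{theta}), and take the conditional expectation given the past. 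Unbiasedness (\ref{sg:unbiased}) turns the linear term into $-\gamma_t\alpha_1\alpha_2 \langle \nabla f(\theta_t), \nabla f(x_t)\rangle$, while the bounded-variance assumption (\ref{sg:bounded}) controls the quadratic term by $\tfrac{L}{2}\gamma_t^2\alpha_1^2\alpha_2^2(\lVert\nabla f(x_t)\rVert^2 + \sigma^2)$.

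Next I would extract the desired gradient norm. Writing $\nabla f(x_t) = \nabla f(\theta_t) + (\nabla f(x_t) - \nabla f(\theta_t))$, the inner product splits into $\lVert\nabla f(\theta_t)\rVert^2$ plus a cross term; by Cauchy--Schwarz together with Young's inequality and smoothness (\ref{smoothness}), this cross term is absorbed using $\lVert \nabla f(x_t) - \nabla f(\theta_t)\rVert \leq L\lVert x_t - \theta_t\rVert$. At this point the whole estimate reduces to controlling the \emph{consensus error} $\lVert x_t - \theta_t\rVert$.

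The heart of the argument, and the step I expect to be the main obstacle, is bounding this consensus error with the right constants. Using the identity $1 - \alpha_1\alpha_2 = (1-\alpha_1)\alpha_2 + (1-\alpha_2)$, one rewrites the deviation as
\[
x_t - \theta_t = (1-\alpha_1)\alpha_2\,(x_t - y_t) + (1-\alpha_2)\,(x_t - z_t),
\]
which isolates the two sources of drift. Since $\theta_t$ is the $P_t$-invariant combination $\alpha_1\alpha_2 x_t + (1-\alpha_1)\alpha_2 y_t + (1-\alpha_2)z_t$, all three coordinates coincide at the start of each round and drift apart only through accumulated SGD steps on $x$: the inner gap $x_t - y_t$ builds up over at most $k_1$ steps since the last level-$1$ synchronization, while $x_t - z_t$, split further as $(x_t - y_t) + (y_t - z_t)$ with $y_t - z_t$ picking up a factor $\alpha_1$ per level-$1$ sync over at most $k_2$ such syncs, accumulates over the entire round. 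Carefully tracking these updates and applying (\ref{sg:bounded}) should yield a bound of the form
\[
\mathbb{E}\lVert x_t - \theta_t\rVert^2 \lesssim \gamma_t^2 k_1^2\, C(\alpha_1,\alpha_2,k_2)\,\big(\mathbb{E}\lVert\nabla f\rVert^2 + \sigma^2\big),
\]
where $C(\alpha_1,\alpha_2,k_2) = (1-\alpha_1)^2\alpha_2^2 + 2(1-\alpha_2)^2 + 2\alpha_1^2(1-\alpha_2)^2 k_2^2$ is exactly the bracketed expression in the technical assumption (\ref{gamma:constraint_2}). The bookkeeping over three variables that update at different frequencies, done tightly enough to reproduce these precise $k_1^2$ and $k_1^2 k_2^2$ factors, is where I expect the real difficulty to lie.

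Finally I would sum the per-step inequality over the $k_1 k_2$ iterations inside each round (where the constant-within-round learning rate of the hypothesis lets me factor $\gamma_{rk_1k_2}$ out of the double sum over $i,j$) and then over the $R$ rounds, telescoping the $f(\theta_t)$ terms against $f(\theta_0) - \inf f$. Collecting the coefficient multiplying $\gamma_t\lVert\nabla f(\theta_t)\rVert^2$, the assumption (\ref{gamma:constraint_2}) is precisely the condition guaranteeing this coefficient is nonnegative, so the gradient sum can be kept on the favorable side. After dividing by $k_1 k_2 \sum_{r} \gamma_{rk_1k_2}$, the leftover noise term is of order $\sum_r \gamma_{rk_1k_2}^2 \big/ \sum_r \gamma_{rk_1k_2}$, which vanishes under the Robbins--Monro conditions (\ref{gamma:constraint_1}) (the learning rate tends to $0$ yet sums to infinity), forcing the weighted average of $\mathbb{E}\lVert\nabla f(\theta_t)\rVert^2$ to zero and establishing the claim.
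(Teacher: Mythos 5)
Your proposal follows essentially the same route as the paper's proof: the descent lemma applied to the surrogate sequence $\theta_t$ with its clean update (\ref{theta}), the consensus-error decomposition $x_t - \theta_t = (1-\alpha_1)\alpha_2 (x_t - y_t) + (1-\alpha_2)(x_t - z_t)$ with the further split $x_t - z_t = (x_t - y_t) + (y_t - z_t)$, drift bounds producing exactly the bracket $(1-\alpha_1)^2\alpha_2^2 + 2(1-\alpha_2)^2 + 2\alpha_1^2(1-\alpha_2)^2 k_2^2$ of assumption (\ref{gamma:constraint_2}), round-wise summation and telescoping against $f(\theta_0) - f_{inf}$, and the Robbins--Monro conditions (via the paper's Lemma \ref{lemma:1}) to make the leftover $\sum_r \gamma_{r k_1 k_2}^2 / \sum_r \gamma_{r k_1 k_2}$ and $\sum_r \gamma_{r k_1 k_2}^3 / \sum_r \gamma_{r k_1 k_2}$ terms vanish. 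The only cosmetic difference is that the paper lower-bounds $\langle \nabla f(\theta_t), \nabla f(x_t)\rangle$ by the polarization identity $\langle a,b\rangle = \tfrac{1}{2}\left(\lVert a\rVert^2 + \lVert b\rVert^2 - \lVert a-b\rVert^2\right)$, retaining a favorable $\lVert \nabla f(x_t)\rVert^2$ term whose coefficient (not that of $\lVert\nabla f(\theta_t)\rVert^2$) is what (\ref{gamma:constraint_2}) makes nonnegative, whereas you use a Young-inequality split; this changes only bookkeeping constants, not the argument.
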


Recall that $T = R k_1 k_2$ is the total number of iterations of 2-layers Lookahead.

\begin{theorem}\label{theorem:2}
Suppose that the learning rate is kept constant, i.e. $\gamma_t = \gamma \: \forall t\geq0$. Then under the assumptions (\ref{smoothness}) - (\ref{gamma:constraint_2}), we have:
\begin{gather}
    \nonumber \frac{1}{T} \sum_{t=0}^{T-1} \mathbb{E}[\lVert\nabla f(\theta_{t})\rVert^2] \leq 
    \frac{2(f(\theta_0) - f_{inf})}{\gamma \alpha_1 \alpha_2  T} + \\
    \nonumber + \gamma \alpha_1 \alpha_2 L \sigma^2 + 2 \gamma^2 L^2 \sigma^2 k_1 \times \\
    \times \left( (1-\alpha_1)^2 \alpha_2^2 + 2(1-\alpha_2)^2 + \frac{4}{3} \alpha_1^2 (1-\alpha_2)^2 k_2^2 \right) \label{stat conv theorem:2 formula}
\end{gather}
where $f_{inf}$ denotes the infimum of the objective function. Further, by setting $\gamma = \frac{\gamma_*}{\sqrt{T}}$, where $\gamma_*$ is defined in (\ref{gamma:constraint_2_alt}), we get the following bound:
\begin{gather}
    \nonumber \frac{1}{T} \sum_{t=0}^{T-1} \mathbb{E}[\lVert\nabla f(\theta_{t})\rVert^2] \leq \\
    \nonumber \leq \frac{2(f(\theta_0) - f_{inf}) + \gamma_*^2 \alpha_1^2 \alpha_2^2 L \sigma^2}{\gamma_* \alpha_1 \alpha_2 \sqrt{T}} 
    + \mathcal{O}\left(\frac{1}{T}\right) = \\
    = \mathcal{O}\left(\frac{1}{\sqrt{T}}\right) \label{corrolary:1 formula}
\end{gather}
\end{theorem}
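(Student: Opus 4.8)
The plan is to run the standard non-convex SGD descent argument on the smoothly-evolving sequence $\theta_t$, whose dynamics are governed by the clean recursion (\ref{theta}) with $\gamma_t=\gamma$ constant, while carefully controlling the gap between the point $\theta_t$ at which we measure progress and the point $x_t$ at which the stochastic gradient is actually evaluated. First I would apply $L$-smoothness (\ref{smoothness}) to $\theta_{t+1}=\theta_t-\gamma\alpha_1\alpha_2\,g(x_t,\xi_t)$ to get $f(\theta_{t+1})\le f(\theta_t)-\gamma\alpha_1\alpha_2\langle\nabla f(\theta_t),g(x_t,\xi_t)\rangle+\tfrac{L}{2}\gamma^2\alpha_1^2\alpha_2^2\|g(x_t,\xi_t)\|^2$. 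Taking the conditional expectation and using unbiasedness (\ref{sg:unbiased}) turns the inner product into $\langle\nabla f(\theta_t),\nabla f(x_t)\rangle$, while (\ref{sg:bounded}) bounds the quadratic term by $\|\nabla f(x_t)\|^2+\sigma^2$. I would then split the inner product by the polarization identity $\langle\nabla f(\theta_t),\nabla f(x_t)\rangle=\tfrac12\|\nabla f(\theta_t)\|^2+\tfrac12\|\nabla f(x_t)\|^2-\tfrac12\|\nabla f(\theta_t)-\nabla f(x_t)\|^2$ and bound the last term by $\tfrac{L^2}{2}\|\theta_t-x_t\|^2$ using smoothness again. This isolates the desired $-\tfrac{\gamma\alpha_1\alpha_2}{2}\|\nabla f(\theta_t)\|^2$ at the cost of the consensus error $\|\theta_t-x_t\|^2$ and a residual $\|\nabla f(x_t)\|^2$ with net coefficient $\tfrac{\gamma\alpha_1\alpha_2}{2}(\gamma\alpha_1\alpha_2 L-1)$.

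The crux, and the step I expect to be the main obstacle, is bounding $\mathbb{E}\|\theta_t-x_t\|^2$. Writing $t=rk_1k_2+ik_1+j$ and unrolling the updates within the current round, I would express both $x_t$ and $\theta_t$ as the common round start $z_r$ minus weighted partial sums of the gradients $g_s$ seen so far, obtaining the exact identity $\theta_t-x_t=(1-\alpha_1)\alpha_2(y_t-x_t)+(1-\alpha_2)(z_t-x_t)$ with intra-loop drift $y_t-x_t=\gamma\sum_{s=rk_1k_2+ik_1}^{t-1}g_s$ (at most $k_1$ terms) and $z_t-x_t=\alpha_1\gamma\sum_{s=rk_1k_2}^{rk_1k_2+ik_1-1}g_s+\gamma\sum_{s=rk_1k_2+ik_1}^{t-1}g_s$ combining the cross-loop drift (up to $(k_2-1)k_1$ terms) with the intra-loop drift. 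Applying $\|a+b\|^2\le 2\|a\|^2+2\|b\|^2$, splitting each $g_s$ into its mean and an independent zero-mean noise via (\ref{sg:independent})–(\ref{sg:bounded}), and using $\mathbb{E}\|\sum_{s\in S}g_s\|^2\le |S|\sum_{s\in S}\mathbb{E}\|\nabla f(x_s)\|^2+|S|\sigma^2$ yields a bound of the form $\mathbb{E}\|\theta_t-x_t\|^2\le 2\gamma^2k_1\big[(1-\alpha_1)^2\alpha_2^2+2(1-\alpha_2)^2\big]\sum_{\text{intra}}(\cdots)+(\text{cross-loop term carrying a }k_2\text{ factor})$, where the $k_1$ and $k_2$ powers appear precisely because the intra- and cross-loop index ranges have sizes $O(k_1)$ and $O(k_1k_2)$.

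I would then telescope the descent inequality over $t=0,\dots,T-1$, grouping iterations by round so the inner sums $\sum_s\mathbb{E}\|\nabla f(x_s)\|^2$ produced by the consensus error can be collected. The key accounting is that the total coefficient of each $\mathbb{E}\|\nabla f(x_s)\|^2$ — combining the residual $\tfrac{\gamma\alpha_1\alpha_2}{2}(\gamma\alpha_1\alpha_2 L-1)$ with the consensus contribution carrying the factor $2\gamma^2L^2k_1^2\big[(1-\alpha_1)^2\alpha_2^2+2(1-\alpha_2)^2+2\alpha_1^2(1-\alpha_2)^2k_2^2\big]$ — is made nonpositive exactly by assumption (\ref{gamma:constraint_2}), since factoring out $\tfrac{\gamma\alpha_1\alpha_2}{2}$ reproduces its left-hand side; this is why that hypothesis is imposed, and it lets me discard all the $\mathbb{E}\|\nabla f(x_s)\|^2$ terms. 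What survives is $\tfrac{\gamma\alpha_1\alpha_2}{2}\sum_t\mathbb{E}\|\nabla f(\theta_t)\|^2\le f(\theta_0)-f(\theta_T)+T\cdot(\sigma^2\text{-terms})$, and here the sharper constant $\tfrac{4}{3}$ in (\ref{stat conv theorem:2 formula}) (rather than the $2$ of (\ref{gamma:constraint_2})) emerges from evaluating the cross-loop contribution through $\sum_{i=0}^{k_2-1}i^2\approx k_2^3/3$ instead of the cruder per-iteration bound used to certify the descent step. Dividing by $\tfrac{\gamma\alpha_1\alpha_2}{2}T$ and using $f(\theta_T)\ge f_{inf}$ gives (\ref{stat conv theorem:2 formula}).

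Finally, the rate (\ref{corrolary:1 formula}) follows by substituting $\gamma=\gamma_*/\sqrt{T}$: the choice is admissible since $\gamma_*/\sqrt{T}\le\gamma_*$ keeps us in the feasible region (\ref{gamma:constraint_2_alt}), the first two terms of (\ref{stat conv theorem:2 formula}) combine into $\frac{2(f(\theta_0)-f_{inf})+\gamma_*^2\alpha_1^2\alpha_2^2 L\sigma^2}{\gamma_*\alpha_1\alpha_2\sqrt{T}}$, and the remaining term scales like $\gamma^2=\gamma_*^2/T=\mathcal{O}(1/T)$, which is dominated by the leading $\mathcal{O}(1/\sqrt{T})$ term.
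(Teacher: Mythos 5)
Your proposal is correct and follows essentially the same route as the paper's proof: the paper establishes a Main Lemma by exactly this argument --- the descent step on $\theta_t$, the polarization identity, the drift decomposition $\theta_t-x_t=(1-\alpha_1)\alpha_2(y_t-x_t)+(1-\alpha_2)(z_t-x_t)$, round-wise summation using $\sum_{i=0}^{k_2-1} i<k_2^2/2$ and $\sum_{i=0}^{k_2-1} i^2<k_2^3/3$ (precisely the source of the $2$ versus $\tfrac{4}{3}$ constants you identified), and assumption (\ref{gamma:constraint_2}) to discard the $\mathbb{E}[\lVert\nabla f(x_s)\rVert^2]$ terms --- and then obtains Theorem \ref{theorem:2} by substituting constant $\gamma$ and choosing $\gamma=\gamma_*/\sqrt{T}$. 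One small slip: your stated inequality $\mathbb{E}[\lVert\sum_{s\in S}g_s\rVert^2]\le|S|\sum_{s\in S}\mathbb{E}[\lVert\nabla f(x_s)\rVert^2]+|S|\sigma^2$ needs an overall factor of $2$ (as in the paper's Lemma \ref{lemma}), because the cross term between the noise sum and the gradient sum does not vanish --- later iterates depend on earlier noise --- but your subsequent bookkeeping already carries this factor, so nothing downstream changes.
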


For the proof of all statements in this section, see Appendix~\ref{proofs section 3}.

\begin{claim}\label{conv_analysis: claim 1}
If we optimize the bound on the right side of (\ref{stat conv theorem:2 formula}) over $\gamma$ precisely, respecting the constraint (\ref{gamma:constraint_2}), then for large $T$ the best bound will be achieved for $\alpha_1 = \alpha_2 = 1$, that is, when 2-layers Lookahead degenerates to SGD.
\end{claim}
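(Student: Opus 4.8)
The plan is to treat the right-hand side of (\ref{stat conv theorem:2 formula}) as a function of the learning rate, $B(\gamma)$, with the dependence on the synchronization parameters isolated into two scalar coefficients, and to show that minimizing $B$ over $\gamma$ can never beat the value attained at $\alpha_1=\alpha_2=1$. Writing $A = 2(f(\theta_0) - f_{inf})$, $\beta = \alpha_1\alpha_2$, and
\begin{equation*}
C = (1-\alpha_1)^2\alpha_2^2 + 2(1-\alpha_2)^2 + \tfrac{4}{3}\alpha_1^2(1-\alpha_2)^2 k_2^2,
\end{equation*}
the bound reads
\begin{equation*}
B(\gamma) = \frac{A}{\beta\gamma T} + \beta\gamma L\sigma^2 + 2 L^2\sigma^2 k_1 C\,\gamma^2.
\end{equation*}
First I would record that $C$ is a sum of three non-negative terms, so $C\ge 0$, with $C=0$ if and only if $\alpha_2=1$ (killing the middle term) and then $\alpha_1=1$ (killing the first); that is, $C=0 \iff \alpha_1=\alpha_2=1$, which is exactly the SGD degeneration. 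At $\alpha_1=\alpha_2=1$ we have $\beta=1$ and $C=0$, so the third term disappears entirely.

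The key step is an AM-GM inequality applied to the first two terms of $B(\gamma)$, which are the only terms of order $T^{-1/2}$ at the optimum. For every $\gamma>0$ and every $\beta>0$,
\begin{equation*}
\frac{A}{\beta\gamma T} + \beta\gamma L\sigma^2 \ge 2\sqrt{\frac{A L\sigma^2}{T}},
\end{equation*}
with equality precisely when $\beta\gamma = \sqrt{A/(L\sigma^2 T)}$. Since the third term $2L^2\sigma^2 k_1 C\gamma^2$ is non-negative, this yields the $\alpha$-independent lower bound $B(\gamma) \ge 2\sqrt{A L\sigma^2/T}$, valid for all $\gamma$ and all choices of $\alpha_1,\alpha_2$. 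For SGD the third term vanishes identically, and choosing $\gamma=\sqrt{A/(L\sigma^2 T)}$ makes the first two terms meet the AM-GM bound, so $\min_\gamma B_{\mathrm{SGD}}(\gamma)=2\sqrt{A L\sigma^2/T}$. For any $\alpha\ne(1,1)$ we have $C>0$, hence the third term is strictly positive at every $\gamma>0$; since $B(\gamma)\to+\infty$ as $\gamma\to0^+$ and as $\gamma\to\infty$, its minimizer $\gamma^\star$ is interior and strictly positive, giving $\min_\gamma B(\gamma)=B(\gamma^\star)>2\sqrt{A L\sigma^2/T}$. Thus the optimized bound is strictly larger for every non-trivial choice of synchronization parameters, and uniquely minimized at $\alpha_1=\alpha_2=1$.

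The last thing to settle is feasibility with respect to the learning-rate constraint (\ref{gamma:constraint_2}), and this is where the qualifier ``for large $T$'' enters. At $\alpha_1=\alpha_2=1$ the constraint reduces to $1-\gamma L\ge 0$, i.e. $\gamma\le 1/L$, whereas the minimizer $\gamma=\sqrt{A/(L\sigma^2 T)}\to 0$ as $T\to\infty$; hence for all sufficiently large $T$ this $\gamma$ is admissible and the SGD minimum genuinely equals $2\sqrt{A L\sigma^2/T}$. For $\alpha\ne(1,1)$ the constraint only shrinks the feasible set, so the constrained minimum is at least the unconstrained one, which we already showed strictly exceeds the SGD value; no feasibility check is needed on that side. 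I expect the main obstacle to be purely bookkeeping: verifying that $C=0$ forces $\alpha_1=\alpha_2=1$ over the relevant range $\alpha_1,\alpha_2\in(0,1]$, and quantifying ``large $T$'' precisely enough that the SGD-optimal $\gamma$ clears the threshold $1/L$ --- both routine once the AM-GM lower bound, which does all the real work by decoupling the comparison from the explicit cubic minimization in $\gamma$, is in place.
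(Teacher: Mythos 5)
Your proposal is correct and follows essentially the same route as the paper's proof: the paper likewise drops the third term at $\alpha_1=\alpha_2=1$, applies the AM--GM bound (called ``Cauchy's inequality'' there) to the two surviving terms to obtain the $\alpha$-independent floor $\frac{2\sigma\sqrt{2L(f(\theta_0)-f_{inf})}}{\sqrt{T}}$, notes the third term is strictly positive for any other $(\alpha_1,\alpha_2)$, and imposes $T \geq \frac{2(f(\theta_0)-f_{inf})}{L\sigma^2\gamma_*^2}$ for feasibility of the optimal step size. Your additional care (verifying $C=0$ iff $\alpha_1=\alpha_2=1$, and using attainment of an interior minimizer to convert pointwise strict inequality into strictness of the minimum) only tightens steps the paper leaves implicit.
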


Thus, as in the analysis of Lookahead by ~\cite{wang2020lookahead}, our result, while provides some theoretical guarantees for 2-layers Lookahead, does not capture the improvement of this method over SGD.

\begin{corollary} \label{corollary:2}
If $T \geq \frac{2(f(\theta_0) - f_{inf})}{\alpha_1^2 \alpha_2^2 L \sigma^2 \gamma_*^2}$, then the learning rate $\gamma = \frac{1}{\alpha_1 \alpha_2 \sigma} \sqrt{\frac{2(f(\theta_0) - f_{inf})}{T L}}$ satisfies the constraint (\ref{gamma:constraint_2_alt}) and gives the improved bound:
\begin{gather*}
    \frac{1}{T} \sum_{t=0}^{T-1} \mathbb{E}[\lVert\nabla f(\theta_{t})\rVert^2] \leq \\
    \leq \frac{2\sigma \sqrt{2 L (f(\theta_0) - f_{inf})}}{\sqrt{T}} + \mathcal{O}\left(\frac{1}{T}\right)
     = \mathcal{O}\left(\frac{1}{\sqrt{T}}\right)
\end{gather*}

Furthermore, the obtained constant for the asymptotically most significant term $\frac{1}{\sqrt{T}}$ is the best possible, that can be deduced from \autoref{theorem:2}.
\end{corollary}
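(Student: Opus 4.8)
The plan is to treat the right-hand side of (\ref{stat conv theorem:2 formula}) as a function of the single variable $\gamma$, writing $\Delta := f(\theta_0) - f_{inf}$ and isolating its three contributions: a term $A(\gamma) = \frac{2\Delta}{\gamma \alpha_1 \alpha_2 T}$ that blows up as $\gamma \to 0$, a linear term $B(\gamma) = \gamma \alpha_1 \alpha_2 L \sigma^2$, and a residual $C(\gamma)$ that is $O(\gamma^2)$ with the hyperparameters held fixed. The leading-order $\frac{1}{\sqrt{T}}$ behaviour is governed entirely by $A+B$, while $C$ only ever contributes at order $\frac{1}{T}$ once $\gamma$ is taken of order $\frac{1}{\sqrt{T}}$. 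Both assertions of the corollary --- the explicit constant and its optimality --- then reduce to an elementary AM--GM analysis of $A+B$.

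For the first assertion I would substitute the prescribed $\gamma = \frac{1}{\alpha_1 \alpha_2 \sigma}\sqrt{\frac{2\Delta}{T L}}$ directly. A short computation gives $\gamma^2 = \frac{2\Delta}{(\alpha_1\alpha_2)^2 \sigma^2 T L}$, from which $A(\gamma) = B(\gamma) = \frac{\sigma\sqrt{2 L \Delta}}{\sqrt{T}}$, so that $A+B = \frac{2\sigma\sqrt{2 L\Delta}}{\sqrt{T}}$, which is exactly the stated leading term. Feeding the same $\gamma^2$ into $C$ yields $C(\gamma) = \frac{4\Delta L k_1}{(\alpha_1\alpha_2)^2 T}\left((1-\alpha_1)^2\alpha_2^2 + 2(1-\alpha_2)^2 + \tfrac{4}{3}\alpha_1^2(1-\alpha_2)^2 k_2^2\right) = O\!\left(\frac{1}{T}\right)$, absorbed into the error term. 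Before invoking \autoref{theorem:2} I must verify its hypothesis $\gamma \le \gamma_*$ from (\ref{gamma:constraint_2_alt}); squaring the prescribed $\gamma$ shows that $\gamma \le \gamma_*$ is equivalent to $T \ge \frac{2\Delta}{(\alpha_1\alpha_2)^2 L \sigma^2 \gamma_*^2}$, which is precisely the standing assumption on $T$, so the bound of \autoref{theorem:2} legitimately applies.

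For the optimality assertion I would argue that any admissible learning rate achieving the $\frac{1}{\sqrt{T}}$ rate must scale as $\gamma = \frac{c}{\sqrt{T}}$ for some constant $c>0$: a faster decay makes $A$ dominate at a rate slower than $\frac{1}{\sqrt{T}}$, whereas a slower decay makes $B$ dominate likewise. Substituting $\gamma = c/\sqrt{T}$ collapses $A+B$ to $\frac{1}{\sqrt{T}}\left(\frac{2\Delta}{c\,\alpha_1\alpha_2} + c\,\alpha_1\alpha_2 L\sigma^2\right)$, so the $\frac{1}{\sqrt{T}}$ constant is exactly the bracketed quantity. By AM--GM it is minimized when the two summands coincide, giving minimum value $2\sqrt{2\Delta L \sigma^2} = 2\sigma\sqrt{2 L\Delta}$, attained precisely at $c = \frac{1}{\alpha_1\alpha_2\sigma}\sqrt{2\Delta/L}$, i.e. at the $\gamma$ chosen above. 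Notably the $\alpha_1\alpha_2$ factors cancel in the optimum, so this best constant is independent of the layer hyperparameters --- consistent with Claim~\ref{conv_analysis: claim 1}.

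The main obstacle is not the algebra but making the optimality claim precise: ``best possible constant deducible from \autoref{theorem:2}'' must be formalized as minimizing the $\frac{1}{\sqrt{T}}$ coefficient of the bound over the family $\gamma = c/\sqrt{T}$, and I must justify that restricting to this family loses nothing --- namely that no other $\gamma$-schedule can push the leading coefficient below the AM--GM value while still attaining $\frac{1}{\sqrt{T}}$ order. The one genuine check requiring care is confirming that $C(\gamma)$ never interferes with the leading term: since $C = O(\gamma^2)$ and the optimizer sits at $\gamma = \Theta(1/\sqrt{T})$, $C = O(1/T)$ is strictly lower order, but this should be verified with the explicit hyperparameter-dependent constant so that the $O(1/T)$ notation is honest for fixed $\alpha_i, k_i$.
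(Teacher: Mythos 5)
Your first half is correct and matches the paper's proof: substituting $\gamma = \frac{1}{\alpha_1\alpha_2\sigma}\sqrt{\frac{2\Delta}{TL}}$ (writing $\Delta := f(\theta_0)-f_{inf}$ as you do) makes the two leading terms equal, $A(\gamma)=B(\gamma)=\frac{\sigma\sqrt{2L\Delta}}{\sqrt{T}}$, the residual $C(\gamma)$ becomes $\frac{4\Delta L k_1}{\alpha_1^2\alpha_2^2 T}\left((1-\alpha_1)^2\alpha_2^2+2(1-\alpha_2)^2+\tfrac{4}{3}\alpha_1^2(1-\alpha_2)^2k_2^2\right)=\mathcal{O}(1/T)$, and squaring the prescribed $\gamma$ shows $\gamma\leq\gamma_*$ is equivalent to the standing assumption on $T$. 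Your constraint check is in fact more explicit than the paper's, which leaves it implicit in the statement of the corollary.

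The gap is in the optimality half. You reduce to the one-parameter family $\gamma = c/\sqrt{T}$, minimize over $c$ by AM--GM, and then yourself flag that ``restricting to this family loses nothing'' still requires justification; the proposal never discharges that obligation, and the heuristic you offer (faster decay makes $A$ dominate, slower decay makes $B$ dominate) only addresses pure power-law schedules, not arbitrary choices of $\gamma$ as a function of $T$. The paper's proof shows no reduction is needed at all: for \emph{every} $\gamma>0$ and every $T$, the right-hand side of (\ref{stat conv theorem:2 formula}) is at least $A(\gamma)+B(\gamma)\geq 2\sqrt{A(\gamma)B(\gamma)}$, and the product $A(\gamma)B(\gamma)=\frac{2\Delta}{\gamma\alpha_1\alpha_2 T}\cdot\gamma\alpha_1\alpha_2 L\sigma^2=\frac{2\Delta L\sigma^2}{T}$ is independent of $\gamma$, so the pointwise lower bound $\frac{2\sigma\sqrt{2L\Delta}}{\sqrt{T}}$ holds for every admissible learning rate, with no notion of schedule or asymptotic regime required. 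This is exactly the AM--GM computation you already perform inside your restricted family, applied one step earlier; doing so closes your gap and renders the scaling discussion unnecessary.
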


Let us make a few observations about the result of Corollary \ref{corollary:2}. 
First, note that the optimal constant for the leading term $\frac{2\sigma \sqrt{2 L (f(\theta_0) - f_{inf})}}{\sqrt{T}}$ does not depend on $\alpha_1$ and $\alpha_2$. 
Hence, our result provides almost the same optimal bound for SGD and 2-layers Lookahead, where the difference comes only through the lower-order $\mathcal{O}\left(\frac{1}{T}\right)$ term.

Second, the proposed learning rate $\gamma = \frac{1}{\alpha_1 \alpha_2 \sigma} \sqrt{\frac{2(f(\theta_0) - f_{inf})}{T L}} = \Theta\left(\frac{1}{\alpha_1\alpha_2\sqrt{T}}\right)$ depends reciprocally on synchronization parameters $\alpha_1$ and $\alpha_2$.

Finally, Theorem \ref{theorem:2} guarantees $\frac{1}{\sqrt{T}}$ convergence rate only for a fixed number of iterations $T$ which is known in advance (since the proposed learning rate $\gamma = \Theta\left(\frac{1}{\sqrt{T}}\right)$ depends on $T$). 
To obtain $\frac{1}{\sqrt{T}}$ rate for $T \rightarrow \infty$, we can combine 2-layers Lookahead with restarts (see Theorem \ref{theorem:3} in Appendix).

We highlight one more property of Multilayer Lookahead: if the inner optimizer has a linear convergence rate, then Multilayer Lookahead preserves it (possibly with a worse constant). For details, see Appendix \ref{MLA linear rate}.

\section{GENERALIZATION ANALYSIS}

The study of optimizers is not limited to their convergence properties. In Machine Learning, our final aim is to perform well on the new, unseen data. Optimizer parameters have a crucial effect on generalization. For instance, mini-batch SGD generalizes better than full-gradient descent, with smaller mini-batches contributing to even better generalization~\citep{keskar2016large, hoffer2017train}. The large learning rate also plays a regularization role \citep{lewkowycz2020large, li2019towards}. To theoretically analyze such effects, we need a tool that captures the generalization property. \cite{smith2021origin} have shown that both the effect of small mini-batch and large learning rate can be explained by the implicit regularizer of an optimizer.

Let us consider a first-order method for minimizing function $f(x)$.
When the step size approaches zero, the trajectory of the optimizer converges to the trajectory of its continuous version. The latter can be expressed as the solution of the corresponding ODE.
For example, trajectories of both GD and SGD converge to the continuous solution of the following ODE~\citep{yaida2018fluctuation}: $\dot{x} = - \nabla f(x), \: x(0) = x_0$.

However, for any constant step size, an optimizer makes some consistent skews from its continuous version. To analyze these skews, \cite{smith2021origin} construct a \emph{modified flow}, which depends on the learning rate, such that the iterations of the optimizer, applied to the function $f$, lie on the trajectory of the ODE for the function $\Tilde{f}$. Then, while applying optimizer to the function $f$, the optimizer iterations follow the ODE for the modified flow $\Tilde{f}$. Thus, the difference $\Tilde{f} - f$ is called the \emph{implicit regularizer} of a given optimizer. In the case of convergence, the limiting point of an optimizer can be described as a critical point of the modified flow $\Tilde{f}$, and we can leverage the form of $\Tilde{f}$ to explain the generalization properties of the optimizer.

\subsection{Previous Results}

For the full-gradient descent (GD) with the learning rate $\gamma$, the modified flow has the following form ~\citep{smith2021origin, barrett2020implicit}:
\begin{gather}
    \Tilde{f}_{GD}(y) = f(y) + \frac{\gamma}{4} \lVert \nabla f(y) \rVert^2 + \mathcal{O}(\gamma^2) \label{generalization: 20}
\end{gather}
It indicates that GD implicitly penalizes sharp regions, where the gradient norm is large. 

To derive the modified flow for SGD with mini-batches, \citet{smith2021origin} assumed that the samples are split into a fixed set of mini-batches so that each sample belongs to exactly one mini-batch, and only the order of mini-batches shuffles randomly. Then, the average iterate of SGD at the end of the epoch, averaged across all possible permutations of the mini-batches, follows the ODE for the modified flow:
\begin{gather}
    \Tilde{f}_{SGD}(y) =
    f(y) + \frac{\gamma}{4 m} \sum_{i=0}^{m-1} \lVert \nabla f_i(y) \rVert^2 + \mathcal{O}(\gamma^2) \label{generalization: 23}
\end{gather}
or, equivalently:
\begin{gather}
    \nonumber \Tilde{f}_{SGD}(y) = 
    f(y) + \frac{\gamma}{4} \lVert \nabla f(y) \rVert^2 + \\
    + \frac{\gamma}{4 m} \sum_{i=0}^{m-1} \lVert \nabla f_i(y) - \nabla f(y) \rVert^2 + \mathcal{O}(\gamma^2) \label{generalization: 21}
\end{gather}
From (\ref{generalization: 21}) we see that SGD penalizes not only sharp regions, but also non-uniform regions (with a high variance of stochastic gradients). ~\cite{smith2021origin} exploits this fact to explain the improved generalization of SGD over GD. In section Section \ref{sec: GD vs SGD vs LA}, we will provide an alternative explanation of this phenomenon.

The analysis made by ~\cite{smith2021origin} still works in more a general setting, where instead of fixing the set of mini-batches, one randomly shuffles the samples across the mini-batches at the beginning of every epoch so that each sample belongs to exactly one mini-batch. In this case, by replacing the sum over all mini-batches by the expectation with a corresponding scaling factor everywhere in the proof, we get the alternative form of (\ref{generalization: 23}):
\begin{gather}
    \Tilde{f}_{SGD}(y) = f(y) + \frac{\gamma}{4} \mathbb{E}[\lVert \nabla f_0(y) \rVert^2] + \mathcal{O}(\gamma^2) \label{generalization: 22}
\end{gather}
To construct the modified flow for GD and SGD, ~\cite{smith2021origin} use a \emph{backward error analysis}. 
Here, we briefly summarize their approach, and we will apply it to derive the modified flow for Lookahead in the next section. 

Consider the ODE for the modified flow $\dot{y} = - \nabla \Tilde{f}(y), y(0) = y_0$ in general form:
\begin{gather}
    \dot{y} = \Tilde{h}(y), y(0) = y_0 \label{generalization: 14}
\end{gather}
Denote the iterations of an optimizer with a learning rate $\gamma$, which starts from the same initial point $y_0$, by $y_1, y_2, \ldots$.
We aim to find such $\Tilde{h}$ that the average final iterate of the epoch $\mathbb{E}[y_m]$ lies on the trajectory of $y$.
Let us searched for $\Tilde{h}$ in the following form:
\begin{gather}
    \Tilde{h}(y) = h(y) + \epsilon h_1(y) + \epsilon^2 h_2(y) + \ldots \label{generalization: 15}
\end{gather} 
\citet{smith2021origin} established that for such $\Tilde{h}$  the solution of (\ref{generalization: 14}) satisfies:
\begin{gather}
    \nonumber y(\epsilon) = y_0 + \epsilon h(y_0) + \\
    + \epsilon^2 \left( h_1(y_0) + \frac{1}{2} \nabla h(y_0) h(y_0) \right) + \mathcal{O}(\epsilon^3) \label{generalization: 13}
\end{gather}
It remains to find $h$ and $h_1$ from the condition $y(\epsilon) = \mathbb{E}[y_m]$ with $\epsilon = \gamma m$. 
After this, we can reconstruct the modified flow $\Tilde{f}$ from the condition $\Tilde{h} = - \nabla \Tilde{f}$. 
For more details, see \citet{smith2021origin}.

\subsection{Implicit Regularizer of Lookahead} \label{sec: implicit_reg_LA}

In this section, we apply the backward error analysis to derive the modified flow of the Lookahead (LA).

Before moving to our main result, let us introduce the auxiliary functions:
\begin{gather}
    AN(y) = \mathbb{E}[\lVert \nabla f_0(y) \rVert^2] = \mathbb{E}[\lVert \nabla f_i(y) \rVert^2] \: \forall i \\
    \nonumber AI(y) = \mathbb{E}[\langle \nabla f_0(y), \nabla f_1(y) \rangle] = \\
    = \mathbb{E}[\langle \nabla f_i(y), \nabla f_j(y) \rangle] \: \forall i \neq j \\
    ANG(y) = \nabla AN(y) \\
    AIG(y) = \nabla AI(y)
\end{gather}
Here, $AN$ and $AI$ stand for Average Norm squared and Averaged Inner product, while $ANG$ and $AIG$ stand for Average Norm squared Gradient and Averaged Inner product Gradient correspondingly. 
The expectation is taken across all possible splittings on the mini-batches.
With these notations, we can rewrite the modified flow for SGD (\ref{generalization: 22}) as follows:
\begin{gather}
    \Tilde{f}_{SGD}(y) = f(y) + \frac{\gamma}{4} AN(y) + \mathcal{O}(\gamma^2) \label{generalization: 18}
\end{gather}
Let us also rewrite (\ref{generalization: 20}) in terms of $AN(y)$ and $AI(y)$.
\begin{gather*}
    \nonumber \lVert \nabla f(y) \rVert^2 = 
    \lVert \frac{1}{m} \sum_{i=0}^{m-1} \nabla f_i(y) \rVert^2 = \\
    = \frac{1}{m^2} \left( \sum_{i=0}^{m-1} \lVert \nabla f_i(y) \rVert^2 + \sum_{i' \neq i} \langle \nabla f_{i'}(y), \nabla f_i(y) \rangle \right)
\end{gather*}
Taking the expectation of both sides, we get:
\begin{gather}
    \nonumber \lVert \nabla f(y) \rVert^2 =
    \frac{1}{m^2} \left( m AN(y) + m(m-1) AI(y) \right) = \\
    = \frac{1}{m} \left( AN(y) + (m-1) AI(y) \right)
\end{gather}
Thus, the formula for $\Tilde{f}_{GD}(y)$ becomes
\begin{gather}
    \nonumber \Tilde{f}_{GD}(y) = 
    f(y) + \frac{\gamma}{4} \frac{1}{m} \left( AN(y) + (m-1) AI(y) \right) =\\
    = f(y) + \frac{\gamma}{4 m} AN(y) + \frac{\gamma (m-1)}{4 m} AI(y)
\end{gather}
Now we are ready to present the modified flow of the Multilayer Lookahead optimizer.

\begin{theorem} \label{theorem:impl reg LA}

Consider $n$-layers Lookahead with parameters $\{k_1, \ldots, k_n\}$, $\{\alpha_1, \ldots, \alpha_n\}$ and SGD with learning rate $\gamma$ as the inner optimizer. 
Suppose that the number of mini-batches per epoch $m$ is divisible by the number of iterations per one round $k_1 \ldots k_n$. 
Besides, assume that at each epoch the samples are randomly shuffled across the mini-batches such that each sample belongs to exactly one mini-batch. 
Then the final iteration of the epoch for the $n$-layers Lookahead, averaged across the randomness in splitting on the mini-batches, lies on the trajectory of the ODE for the following modified flow:
\begin{gather}
    \nonumber \Tilde{f}_{LA-n}(y) = f(y)
    + \frac{\gamma \alpha_1 \ldots \alpha_n}{4} AN(y) - \\
    \nonumber - \frac{\gamma}{4} \sum_{p=1}^{n} (1-\alpha_p) \alpha_{p-1} \ldots \alpha_1 (k_p \ldots k_1 - 1) AI(y) + \\
    + \mathcal{O}(\gamma^2)
\end{gather}
Specifically, the solution of the ODE $\dot{y} = - \nabla \Tilde{f}_{LA-n}(y)$, $y(0) = y_0$ satisfies $y(\alpha_1 \ldots \alpha_n \gamma m) = \mathbb{E}[y_r]$, where $r = m / (k_1 \ldots k_n)$, so that $y_r$ - output of the Lookahead after one epoch (or $r$ rounds), starting from the point $y_0$.

\end{theorem}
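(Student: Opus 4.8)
The plan is to adapt the backward-error-analysis scheme recalled in (\ref{generalization: 13}) to the nested dynamics, reducing everything to an expansion of the averaged epoch output in $\gamma$. First I would unroll all synchronizations \emph{exactly}: since each synchronization is a convex combination, the telescoping through the $n$ levels contributes the same factor $\beta := \alpha_1\cdots\alpha_n$ to every inner gradient, so the outer weight at the end of an epoch satisfies, term by term,
$$y_r = y_0 - \gamma\,\alpha_1\cdots\alpha_n \sum_{t=0}^{m-1} g(x_t,\xi_t).$$
Equivalently, $y_r = \theta_m$ for the smoothly evolving combination $\theta_t$ of Section~\ref{sec:convergence_to_stationary} (whose $n$-layer analogue obeys $\theta_{t+1}=\theta_t-\gamma\beta\,g(x_t,\xi_t)$ and agrees with all weights at round boundaries), the epoch end being a round boundary by the divisibility hypothesis $k_1\cdots k_n \mid m$. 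It then remains to expand $\mathbb{E}[\theta_m]$ to second order in $\gamma$ and match it against (\ref{generalization: 13}) with effective step $\epsilon = \beta\gamma m$.

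Next I would expand the inner drift to first order. Unrolling the same telescoping while keeping the running fast weight gives
$$x_t - \theta_0 = -\gamma \sum_{s<t} w_{t,s}\,\nabla f_{\xi_s}(\theta_0) + \mathcal{O}(\gamma^2), \qquad w_{t,s} = \alpha_1\cdots\alpha_{p-1},$$
where $p=p(s,t)$ is the smallest level whose block of length $k_1\cdots k_p$ contains both $s$ and $t$ (and $w_{t,s}=\beta$ when $s,t$ lie in different rounds); intuitively a past gradient is damped by one factor $\alpha_q$ for each synchronization boundary at level $q$ separating $s$ from $t$. Substituting into $g(x_t,\xi_t)=\nabla f_{\xi_t}(\theta_0)+\nabla^2 f_{\xi_t}(\theta_0)(x_t-\theta_0)+\mathcal{O}(\gamma^2)$ and averaging over the uniform shuffle, the order-$\gamma$ term yields $m\,\nabla f(\theta_0)$, while in the order-$\gamma^2$ term every pair $s\ne t$ uses two distinct mini-batches, so by symmetrization $\mathbb{E}[\nabla^2 f_{\xi_t}\nabla f_{\xi_s}]=\tfrac12\nabla AI=\tfrac12 AIG$. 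This produces
$$\mathbb{E}[\theta_m] = \theta_0 - \gamma\beta m\,\nabla f(\theta_0) + \tfrac12\gamma^2\beta\,W\,AIG(\theta_0) + \mathcal{O}(\gamma^3), \qquad W := \sum_{t}\sum_{s<t} w_{t,s}.$$

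The combinatorial core is evaluating $W$. Grouping pairs by the level $p$ at which they first share a block and counting, for each $t$, the earlier iterations in earlier level-$p$ sub-blocks, I expect
$$W = \frac{m}{2}\Big[\sum_{p=0}^{n-1}(\alpha_1\cdots\alpha_p)\big(k_1\cdots k_{p+1} - k_1\cdots k_p\big) + \beta\,(m - k_1\cdots k_n)\Big].$$
Feeding $\mathbb{E}[\theta_m]$ into (\ref{generalization: 13}), the $\epsilon$-term gives $h=-\nabla f$, and the $\epsilon^2$-term, after using $\nabla h\cdot h = \nabla^2 f\,\nabla f = \tfrac{1}{2m}(ANG+(m-1)AIG)$ (from the identity $\lVert\nabla f\rVert^2=\tfrac1m(AN+(m-1)AI)$ derived before the theorem), determines $h_1$. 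Integrating $\Tilde h = h + \epsilon h_1$ with $AIG=\nabla AI$ and $ANG=\nabla AN$ reconstructs $\Tilde f$: the $AN$-coefficient is $\tfrac{\gamma\beta}{4}$, and the $AI$-coefficient is $-\tfrac{\gamma W}{2m}+\tfrac{\gamma\beta(m-1)}{4}$. The proof closes via the telescoping identity $\sum_{p=0}^{n-1}(\alpha_1\cdots\alpha_p)(k_1\cdots k_{p+1}-k_1\cdots k_p)=C_n+\beta(k_1\cdots k_n-1)$, where $C_n=\sum_{p=1}^n(1-\alpha_p)\alpha_1\cdots\alpha_{p-1}(k_1\cdots k_p-1)$; using $(1-\alpha_p)\alpha_1\cdots\alpha_{p-1}=\alpha_1\cdots\alpha_{p-1}-\alpha_1\cdots\alpha_p$ and telescoping, all explicit $m$'s cancel and leave exactly $-\tfrac{\gamma}{4}C_n\,AI$.

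The hard part will be the second step together with the count of $W$: correctly reading off the level-dependent damping $w_{t,s}=\alpha_1\cdots\alpha_{p-1}$ from the nested synchronizations, and then performing the block-counting so that the $m$-dependence in $W$ matches precisely the $m$-dependence coming from the $\tfrac12\nabla h\cdot h$ self-interaction term, so that the regularizer is exactly $m$-free as stated. Everything else is either the exact telescoping of convex combinations or the routine shuffle-averaging already used by \citet{smith2021origin}.
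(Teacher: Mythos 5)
Your proposal is correct, and it reaches the theorem by a genuinely different route than the paper. The paper proceeds by induction on the number of layers: two lemmas establish, respectively, the first-order expansion (\ref{generalization: 24}) and the full second-order expansion (\ref{generalization: 31}) of $\mathbb{E}[y_r]$, the latter containing one term per layer of the form $\mathbb{E}\bigl[\sum_i \nabla \|\nabla f_i^{(k_p\cdots k_1)}(y_0)\|^2\bigr]$; these are then converted to $ANG$/$AIG$ and the coefficients collected (with a separate telescoping of $\sum_p(1-\alpha_p)\alpha_{p-1}\cdots\alpha_1$) before matching against (\ref{generalization: 13}). You instead work globally and non-inductively: the exact conserved combination $\theta_t$ (the $n$-layer extension of the eigenvector construction in Section \ref{sec:convergence_to_stationary}, which you assert rather than verify, but which follows by checking that each level's synchronization preserves $\theta = \beta x + \sum_q (1-\alpha_q)\alpha_{q+1}\cdots\alpha_n y^{(q)}$), a single Taylor expansion of the drift $x_t - \theta_0$ with pairwise damping weights $w_{t,s}=\alpha_1\cdots\alpha_{p-1}$, and one combinatorial count $W$ of pairs grouped by the deepest block separating them. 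I checked your three asserted computations and they are all correct: the count $W = \frac{m}{2}\bigl[\sum_{p=0}^{n-1}(\alpha_1\cdots\alpha_p)(k_1\cdots k_{p+1}-k_1\cdots k_p)+\beta(m-k_1\cdots k_n)\bigr]$ follows from the fact that the number of pairs sharing a level-$(p{+}1)$ block but not a level-$p$ block is $\frac{m}{2}(K_{p+1}-K_p)$ with $K_p = k_1\cdots k_p$; the final telescoping identity holds; and your expansion $\mathbb{E}[\theta_m]=\theta_0-\gamma\beta m\nabla f+\frac12\gamma^2\beta W\,AIG+\mathcal{O}(\gamma^3)$ agrees exactly with the paper's (\ref{generalization: 37}) once one substitutes $\nabla\|\nabla f\|^2=\frac1m(ANG+(m-1)AIG)$ there. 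Your route buys transparency: the regularizer coefficient is exhibited directly as a weighted pair count, the inductive bookkeeping disappears, and the intermediate $ANG$ terms of the paper never arise (they cancel automatically in your scheme because the expansion only generates off-diagonal pairs $s<t$). The paper's induction, in exchange, produces the per-layer expansion (\ref{generalization: 31}) as an intermediate result of independent interest and stays structurally closer to the argument of \citet{smith2021origin}, which makes the handling of the replacement $x_{i,0}=y_0+\mathcal{O}(\gamma)$ at each level explicit. To turn your outline into a complete proof you would only need to (i) verify the $n$-layer $\theta$-invariance claim and (ii) justify the weight formula $w_{t,s}$ by the one-line unrolling argument you sketch; both are routine.
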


For the proof, see Appendix \ref{proof theorem 4}. 

Let us explicitly state the formulas for LA and LA-2:
\begin{gather}
    \Tilde{f}_{LA}(y) = f(y) + \frac{\alpha \gamma}{4} AN(y) - \frac{\gamma}{4} (1-\alpha)(k-1) AI(y) \\
    \nonumber \Tilde{f}_{LA-2}(y) = f(y) + \frac{\alpha_1 \alpha_2 \gamma}{4} AN(y) - \\
    - \frac{\gamma}{4} ((1-\alpha_1)(k_1-1) + (1-\alpha_2)\alpha_1 (k_2 k_1 - 1)) AI(y)
\end{gather} 
\subsection{Comparing the Implicit Regularizers for GD, SGD and LA} \label{sec: GD vs SGD vs LA}

Let us summarize the modified flows for the considered optimizers, assuming that each optimizer uses its own learning rate $\gamma_{GD}$, $\gamma_{SGD}$, and $\gamma_{LA}$ (we omit $\mathcal{O}(\gamma^2)$ term everywhere for simplicity):
\begin{gather*}
    \Tilde{f}_{GD}(y) = f(y) + \frac{\gamma_{GD}}{4 m} AN(y) + \frac{\gamma_{GD} (m-1)}{4 m} AI(y) \\
    \Tilde{f}_{SGD}(y) = f(y) + \frac{\gamma_{SGD}}{4} AN(y) \\
    \Tilde{f}_{LA}(y) = f(y) + \frac{\alpha \gamma_{LA}}{4} AN(y) - \\
    - \frac{(1-\alpha) \gamma_{LA} (k-1)}{4} AI(y)
\end{gather*}
To ensure that all optimizers make the same progress during one epoch,
the sum of step sizes during one epoch has to be the same: 
$\gamma_{GD} = m \gamma_{SGD} = m \alpha \gamma_{LA}$ 
(in general, each layer of Lookahead shrinks the progress of the outer variable by a factor of $\alpha_i$). 
Then, the coefficient before $AN(y)$ is equal for all three optimizers, 
and the difference comes only through the coefficient before $AI(y)$: it decreases in order GD, SGD, LA. 
Notably, we empirically observe that generalization improves in the same order.
Thus, we claim that the generalization improvement of SGD over GD and of LA over SGD caused by the amplifying role of the implicit regularizer $- AI(y) = - \mathbb{E}[\langle \nabla f_0(y), \nabla f_1(y) \rangle]$. 
Intuitively, by minimizing $-AI(y)$, we maximize the average inner product between the gradients on different mini-batches, which in turn minimizes the angle between these gradients. 
Thus, the regularizer $-AI(y)$ makes stochastic gradients more aligned.

From the formula for $\Tilde{f}_{LA}(y)$, we see that by increasing $k$, we also enhance the constant before -$AI(y)$, thus, increase the regularization role of Lookahead. For $\alpha$, we cannot conclude: by increasing $\alpha$, we amplify $AN(y)$ but suppress $-AI(y)$ regularizer.

We now compare implicit regularizers of LA and LA-2:
\begin{gather*}
    \Tilde{f}_{LA}(y) = f(y) + \frac{\alpha \gamma_{LA}}{4} AN(y) - \\
    - \frac{(1-\alpha) \gamma_{LA} (k-1)}{4} AI(y) \\
    \Tilde{f}_{LA-2}(y) = f(y) + \frac{\alpha_1 \alpha_2 \gamma_{LA-2}}{4} AN(y) - \\
    \frac{\gamma_{LA-2}}{4} ((1-\alpha_1)(k_1-1) + (1-\alpha_2)\alpha_1 (k_2 k_1 - 1)) AI(y)
\end{gather*}
Let us assume that their coefficients are related by $\alpha_1 = \alpha$, $k = k_1$, and the optimizers make equal progress during one epoch.
Then $\alpha \gamma_{LA} = \alpha_1 \alpha_2 \gamma_{LA-2}$, $0 < \alpha_2 < 1$ $\Rightarrow$ $\gamma_{LA-2} > \gamma_{LA}$. 
Then the constants before $AN(y)$ again coincide.
For $AI(y)$, even the first term $\frac{\gamma_{LA-2}}{4} (1-\alpha_1)(k_1-1)$ for LA-2 exceeds the whole constant for LA (since $\gamma_{LA-2} > \gamma_{LA}$).
Thus, by adding the second layer of Lookahead, we reinforce -$AI(y)$ regularizer further.
The same intuition works for any number of layers.

Note that scaling the learning rate for Lookahead by a factor of $\frac{1}{\alpha}$ and for 2-layers Lookahead by a factor of $\frac{1}{\alpha_1 \alpha_2}$ also arises in Corollary \ref{corollary:2}, which provides an asymptotically optimal learning rate for a convergence bound given by Theorem \ref{theorem:2}. Besides, we will see in the experimental section that the optimal learning rate is often greater for Multilayer Lookahead than for SGD. Thus, the assumption that optimizers make the same progress is theoretically justified and not detached from the practice.

\section{EXPERIMENTS}\label{sec: experiments}

\begin{figure}[h]
     \centering
     \hfill
     \begin{subfigure}[b]{0.45\textwidth}
         \includegraphics[width=\textwidth]{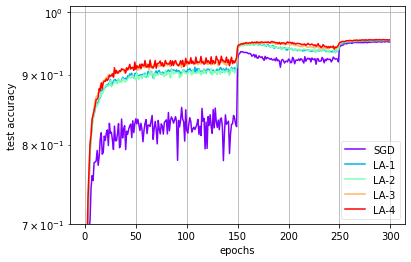}
         \caption{\small CIFAR-10} \label{cifar10 plot}
     \end{subfigure}
     \hfill
     \begin{subfigure}[b]{0.45\textwidth}
                  \includegraphics[width=\textwidth]{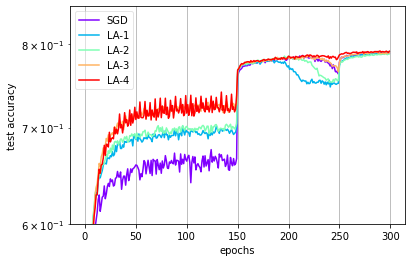}
         \caption{\small CIFAR-100} \label{cifar100 plot}
     \end{subfigure}

    \caption{\small \small Test Accuracy During the Training on (\ref{cifar10 plot}) CIFAR-10 and (\ref{cifar100 plot}) CIFAR-100. LA-$n$ Stands for Multilayer Lookahead with $n$ Layers}
    \label{fig:CIFAR-10 training plots}
\end{figure}


CIFAR-10 and CIFAR-100 datasets consist of $32 \times 32$ color images of 10 and 100 different classes correspondingly, split into 50000 training samples and 10000 test samples. 
We trained ResNet-18 \citep{he2016deep} model for 300 epochs with batch size = 50 and using learning rate decay by a factor of 10 after 150th and 250th epochs.
We selected this batch size to ensure that each epoch ends after the synchronization step in Lookahead, which makes the training plots less noisy (as shown by \citet{zhang2019lookahead}, Lookahead achieves the highest train and test accuracy right after the synchronization step).
As the inner optimizer, we used SGD with momentum = 0.9. 
We compared SGD\footnote{From now, we refer to SGD with momentum as SGD.} with Lookahead and Multilayer Lookahead with up to four layers.
Note that when using grid search, the number of possible configurations of $\alpha = (\alpha_1, \ldots, \alpha_n)$ and $k = (k_1, \ldots, k_n)$ increases exponentially with the number of layers.
Thus, we tried a moderate range of values for $\alpha$ and $k$ and do not claim that the best parameters that we found are close to optimal.
We empirically noticed that for any number of Lookahead layers the best parameter $\alpha = (\alpha_1, \ldots, \alpha_n)$ is always achieved for a non-increasing sequence of $\alpha_i$ (that is, when Multilayer Lookahead gives more preference to the weight from inner loops during the synchronization). 
This observation enables to significantly reduce the tuning time for $\alpha$. 

For CIFAR-10, we performed a grid search for SGD over learning rate $lr \in \{0.05, 0.1, 0.2, 0.3, 0.5\}$ and weight decay parameter $\lambda \in \{10^{-3}, 3\cdot 10^{-4}, 10^{-4}\}$, and got $lr = 0.1$, $\lambda=3\cdot 10^{-4}$ performing the best. 
For CIFAR-100, we tuned SGD over $lr \in \{0.003, 0.005, 0.01, 0.03, 0.05, 0.1, 0.2, 0.3\}$ and $\lambda \in \{3\cdot 10^{-3}, 10^{-3}, 3\cdot 10^{-4}, 10^{-4}\}$, and found $lr = 0.01$, $\lambda=10^{-3}$ to be the best. 
The set of parameters that we tried for Multilayer Lookahead and additional training plots are presented in Appendix \ref{cifar experiments details}.

You can see the performance of each optimizer using the best parameters found in Table \ref{tab: CIFAR-10 acc_test} (CIFAR-10) and Table \ref{tab: CIFAR-100 acc_test} (CIFAR-100), and the corresponding training plots in Figure \ref{fig:CIFAR-10 training plots} (the results were averaged across 3 runs). 
They show that even for a moderate grid search, Multilayer Lookahead often outperforms both SGD and Lookahead. 
Moreover, as we can see from the plots, in both cases, Multilayer Lookahead with the most layers achieves better anytime performance. 
We note that the performance of Multilayer Lookahead was robust towards the choice of parameters ($\alpha$, $k$) in the considered range.

\begin{table}[t]
\caption{\small Results on CIFAR-10} \label{tab: CIFAR-10 acc_test}
\begin{center}
\begin{tabular}{c | c | c | c }
Opt & $lr$ & $\alpha$ & test accuracy \\
\hline \\
SGD & 0.1 & - & $0.9527 \pm 0.0019$ \\
LA-1 & 0.1 & 0.5 ($k$=10) & $0.9541 \pm 0.0009$ \\
LA-2 & 0.1 & (0.7, 0.7) & \textbf{0.9555} $\pm$ 0.0016 \\
LA-3 & 0.1 & (0.6, 0.8, 0.8)  & $0.9553 \pm 0.0021$ \\
LA-4 & 0.1 & (0.6, 0.75, 0.85, 0.85) & $0.9554 \pm 0.0011$  \\
\end{tabular}
\end{center}
\end{table}

\begin{table}[h]
\caption{\small Results on CIFAR-100} \label{tab: CIFAR-100 acc_test}
\begin{center}
\begin{tabular}{c | c | c | c }
Opt & $lr$ & $\alpha$& test accuracy \\
\hline \\
SGD & 0.01 & - & $0.7904 \pm 0.0032$ \\
LA-1 & 0.05 & 0.3 ($k$=5)& $0.7910 \pm 0.0010$ \\
LA-2 & 0.03 & (0.5, 0.8) & $0.7885 \pm 0.0022$ \\
LA-3 & 0.03 & (0.6, 0.75, 0.75) & $0.7910 \pm 0.0020$ \\
LA-4 & 0.03 & (0.6, 0.75, 0.8, 0.8) & \textbf{0.7923} $\pm$ 0.0020 \\
\end{tabular}
\end{center}
\end{table}

For CIFAR-100, we found the optimal learning rate for Multilayer Lookahead to be larger than for SGD (Table \ref{tab: CIFAR-100 acc_test}).
In this case, we clearly see the regularization effect of Multilayer Lookahead: comparing to SGD, it performs worse on training data, but better on test data (see Appendix \ref{cifar experiments details} for the corresponding plots).
In contrast, for CIFAR-10, the optimal learning rate for SGD and Multilayer Lookahead turned out to be the same. In this case, Multilayer Lookahead improves both train and test accuracy.
These observations are consistent with our findings in Section \ref{sec: GD vs SGD vs LA}, where we established that Multilayer Lookahead with a larger learning rate amplifies the regularization effect of SGD. 
However, when the learning rates are equal, we cannot conclude. Even though Multilayer Lookahead still yields an additional -$AI(y)$ regularizer, it suppresses $AN(y)$ regularizer.

\paragraph{Super-convergence.}
In all of our experiments, we found that the network can be trained with Multilayer Lookahead to almost top accuracy orders of magnitude faster than with similar techniques. After just 30 epochs, we can reach 90\% test accuracy for CIFAR-10 and 70\% for CIFAR-100. This accuracies can be achieved before the first learning rate decay. Thus, in settings were learning rate decay could not be used, Multilayer Lookahead outperforms both SGD and Lookahead by a large margin.

For additional experiments training GANs on the MNIST dataset see Appendix~\ref{GANs on MNIST}.

\section{CONCLUSION}

In this work, we present Multilayer Lookahead - a method obtained by recursively wrapping Lookahead around any other optimizer. 
It increases space complexity linearly in the number of layers. 
However, the time overhead is marginal: on average, it requires taking less than one additional convex combination per iteration.
We prove the convergence of Lookahead with two layers to a stationary point for smooth non-convex functions with $\mathcal{O}\left( \frac{1}{\sqrt{T}} \right)$ rate. 
Besides, we show that Multilayer Lookahead amplifies the implicit regularizer yielded by SGD.
Next, we demonstrate that our optimizer outperforms both SGD and Lookahead on the classification task on CIFAR-10 and CIFAR-100 datasets.

Our experiments show a clear advantage over the use of SGD or Lookahead. While there is an extra cost in tuning the synchronization parameters, we have found the algorithm to be quite robust to the choice of these parameters. Indeed, having all synchronization parameters being equal leads to almost optimal results in all of our experiments. 

We also want to highlight that as it can be seen in our plots for test accuracy, the use of Multilayer Lookahead leads to a similar phenomenon as the so called ``super-convergence''~\citet{smith2019super}. That is, only a few epochs are needed to achieve an almost top accuracy. In particular, before the first learning rate decay, Multilayer Lookahead completely outperforms both SGD and Lookahead. This can be quite relevant when the amount of training data is not known in advance and a learning rate decay cannot be scheduled, e.g., in the streaming model. 

\newpage
\small
\bibliographystyle{iclr2020_conference}
\bibliography{main}

\newpage
\appendix
\onecolumn
\section{MISSING PROOFS}
\subsection{Proofs for Section \ref{sec:convergence_to_stationary}} \label{proofs section 3}

Here, we prove the theorems about the convergence of 2-layers Lookahead to a stationary point, presented in \autoref{sec:convergence_to_stationary}.

This appendix is organized as follows: we start by two auxiliary technical lemmas, then prove Lemma \ref{main lemma} (Main lemma), which contains the main part of the proof for both \autoref{theorem:1} and \autoref{theorem:2}. Finally, we give a proof for the three announced theorems.

\subsubsection{Auxiliary Lemmas}

\begin{lemma}\label{lemma:1}
    Assumption (\ref{gamma:constraint_1}) implies the following two statements:
    \begin{gather}
        \nonumber 1) \frac{\sum_{r=0}^{R-1}\gamma_{r k_1 k_2}^2}{\sum_{r=0}^{R-1}\gamma_{r k_1 k_2}} \rightarrow 0, R \rightarrow \infty\\
        \nonumber 2) \frac{\sum_{r=0}^{R-1}\gamma_{r k_1 k_2}^3}{\sum_{r=0}^{R-1}\gamma_{r k_1 k_2}} \rightarrow 0, R \rightarrow \infty
    \end{gather}
\end{lemma}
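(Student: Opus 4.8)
The plan is to use a standard \emph{tail-splitting} argument: the hypothesis $\gamma_{r k_1 k_2} \to 0$ forces the large-index terms to be small, while the divergence $\sum_{r} \gamma_{r k_1 k_2} = \infty$ makes any fixed finite head of the numerator negligible against the denominator. For brevity I would write $a_r = \gamma_{r k_1 k_2}$, so the hypotheses of (\ref{gamma:constraint_1}) read $a_r \to 0$ and $\sum_{r=0}^{\infty} a_r = \infty$, and I must show that both of the stated ratios, with $\sum_{r=0}^{R-1} a_r^2$ and $\sum_{r=0}^{R-1} a_r^3$ in the numerator over $\sum_{r=0}^{R-1} a_r$, tend to $0$ as $R \to \infty$.

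For statement 1), I would fix an arbitrary $\epsilon > 0$ and, using $a_r \to 0$, pick $N$ such that $a_r < \epsilon$ for all $r \ge N$. Splitting the numerator at $N$ and using $a_r^2 = a_r \cdot a_r < \epsilon a_r$ on the tail gives
\[
    \sum_{r=0}^{R-1} a_r^2 = \sum_{r=0}^{N-1} a_r^2 + \sum_{r=N}^{R-1} a_r^2 \le \sum_{r=0}^{N-1} a_r^2 + \epsilon \sum_{r=0}^{R-1} a_r.
\]
Dividing by $\sum_{r=0}^{R-1} a_r$, the first term becomes a fixed constant over a denominator that diverges to $+\infty$, hence tends to $0$ as $R \to \infty$; the second contributes at most $\epsilon$. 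Thus the ratio is eventually at most $2\epsilon$, and since $\epsilon$ was arbitrary the limit is $0$.

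For statement 2), the cleanest route is to note that a convergent sequence is bounded, so $a_r \le M$ for some $M > 0$ and all $r$; then $a_r^3 \le M a_r^2$, whence $\sum_{r=0}^{R-1} a_r^3 \le M \sum_{r=0}^{R-1} a_r^2$, and dividing by $\sum_{r=0}^{R-1} a_r$ reduces the claim to statement 1). Alternatively one repeats the tail-splitting directly, using the bound $a_r^3 < \epsilon^2 a_r$ on the tail.

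The argument carries no real obstacle; the only point requiring care is the order of quantifiers --- $\epsilon$ must be fixed, then $N$ chosen, and only afterwards $R \to \infty$ --- so that the head $\sum_{r=0}^{N-1} a_r^2$ is a genuine constant when the denominator is sent to infinity. This is precisely where the divergence hypothesis $\sum_{r} a_r = \infty$ is essential; without it the finite head need not be negligible.
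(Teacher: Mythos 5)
Your proposal is correct and follows essentially the same tail-splitting argument as the paper: fix $\epsilon$, split the numerator at the index beyond which $\gamma_{r k_1 k_2} < \epsilon$, bound the tail by $\epsilon \sum_{r=0}^{R-1} \gamma_{r k_1 k_2}$, and absorb the finite head using divergence of the denominator. The only (harmless) difference is in statement 2, where you reduce to statement 1 via boundedness of the sequence rather than repeating the identical argument with cubes, as the paper does.
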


\begin{proof}
Let us prove only the first statement, for the second one the proof is identical. Since the sequence $(\gamma_{r k_1 k_2})$ convergence to 0, we have:
\begin{gather}
    \nonumber \forall \epsilon>0 \: \exists S: \forall r\geq S \Rightarrow \gamma_{r k_1 k_2} < \epsilon.
\end{gather}
Therefore,
\begin{gather}
    \nonumber \frac{\sum_{r=0}^{R-1}\gamma_{r k_1 k_2}^2}{\sum_{r=0}^{R-1}\gamma_{r k_1 k_2}} = \frac{\sum_{r=0}^{S-1}\gamma_{r k_1 k_2}^2 + \sum_{r=S}^{R-1}\gamma_{r k_1 k_2}^2}{\sum_{r=0}^{R-1}\gamma_{r k_1 k_2}} \leq \frac{\sum_{r=0}^{S-1}\gamma_{r k_1 k_2}^2 + \epsilon \sum_{r=0}^{R-1}\gamma_{r k_1 k_2}}{\sum_{r=0}^{R-1}\gamma_{r k_1 k_2}}
\end{gather}

Now, since $\sum_{r=0}^{S-1}\gamma_{r k_1 k_2}^2$ is a finite sum, and $\sum_{r=0}^{R-1}\gamma_{r k_1 k_2} \rightarrow \infty$ we have:
\begin{gather}
    \nonumber \exists R_0: \forall R>R_0: \sum_{r=0}^{R-1}\gamma_{r k_1 k_2} > \frac{\sum_{r=0}^{S-1}\gamma_{r k_1 k_2}^2}{\epsilon} \Rightarrow \sum_{r=0}^{S-1}\gamma_{r k_1 k_2}^2 < \epsilon \sum_{r=0}^{R-1}\gamma_{r k_1 k_2}
\end{gather}
Substituting this in the previous inequality, we have:
\begin{gather}
    \nonumber \frac{\sum_{r=0}^{R-1}\gamma_{r k_1 k_2}^2}{\sum_{r=0}^{R-1}\gamma_{r k_1 k_2}} \leq \frac{\sum_{r=0}^{S-1}\gamma_{r k_1 k_2}^2 + \epsilon \sum_{r=0}^{R-1}\gamma_{r k_1 k_2}}{\sum_{r=0}^{R-1}\gamma_{r k_1 k_2}} < \frac{\epsilon \sum_{r=0}^{R-1}\gamma_{r k_1 k_2} + \epsilon \sum_{r=0}^{R-1}\gamma_{r k_1 k_2}}{\sum_{r=0}^{R-1}\gamma_{r k_1 k_2}} = 2 \epsilon \quad \forall T>R_0
\end{gather}
which completes the proof.

\end{proof}
 
\begin{lemma}\label{lemma}
Suppose that stochastic gradients $g(x_{j}, \xi_{j})$ satisfy assumptions (\ref{sg:independent}) - (\ref{sg:bounded}).
Then for any indexes $a \leq b$ we have:
$$\mathbb{E} \left[ \left\lVert\sum_{j=a}^{b} g(x_{j}, \xi_{j}) \right\rVert^2 \right] \leq 2 (b-a+1) \Big(\sigma^2 + \sum_{j=a}^{b} \mathbb{E}[\lVert\nabla f(x_{j})\rVert^2]\Big)$$
\end{lemma}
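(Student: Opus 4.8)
The plan is to separate each stochastic gradient into its deterministic mean and a zero-mean noise term, and then control the two resulting sums with different tools. Writing $g_j := g(x_j, \xi_j)$ and decomposing $g_j = \nabla f(x_j) + (g_j - \nabla f(x_j))$, I would first apply the elementary inequality $\lVert u + v \rVert^2 \leq 2 \lVert u \rVert^2 + 2 \lVert v \rVert^2$ to obtain
\begin{gather*}
    \left\lVert \sum_{j=a}^b g_j \right\rVert^2 \leq 2 \left\lVert \sum_{j=a}^b \nabla f(x_j) \right\rVert^2 + 2 \left\lVert \sum_{j=a}^b \big(g_j - \nabla f(x_j)\big) \right\rVert^2.
\end{gather*}
The first sum is a deterministic ``signal'' term and the second is a zero-mean ``noise'' term; they are bounded separately.

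For the signal term, Jensen's inequality (equivalently, Cauchy--Schwarz across the $b-a+1$ summands) gives $\lVert \sum_{j=a}^b \nabla f(x_j) \rVert^2 \leq (b-a+1) \sum_{j=a}^b \lVert \nabla f(x_j) \rVert^2$, which after taking expectations already produces the $\sum_j \mathbb{E}[\lVert \nabla f(x_j) \rVert^2]$ contribution with the correct prefactor $2(b-a+1)$.

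For the noise term, I would expand the squared norm into diagonal and cross terms and argue that every cross term vanishes in expectation. This is the crux of the argument and the main obstacle, since the iterates $x_j$ are themselves correlated (each depends on all previously sampled mini-batches), so one cannot invoke independence of the $\xi_j$ naively. Instead I would introduce the filtration $\mathcal{F}_j$ generated by $\xi_a, \ldots, \xi_{j-1}$, note that $x_j$ is $\mathcal{F}_j$-measurable, and for $i < j$ condition on $\mathcal{F}_j$: since $g_i - \nabla f(x_i)$ is then $\mathcal{F}_j$-measurable while $\mathbb{E}[g_j - \nabla f(x_j) \mid \mathcal{F}_j] = 0$ by the unbiasedness assumption (\ref{sg:unbiased}), the tower property kills the cross term $\mathbb{E}[\langle g_i - \nabla f(x_i), g_j - \nabla f(x_j) \rangle]$. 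Only the diagonal survives, and each diagonal term is at most $\sigma^2$ by the bounded-variance assumption (\ref{sg:bounded}), giving $\mathbb{E}[\lVert \sum_{j=a}^b (g_j - \nabla f(x_j)) \rVert^2] \leq (b-a+1)\sigma^2$.

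Combining the two bounds and taking expectations yields
\begin{gather*}
    \mathbb{E}\left[ \left\lVert \sum_{j=a}^b g_j \right\rVert^2 \right] \leq 2(b-a+1) \sum_{j=a}^b \mathbb{E}[\lVert \nabla f(x_j) \rVert^2] + 2(b-a+1)\sigma^2,
\end{gather*}
which is exactly the claimed inequality after factoring out $2(b-a+1)$. The only delicate point is the cross-term cancellation; once the correct filtration is fixed and unbiasedness is applied conditionally, the remaining steps are routine.
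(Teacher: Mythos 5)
Your proposal is correct and follows essentially the same route as the paper's proof: the same decomposition into signal and noise parts, the same inequality $\lVert u+v\rVert^2 \leq 2\lVert u\rVert^2 + 2\lVert v\rVert^2$, Cauchy--Schwarz on the gradient sum, and cancellation of the noise cross terms via conditional unbiasedness. Your formulation of the cross-term step, conditioning on the filtration $\mathcal{F}_j$ generated by all prior noise rather than on the single sample $\xi_i$ as the paper does, is if anything slightly more careful, but it is the same argument.
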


\begin{proof}

\begin{gather}
    \nonumber \mathbb{E} \left[ \left\lVert\sum_{j=a}^{b} g(x_{j}, \xi_{j}) \right\rVert^2 \right] = 
    \mathbb{E} \left[ \left\lVert \sum_{j=a}^{b} \Big(g(x_{j}, \xi_{j}) - \nabla f(x_{j})\Big) + \sum_{j=a}^{b} \nabla f(x_{j}) \right\rVert^2 \right] \leq \\
    \nonumber \leq 2 \left( \mathbb{E} \left[\left\lVert\sum_{j=a}^{b} \Big(g(x_{j}, \xi_{j}) - \nabla f(x_{j})\Big)\right\rVert^2 \right] + \mathbb{E} \left[ \left\lVert\sum_{j=a}^{b} \nabla f(x_{j})\right\rVert^2 \right] \right) = \\
    \nonumber = 2 \left(\sum_{j=a}^{b} \mathbb{E} \left[\lVert(g(x_{j}, \xi_{j}) - \nabla f(x_{j}))\rVert^2 \right] + \mathbb{E}\left[\left\lVert\sum_{j=a}^{b} \nabla f(x_{j})\right\rVert^2 \right]\right) \leq \\
    \nonumber \leq 2 \Big((b-a+1)\sigma^2 + (b-a+1)\sum_{j=a}^{b} \mathbb{E}[\lVert\nabla f(x_{j})\rVert^2]\Big) =\\
    \nonumber = 2 (b-a+1) \Big(\sigma^2 + \sum_{j=a}^{b} \mathbb{E}[\lVert\nabla f(x_{j})\rVert^2]\Big)
\end{gather}

In the first inequality, we used $\lVert p + q \rVert^2 \leq 2 (\lVert p \rVert^2 + \lVert q \rVert^2)$.

For the second equality, we used that for $j > i$:

\begin{gather*}
    \mathbb{E}  \left[ \langle g(x_{j}, \xi_{j}) - \nabla f(x_{j}), g(x_{i}, \xi_{i}) - \nabla f(x_{i})\rangle \right] = \mathbb{E}_{\xi_{i}} \left[ \mathbb{E}_{\xi_{j}} [\langle g(x_{j}, \xi_{j}) - \nabla f(x_{j}), g(x_{i}, \xi_{i}) - \nabla f(x_{i})\rangle | \xi_{i}] \right] = \\
    = \mathbb{E}_{\xi_{i}} \left[ \langle \mathbb{E}_{\xi_{j}} [g(x_{j}, \xi_{j}) - \nabla f(x_{j}) | \xi_{i}], g(x_{i}, \xi_{i}) - \nabla f(x_{i})\rangle \right] = 0
\end{gather*}

For the second inequality, we used assumption (\ref{sg:bounded}) and inequality $\lVert \sum_{i=1}^n p_i \rVert^2 \leq n \sum_{i=1}^n \lVert p_i \rVert^2$.

\end{proof}

\subsubsection{Main Lemma}

\begin{lemma}[Main lemma]\label{main lemma}

Suppose that learning rate is kept constant within each round: $\gamma_{r k_1 k_2 + i k_1 + j} = \gamma_{r k_1 k_2}$, $\forall r \geq 0$, $0 \leq i \leq k_2-1$, $0 \leq j \leq k_1-1$.
Then under the assumptions (\ref{smoothness}) - (\ref{gamma:constraint_2}) we have:

\begin{gather}
    \nonumber \frac{1}{k_1 k_2 S_R} \sum_{r=0}^{R-1} \Big(\gamma_{r k_1 k_2} \sum_{j=0}^{k_1 k_2-1} \mathbb{E}[\lVert\nabla f(\theta_{r k_1 k_2+j})\rVert^2]\Big) \leq
    \frac{2(f(\theta_0) - f_{inf})}{\alpha_1 \alpha_2 k_1 k_2 S_R} + \frac{\alpha_1 \alpha_2 L \sigma^2}{S_R} \sum_{r=0}^{R-1} \gamma_{r k_1 k_2}^2 + \\
    + \frac{2 L^2 \sigma^2 k_1}{S_R} \Big((1-\alpha_1)^2 \alpha_2^2 + 2(1-\alpha_2)^2 + \frac{4}{3} \alpha_1^2 (1-\alpha_2)^2 k_2^2 \Big) \sum_{r=0}^{R-1} \gamma_{r k_1 k_2}^3
    \label{main lemma formula}
\end{gather}

where $S_R = \sum_{r=0}^{R-1} \gamma_{r k_1 k_2}$ and $f_{inf}$ is the infimum of $f$.

\end{lemma}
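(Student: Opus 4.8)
The plan is to run the standard non-convex SGD descent argument, but applied to the smoothly-evolving surrogate iterate $\theta_t$ rather than to any of the raw weights $x_t, y_t, z_t$, exploiting equation (\ref{theta}), which says that $\theta_t$ obeys the clean SGD-like recursion $\theta_{t+1} = \theta_t - \gamma_t \alpha_1 \alpha_2\, g(x_t, \xi_t)$ despite all the intermediate synchronizations. First I would apply $L$-smoothness (\ref{smoothness}) to this recursion to get the one-step bound
\[
f(\theta_{t+1}) \le f(\theta_t) - \gamma_t \alpha_1 \alpha_2 \langle \nabla f(\theta_t), g(x_t,\xi_t)\rangle + \tfrac{L}{2}\gamma_t^2 \alpha_1^2 \alpha_2^2 \lVert g(x_t,\xi_t)\rVert^2 .
\]
Taking the conditional expectation and invoking unbiasedness (\ref{sg:unbiased}) and bounded variance (\ref{sg:bounded}), the last term is controlled by $\tfrac{L}{2}\gamma_t^2\alpha_1^2\alpha_2^2(\mathbb{E}\lVert\nabla f(x_t)\rVert^2 + \sigma^2)$, while the inner product collapses to $\langle \nabla f(\theta_t), \nabla f(x_t)\rangle$. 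The crucial feature here is the mismatch: the gradient is evaluated at the fast weight $x_t$, whereas the quantity we must control on the left is $\lVert\nabla f(\theta_t)\rVert^2$.

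The key maneuver to reconcile this is the polarization identity $\langle a,b\rangle = \tfrac12(\lVert a\rVert^2 + \lVert b\rVert^2 - \lVert a-b\rVert^2)$ applied to $\langle\nabla f(\theta_t),\nabla f(x_t)\rangle$. This isolates the desired $\lVert\nabla f(\theta_t)\rVert^2$, retains an auxiliary $\lVert\nabla f(x_t)\rVert^2$ term, and creates a \emph{consensus/drift} term $\lVert\nabla f(\theta_t)-\nabla f(x_t)\rVert^2 \le L^2\lVert\theta_t - x_t\rVert^2$, where the bound is again by (\ref{smoothness}).

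The main obstacle is bounding $\mathbb{E}\lVert\theta_t - x_t\rVert^2$. Since $\theta$ and $x$ coincide at every full synchronization $t \divby k_1 k_2$, i.e. at the start of each round, I would express $\theta_t - x_t$ as a single weighted sum of the stochastic gradients $g(x_s,\xi_s)$ accumulated since the last such synchronization. The coefficients encode the nested mixing: gradients of the current innermost loop enter $x_t$ at full weight, earlier level-$1$ blocks are damped by powers of $\alpha_1$, and comparing against $\theta$'s uniform $\alpha_1\alpha_2$ weighting separates the drift into the three groups that will eventually yield the factors $(1-\alpha_1)^2\alpha_2^2$, $(1-\alpha_2)^2$, and $\alpha_1^2(1-\alpha_2)^2 k_2^2$. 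Applying Lemma \ref{lemma} to each group, its factor $(b-a+1)$ counts the number of summed gradients, so the $\sigma^2$ contribution acquires a single count factor (of order $k_1$, and across the $k_2$ inner loops a further $k_2$), while the $\sum \mathbb{E}\lVert\nabla f(x_s)\rVert^2$ contribution acquires a squared count factor (of order $k_1^2$, respectively $k_2^2$). This step is where all the combinatorial bookkeeping lives and where I expect the bulk of the work.

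Finally I would assemble and telescope. Summing the one-step inequality over a full round (the hypothesis that $\gamma$ is constant within a round makes these sums clean) and then over $r = 0,\dots,R-1$, the left side telescopes to $f(\theta_0) - f(\theta_T) \le f(\theta_0) - f_{inf}$. Collecting the coefficient of each $\lVert\nabla f(x_t)\rVert^2$ across the descent term $-\tfrac{\gamma\alpha_1\alpha_2}{2}$, the variance term $+\tfrac{L}{2}\gamma^2\alpha_1^2\alpha_2^2$, and the drift contributions $\sim \gamma^3 L^2 k_1^2(\cdots)$, the net factor is proportional to $-1 + L\gamma\alpha_1\alpha_2 + 2\gamma^2 L^2 k_1^2(\cdots)$; assumption (\ref{gamma:constraint_2}) is precisely the statement that this is nonpositive, so the auxiliary $\lVert\nabla f(x_t)\rVert^2$ terms can be dropped. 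What survives is the desired weighted sum of $\mathbb{E}\lVert\nabla f(\theta_t)\rVert^2$ together with the $\sigma^2$ remainders: the variance term aggregates into the $\sum\gamma^2$ term, and the drift $\sigma^2$ terms aggregate into the $\sum\gamma^3$ term carrying the factor $k_1$ and the parenthesis in (\ref{main lemma formula}). Dividing through by $\tfrac12 \alpha_1\alpha_2 k_1 k_2 S_R$ yields the claimed bound.
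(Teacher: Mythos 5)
Your proposal is correct and follows essentially the same route as the paper's proof: the descent lemma applied to the surrogate iterate $\theta_t$, the polarization identity to isolate $\lVert\nabla f(\theta_t)\rVert^2$ and create the drift term $L^2\lVert\theta_t-x_t\rVert^2$, bounding the drift via sums of stochastic gradients since the last synchronization using Lemma \ref{lemma}, dropping the $\lVert\nabla f(x_t)\rVert^2$ terms via assumption (\ref{gamma:constraint_2}), and telescoping. The only cosmetic difference is that the paper splits the drift through the weight differences $x_t-y_t$ and $x_t-z_t$ with Young's inequality rather than writing $\theta_t-x_t$ as one weighted gradient sum, which is where its extra factors of $2$ and $\tfrac{4}{3}$ arise.
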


\begin{proof}

Combining $L$-smoothness of $f$ with the formula (\ref{theta}), we have:

\begin{gather}
    \nonumber f(\theta_{t+1}) - f(\theta_t) \leq \langle \theta_{t+1} - \theta_t, \nabla f(\theta_t) \rangle + \frac{L}{2} \lVert\theta_{t+1} - \theta_t\rVert^2 = \\
    = - \gamma_t \alpha_1 \alpha_2 \langle \nabla f(\theta_t), g(x_t, \xi_t) \rangle + \frac{\gamma_t^2 \alpha_1^2 \alpha_2^2 L}{2} \lVert g(x_t, \xi_t) \rVert^2 \label{step:1}
\end{gather}

As in ~\cite{wang2020lookahead}, to simplify notations we denote by $\mathbb{E}_t[\cdot]$ the conditional expectation $\mathbb{E}_{\xi_t}[\cdot |\mathcal{F}_t]$, where $\mathcal{F}_t$ is the sigma algebra generated by the noise in stochastic gradients until iteration $t$. Taking the conditional expectation of both sides of (\ref{step:1}), for the first term on RHS \footnote{right-hand side} we have:
\begin{gather}
    \nonumber \mathbb{E}_{t} [\langle \nabla f(\theta_t), g(x_t, \xi_t) \rangle] = \langle \nabla f(\theta_t), \nabla f(x_t) \rangle = 
    \frac{1}{2}\Big(\lVert \nabla f(\theta_t)\rVert^2 + \lVert \nabla f(x_t)\rVert^2 - \lVert\nabla f(\theta_t) - \nabla f(x_t)\rVert^2\Big) \geq \\
    \nonumber \geq \frac{1}{2}\Big(\lVert \nabla f(\theta_t)\rVert^2 + \lVert\nabla f(x_t)\rVert^2 - L^2\lVert\theta_t - x_t\rVert^2\Big) = \\
    = \frac{1}{2} \Big(\lVert\nabla f(\theta_t)\rVert^2 + \lVert\nabla f(x_t)\rVert^2 - L^2\lVert(1 - \alpha_1 \alpha_2)x_t - (1 - \alpha_1)\alpha_2 y_t - (1 - \alpha_2) z_t\rVert^2\Big) \label{step:1 bound:1}
\end{gather}

For the second term on RHS of (\ref{step:1}):

\begin{gather}
    \nonumber \mathbb{E}_t[\lVert g(x_t, \xi_t)\rVert^2] = \mathbb{E}_t[\lVert g(x_t, \xi_t) - \nabla f(x_t) + \nabla f(x_t)\rVert^2] = \mathbb{E}_t[\lVert g(x_t, \xi_t) - \nabla f(x_t)\rVert^2] + \lVert\nabla f(x_t)\rVert^2 \leq \\
    \leq \sigma^2 + \lVert\nabla f(x_t)\rVert^2 \label{step:1 bound:2}
\end{gather}


Plugging (\ref{step:1 bound:1}) and (\ref{step:1 bound:2}) back into (\ref{step:1}) and taking the total expectation, we get

\begin{gather}
    \nonumber \mathbb{E}[f(\theta_{t+1})] - \mathbb{E}[f(\theta_t)] \leq - \frac{\gamma_t \alpha_1 \alpha_2}{2} \mathbb{E}[\lVert \nabla f(\theta_t)\rVert^2] - \frac{\gamma_t \alpha_1 \alpha_2}{2} (1 - \gamma_t \alpha_1 \alpha_2 L) \mathbb{E}[\lVert\nabla f(x_t)\rVert^2] + \frac{\gamma_t^2 \alpha_1^2 \alpha_2^2 L \sigma^2}{2} + \\
    + \frac{\gamma_t \alpha_1 \alpha_2 L^2}{2} \mathbb{E}[\lVert(1 - \alpha_1 \alpha_2)x_t - (1 - \alpha_1)\alpha_2 y_t - (1 - \alpha_2) z_t\rVert^2] \label{ineq:2}
\end{gather}

Let us decompose $t = r k_1 k_2 + i k_1 + j$, where $0 \leq i \leq k_2-1$, $0 \leq j \leq k_1-1$. For shortness, denote $s = r k_1 k_2 + i k_1 $, so that $t = s+j$. Summing (\ref{ineq:2}) from $j = 0$ to $j = k_1 - 1$, and using that $\gamma_{r k_1 k_2 + i k_1 + j} = \gamma_{r k_1 k_2}$, we get:

\begin{gather}
    \nonumber \mathbb{E}[f(\theta_{s+k_1})] - \mathbb{E}[f(\theta_s)] \leq - \frac{\gamma_{r k_1 k_2} \alpha_1 \alpha_2}{2} \sum_{j=0}^{k_1-1} \mathbb{E}[\lVert\nabla f(\theta_{s+j})\rVert^2] - \frac{\gamma_{r k_1 k_2} \alpha_1 \alpha_2}{2} (1 - \gamma_{r k_1 k_2} \alpha_1 \alpha_2 L) \sum_{j=0}^{k_1-1} \mathbb{E}[\lVert\nabla f(x_{s+j})\rVert^2] +\\
    + \frac{\gamma_{r k_1 k_2}^2 \alpha_1^2 \alpha_2^2 L \sigma^2 k_1}{2} 
    + \frac{\gamma_{r k_1 k_2} \alpha_1 \alpha_2 L^2}{2} \sum_{j=0}^{k_1-1} \mathbb{E}[\lVert(1 - \alpha_1 \alpha_2)x_{s+j} - (1 - \alpha_1)\alpha_2 y_{s+j} - (1 - \alpha_2) z_{s+j}\rVert^2] \label{three}
\end{gather}

Taking into account that $y_{s+j} = y_s$, $z_{s+j} = z_{r k_1 k_2}$, we can bound the last term on RHS of (\ref{three}) as follows:

\begin{gather}
    \nonumber \mathbb{E}[\lVert(1 - \alpha_1 \alpha_2)x_{s+j} - (1 - \alpha_1)\alpha_2 y_{s+j} - (1 - \alpha_2) z_{s+j}\rVert^2] = 
    \mathbb{E}[\lVert(1 - \alpha_1)\alpha_2 (x_{s+j} - y_{s}) + (1 - \alpha_2) (x_{s+j} - z_{r k_1 k_2})\rVert^2] \leq \\
    \nonumber \leq 2 \Big(\mathbb{E}[\lVert(1 - \alpha_1)\alpha_2 (x_{s+j} - y_{s})\rVert^2] + \mathbb{E}[\lVert(1 - \alpha_2) (x_{s+j} - z_{r k_1 k_2})\rVert^2]\Big) = \\
    = 2(1 - \alpha_1)^2\alpha_2^2 \mathbb{E}[\lVert x_{s+j} - y_{s}\rVert^2] + 2(1 - \alpha_2)^2\mathbb{E}[\lVert x_{s+j} - z_{r k_1 k_2}\rVert^2] \label{four}
\end{gather}

Let us bound the first term on RHS of (\ref{four}):

\begin{gather}
    \nonumber \mathbb{E}[\lVert x_{s+j} - y_{s}\rVert^2] = \mathbb{E}[\lVert x_{s+j} - x_{s}\rVert^2] = \gamma_{r k_1 k_2}^2\mathbb{E}\left[\left\lVert \sum_{j'=0}^{j-1} g(x_{s+j'}, \xi_{s+j'})\right\rVert^2 \right] \leq \text{using Lemma \ref{lemma}} \leq \\
    \leq 2 \gamma_{r k_1 k_2}^2 j \Big(\sigma^2 + \sum_{j'=0}^{j-1} \mathbb{E}[\lVert\nabla f(x_{s+j'})\rVert^2]\Big) \leq 2 \gamma_{r k_1 k_2}^2 j \Big(\sigma^2 + \sum_{j'=0}^{k_1-1} \mathbb{E}[\lVert\nabla f(x_{s+j'})\rVert^2]\Big)\label{two}
\end{gather}

Bounding the second term on RHS of (\ref{four}):

\begin{gather}
    \nonumber \mathbb{E}[\lVert x_{s+j} - z_{r k_1 k_2}\rVert^2] = \mathbb{E}[\lVert x_{r k_1 k_2+i k_1+j} - z_{r k_1 k_2}\rVert^2] = \mathbb{E}[\lVert x_{r k_1 k_2+i k_1+j} - y_{r k_1 k_2+i k_1} + y_{r k_1 k_2+i k_1} - z_{r k_1 k_2}\rVert^2] \leq \\
    \leq 2\Big(\mathbb{E}[\lVert x_{r k_1 k_2+i k_1+j} - y_{r k_1 k_2+i k_1}\rVert^2] + \mathbb{E}[\lVert y_{r k_1 k_2+i k_1} - z_{r k_1 k_2}\rVert^2]\Big) = 
    2\Big(\mathbb{E}[\lVert x_{s+j} - y_{s}\rVert^2] + \mathbb{E}[\lVert y_{r k_1 k_2+i k_1} - z_{r k_1 k_2}\rVert^2]\Big) \label{five}
\end{gather}

The first term can be bounded using (\ref{two}). Bounding the second term:

\begin{gather}
    \nonumber \mathbb{E}[\lVert y_{r k_1 k_2+i k_1} - z_{r k_1 k_2}\rVert^2] = \mathbb{E}[\lVert y_{r k_1 k_2+i k_1} - y_{r k_1 k_2}\rVert^2] = \\
    \nonumber = \mathbb{E} \left[\left\lVert\sum_{i'=0}^{i-1}(y_{r k_1 k_2+(i'+1)k_1}-y_{r k_1 k_2+i' k_1})\right\rVert^2\right] \leq i \sum_{i'=0}^{i-1} \mathbb{E} [\lVert y_{r k_1 k_2+(i'+1) k_1}-y_{r k_1 k_2+i' k_1}\rVert^2] = \\
    \nonumber = i \sum_{i'=0}^{i-1} \mathbb{E} \left[\left\lVert \alpha_1 \Big(y_{r k_1 k_2+i' k_1} - \gamma_{r k_1 k_2}\sum_{j=0}^{k_1-1} g(x_{r k_1 k_2+i' k_1+j}, \xi_{r k_1 k_2+i' k_1+j})\Big) + (1-\alpha_1)y_{r k_1 k_2+i' k_1} - y_{r k_1 k_2+i' k_1}\right\rVert^2\right] = \\
    \nonumber = i \alpha_1^2 \gamma_{r k_1 k_2}^2 \sum_{i'=0}^{i-1} \mathbb{E} \left[\left\lVert\sum_{j=0}^{k_1-1} g(x_{r k_1 k_2+i' k_1 + j}, \xi_{r k_1 k_2+i' k_1+j})\right\rVert^2\right] \leq \text{using Lemma \ref{lemma}} \leq \\
    \nonumber \leq i \alpha_1^2 \gamma_{r k_1 k_2}^2 \sum_{i'=0}^{i-1} 2 k_1 \Big(\sigma^2 + \sum_{j=0}^{k_1-1} \mathbb{E} [\lVert\nabla f(x_{r k_1 k_2+i' k_1+j})\rVert^2]\Big) = \\
    \nonumber = 2 i^2 \alpha_1^2 \gamma_{r k_1 k_2}^2 k_1 \sigma^2 + 2 i \alpha_1^2 \gamma_{r k_1 k_2}^2 k_1 \sum_{j=0}^{i k_1-1} \mathbb{E} [\lVert\nabla f(x_{r k_1 k_2+j})\rVert^2] \leq \\
    \leq 2 i^2 \alpha_1^2 \gamma_{r k_1 k_2}^2 k_1 \sigma^2 + 2 i \alpha_1^2 \gamma_{r k_1 k_2}^2 k_1 \sum_{j=0}^{k_1 k_2-1} \mathbb{E} [\lVert\nabla f(x_{r k_1 k_2+j})\rVert^2]
\end{gather}

Finally, the RHS of (\ref{five}) can be bounded by:

\begin{gather}
    \nonumber \mathbb{E}[\lVert x_{s+j} - z_{r k_1 k_2}\rVert^2] \leq \\
    \nonumber \leq 2\left(2 \gamma_{r k_1 k_2}^2 j \Big(\sigma^2 + \sum_{j'=0}^{k_1-1} \mathbb{E}[\lVert\nabla f(x_{s+j'})\rVert^2]\Big) + 2 i^2 \alpha_1^2 \gamma_{r k_1 k_2}^2 k_1 \sigma^2 + 2 i \alpha_1^2 \gamma_{r k_1 k_2}^2 k_1 \sum_{j'=0}^{k_1 k_2-1} \mathbb{E} [\lVert\nabla f(x_{r k_1 k_2+j'})\rVert^2]\right) = \\
    = 4 \gamma_{r k_1 k_2}^2 \left(j\Big(\sigma^2 + \sum_{j'=0}^{k_1-1} \mathbb{E}[\lVert\nabla f(x_{s+j'})\rVert^2]\Big) + i^2 \alpha_1^2 k_1 \sigma^2 + i \alpha_1^2 k_1 \sum_{j'=0}^{k_1 k_2-1} \mathbb{E} [\lVert\nabla f(x_{r k_1 k_2+j'})\rVert^2]\right) \label{six}
\end{gather}

Now, using (\ref{two}) and (\ref{six}), we can bound the RHS of (\ref{four}):

\begin{gather}
    \nonumber \mathbb{E}[\lVert(1 - \alpha_1 \alpha_2)x_{s+j} - (1 - \alpha_1)\alpha_2 y_{s+j} - (1 - \alpha_2) z_{s+j}\rVert^2] \leq \\
    \nonumber \leq 2(1 - \alpha_1)^2\alpha_2^2 \cdot 2 \gamma_{r k_1 k_2}^2 j \Big(\sigma^2 + \sum_{j'=0}^{k_1-1} \mathbb{E}[\lVert\nabla f(x_{s+j'})\rVert^2]\Big) + \\ 
    \nonumber + 2(1 - \alpha_2)^2 \cdot 4 \nonumber \gamma_{r k_1 k_2}^2 \left(j \Big(\sigma^2 + \sum_{j'=0}^{k_1-1} \mathbb{E}[\lVert\nabla f(x_{s+j'})\rVert^2] \Big) + 
    i^2 \alpha_1^2 k_1 \sigma^2 + i \alpha_1^2 k_1 \sum_{j'=0}^{k_1 k_2-1} \mathbb{E} [\lVert\nabla f(x_{r k_1 k_2+j'})\rVert^2] \right) = \\
    \nonumber = 4 \gamma_{r k_1 k_2}^2 \sigma^2 \Big((1 - \alpha_1)^2\alpha_2^2 j + 2(1 - \alpha_2)^2 j + 2(1 - \alpha_2)^2 \alpha_1^2 i^2 k_1 \Big) + \\
    \nonumber + 4 \gamma_{r k_1 k_2}^2 j \Big((1 - \alpha_1)^2\alpha_2^2 + 2(1 - \alpha_2)^2 \Big) \sum_{j'=0}^{k_1-1} \mathbb{E}[\lVert\nabla f(x_{s+j'})\rVert^2] + \\
    + 8 \gamma_{r k_1 k_2}^2 \alpha_1^2 (1-\alpha_2)^2 i k_1 \sum_{j'=0}^{k_1 k_2-1} \mathbb{E} [\lVert\nabla f(x_{r k_1 k_2+j'})\rVert^2]
\end{gather}

We have bounded the constituents of the last sum on RHS of (\ref{three}). Summing from $j=0$ to $j=k_1-1$, and using $\sum_{j=0}^{k_1-1} j < \frac{k_1^2}{2}$, we get:

\begin{gather}
    \nonumber \sum_{j=0}^{k_1-1} \mathbb{E}[\lVert(1 - \alpha_1 \alpha_2)x_{s+j} - (1 - \alpha_1)\alpha_2 y_{s+j} - (1 - \alpha_2) z_{s+j}\rVert^2] \leq \\
    \nonumber \leq 4 \gamma_{r k_1 k_2}^2 \sigma^2 \Big((1 - \alpha_1)^2\alpha_2^2 \frac{k_1^2}{2} + (1 - \alpha_2)^2 k_1^2 + 2(1 - \alpha_2)^2 \alpha_1^2 i^2 k_1^2 \Big) + \\
    \nonumber + 2 \gamma_{r k_1 k_2}^2 k_1^2 \Big((1 - \alpha_1)^2\alpha_2^2 + 2(1 - \alpha_2)^2 \Big) \sum_{j'=0}^{k_1-1} \mathbb{E}[\lVert\nabla f(x_{s+j'})\rVert^2] + \\
    + 8 \gamma_{r k_1 k_2}^2 \alpha_1^2 (1-\alpha_2)^2 i k_1^2 \sum_{j'=0}^{k_1 k_2-1} \mathbb{E} [\lVert\nabla f(x_{r k_1 k_2+j'})\rVert^2]
\end{gather}

Eventually, we can upper-bound the RHS of (\ref{three}):

\begin{gather}
    \nonumber \mathbb{E}[f(\theta_{s+k_1})] - \mathbb{E}[f(\theta_s)] \leq - \frac{\gamma_{r k_1 k_2} \alpha_1 \alpha_2}{2} \sum_{j=0}^{k_1-1} \mathbb{E}[\lVert\nabla f(\theta_{s+j})\rVert^2] - \\
    \nonumber - \frac{\gamma_{r k_1 k_2} \alpha_1 \alpha_2}{2} \Bigg(1 - \gamma_{r k_1 k_2} \alpha_1 \alpha_2 L - 2 \gamma_{r k_1 k_2}^2 L^2 k_1^2\Big((1-\alpha_1)^2 \alpha_2^2 + 2(1-\alpha_2)^2\Big)\Bigg) \sum_{j=0}^{k_1-1} \mathbb{E}[\lVert\nabla f(x_{s+j})\rVert^2] + \\
    \nonumber + \frac{\gamma_{r k_1 k_2}^2 \alpha_1^2 \alpha_2^2 L \sigma^2 k_1}{2}  + \gamma_{r k_1 k_2}^3 \alpha_1 \alpha_2 L^2 \sigma^2 k_1^2 \Big((1-\alpha_1)^2 \alpha_2^2 + 2(1-\alpha_2)^2 + 4\alpha_1^2(1-\alpha_2)^2 i^2\Big) +\\
    + 4 \gamma_{r k_1 k_2}^3 \alpha_1^3 \alpha_2 (1-\alpha_2)^2 L^2 i k_1^2 \sum_{j'=0}^{k_1 k_2-1} \mathbb{E} [\lVert\nabla f(x_{r k_1 k_2+j'})\rVert^2]
\end{gather}

Recall that $s = r k_1 k_2 + i k_1$. Summing from $i = 0$ to $i = k_2 -1$, and using $\sum_{i=0}^{k_2-1} i < \frac{k_2^2}{2}$, $\sum_{i=0}^{k_2-1} i^2 < \frac{k_2^3}{3}$, we get:

\begin{gather}
    \nonumber \mathbb{E}[f(\theta_{(r+1)k_1 k_2})] - \mathbb{E}[f(\theta_{r k_1 k_2})] \leq - \frac{\gamma_{r k_1 k_2} \alpha_1 \alpha_2}{2} \sum_{j'=0}^{k_1 k_2-1} \mathbb{E}[\lVert\nabla f(\theta_{r k_1 k_2+j'})\rVert^2] - \\
    \nonumber - \frac{\gamma_{r k_1 k_2} \alpha_1 \alpha_2}{2} \Bigg(1 - \gamma_{r k_1 k_2} \alpha_1 \alpha_2 L - 2 \gamma_{r k_1 k_2}^2 L^2 k_1^2\Big((1-\alpha_1)^2 \alpha_2^2 + 2(1-\alpha_2)^2\Big)\Bigg) \sum_{j'=0}^{k_1 k_2-1} \mathbb{E}[\lVert\nabla f(x_{r k_1 k_2+j'})\rVert^2] + \\
    \nonumber + \frac{\gamma_{r k_1 k_2}^2 \alpha_1^2 \alpha_2^2 L \sigma^2 k_1 k_2}{2} + \gamma_{r k_1 k_2}^3 \alpha_1 \alpha_2 L^2 \sigma^2 k_1^2 \Big((1-\alpha_1)^2 \alpha_2^2 k_2 + 2(1-\alpha_2)^2 k_2 + \frac{4}{3} \alpha_1^2 (1-\alpha_2)^2 k_2^3\Big) + \\
    \nonumber + 2 \gamma_{r k_1 k_2}^3 \alpha_1^3 \alpha_2 (1-\alpha_2)^2 L^2 k_1^2 k_2^2 \sum_{j'=0}^{k_1 k_2-1} \mathbb{E} [\lVert\nabla f(x_{r k_1 k_2+j'})\rVert^2] = \\
    \nonumber - \frac{\gamma_{r k_1 k_2} \alpha_1 \alpha_2}{2} \sum_{j'=0}^{k_1 k_2-1} \mathbb{E}[\lVert\nabla f(\theta_{r k_1 k_2+j'})\rVert^2] - \frac{\gamma_{r k_1 k_2} \alpha_1 \alpha_2}{2} \times \\
    \nonumber \times \Bigg(1 - \gamma_{r k_1 k_2} \alpha_1 \alpha_2 L - 2 \gamma_{r k_1 k_2}^2 L^2 k_1^2\Big((1-\alpha_1)^2 \alpha_2^2 + 2(1-\alpha_2)^2 + 2 \alpha_1^2 (1-\alpha_2)^2 k_2^2\Big) \Bigg) \sum_{j'=0}^{k_1 k_2-1} \mathbb{E}[\lVert\nabla f(x_{r k_1 k_2+j'})\rVert^2] + \\
    \nonumber + \frac{\gamma_{r k_1 k_2}^2 \alpha_1^2 \alpha_2^2 L \sigma^2 k_1 k_2}{2} + \gamma_{r k_1 k_2}^3 \alpha_1 \alpha_2 L^2 \sigma^2 k_1^2 \Big((1-\alpha_1)^2 \alpha_2^2 k_2 + 2(1-\alpha_2)^2 k_2 + \frac{4}{3} \alpha_1^2 (1-\alpha_2)^2 k_2^3 \Big)
\end{gather}

By assumption (\ref{gamma:constraint_2}), the term corresponding to $\sum_{j'=0}^{k_1 k_2-1} \mathbb{E}[\lVert\nabla f(x_{r k_1 k_2+j'})\rVert^2]$ is negative. 
Therefore, we can bound the RHS by excluding this term:

\begin{gather}
    \nonumber \mathbb{E}[f(\theta_{(r+1) k_1 k_2})] - \mathbb{E}[f(\theta_{r k_1 k_2})] \leq
    - \frac{\gamma_{r k_1 k_2} \alpha_1 \alpha_2}{2} \sum_{j'=0}^{k_1 k_2-1} \mathbb{E}[\lVert\nabla f(\theta_{r k_1 k_2+j'})\rVert^2] + \\
    + \frac{\gamma_{r k_1 k_2}^2 \alpha_1^2 \alpha_2^2 L \sigma^2 k_1 k_2}{2} + \gamma_{r k_1 k_2}^3 \alpha_1 \alpha_2 L^2 \sigma^2 k_1^2 \Big((1-\alpha_1)^2 \alpha_2^2 k_2 + 2(1-\alpha_2)^2 k_2 + \frac{4}{3} \alpha_1^2 (1-\alpha_2)^2 k_2^3\Big)
\end{gather}

Summing from $r = 0$ to $r = R-1$, we get:

\begin{gather}
    \nonumber \mathbb{E}[f(\theta_{R k_1 k_2})] - f(\theta_{0}) \leq - \frac{\alpha_1 \alpha_2}{2} \sum_{r=0}^{R-1} \Big(\gamma_{r k_1 k_2} \sum_{j'=0}^{k_1 k_2-1} \mathbb{E}[\lVert\nabla f(\theta_{r k_1 k_2+j'})\rVert^2] \Big) + \\
    \nonumber + \frac{\alpha_1^2 \alpha_2^2 L \sigma^2 k_1 k_2}{2} \sum_{r=0}^{R-1} \gamma_{r k_1 k_2}^2 + \\
    \nonumber + \alpha_1 \alpha_2 L^2 \sigma^2 k_2 k_1^2 \Big((1-\alpha_1)^2 \alpha_2^2 + 2(1-\alpha_2)^2 + \frac{4}{3} \alpha_1^2 (1-\alpha_2)^2 k_2^2 \Big) \sum_{r=0}^{R-1} \gamma_{r k_1 k_2}^3
\end{gather}

After rearranging and multiplying by $\frac{2}{\alpha_1 \alpha_2 k_1 k_2 S_R}$, where $S_R = \sum_{r=0}^{R-1} \gamma_{r k_1 k_2}$, and using that $\mathbb{E}[f(\theta_{R k_1 k_2})] \geq f_{inf}$, where $f_{inf}$ is the infimum of $f$, we obtain the statement of the lemma:

\begin{gather}
    \nonumber \frac{1}{k_1 k_2 S_R} \sum_{r=0}^{R-1} \Big(\gamma_{r k_1 k_2} \sum_{j'=0}^{k_1 k_2-1} \mathbb{E}[\lVert\nabla f(\theta_{r k_1 k_2+j'})\rVert^2]\Big) \leq
    \frac{2(f(\theta_0) - f_{inf})}{\alpha_1 \alpha_2 k_1 k_2 S_R} + \frac{\alpha_1 \alpha_2 L \sigma^2}{S_R} \sum_{r=0}^{R-1} \gamma_{r k_1 k_2}^2 + \\
    + \frac{2 L^2 \sigma^2 k_1}{S_R} \Big((1-\alpha_1)^2 \alpha_2^2 + 2(1-\alpha_2)^2 + \frac{4}{3} \alpha_1^2 (1-\alpha_2)^2 k_2^2 \Big) \sum_{r=0}^{R-1} \gamma_{r k_1 k_2}^3
\end{gather}

\end{proof}

\subsubsection{Proofs of the Stated Results}

Now, we have all the ingredients to prove the first two theorems. 

\begin{proof}[Proof of \autoref{theorem:1}]
Using assumption (\ref{gamma:constraint_1}) and Lemma (\ref{lemma:1}), we obtain that $1 / S_R \rightarrow 0$, $\sum_{r=0}^{R-1} \gamma_{r k_1 k_2}^2 / S_R \rightarrow 0$, $\sum_{r=0}^{R-1} \gamma_{r k_1 k_2}^3 / S_R \rightarrow 0$. Applying it to the statement of Lemma \ref{main lemma}, we get that RHS of (\ref{main lemma formula}) approaches to 0, which yields the result of the Theorem \ref{theorem:1}.
\end{proof}

\begin{proof}[Proof of \autoref{theorem:2}]
To prove Theorem \ref{theorem:2}, we substitute $\gamma_{r k_1 k_2} = \gamma = \text{const}$ in (\ref{main lemma formula}) and after minor simplifications obtain the desired statement.
\end{proof}

\begin{proof}[Proof of Claim \ref{conv_analysis: claim 1}]
When $\alpha_1 = \alpha_2 = 1$, the third term in the RHS of (\ref{stat conv theorem:2 formula}) disappears and the bound becomes:

\begin{gather*}
    \nonumber \frac{1}{T} \sum_{t=0}^{T-1} \mathbb{E}[\lVert\nabla f(\theta_{t})\rVert^2] \leq 
    \frac{2(f(\theta_0) - f_{inf})}{\gamma T} 
    + \gamma L \sigma^2
\end{gather*}

Optimizing it over $\gamma$, we get the optimal value $\gamma = \frac{1}{\sigma} \sqrt{\frac{2(f(\theta_0) - f_{inf})}{L T}}$ and the bound takes the form:

\begin{gather*}
    \nonumber \frac{1}{T} \sum_{t=0}^{T-1} \mathbb{E}[\lVert\nabla f(\theta_{t})\rVert^2]) \leq 
    \frac{2\sigma \sqrt{2 L (f(\theta_0) - f_{inf})}}{\sqrt{T}}
\end{gather*}

Note, that we need $T \geq \frac{2(f(\theta_0) - f_{inf})}{L \sigma^2 \gamma_*^2}$ to ensure the constraint (\ref{gamma:constraint_2}).

For any other choice of $\alpha_1$ and $\alpha_2$, the third term in the RHS of (\ref{stat conv theorem:2 formula}) is positive. Thus,

\begin{gather*}
    \frac{2(f(\theta_0) - f_{inf})}{\gamma \alpha_1 \alpha_2 T} + \gamma \alpha_1 \alpha_2 L \sigma^2 
    + 2 \gamma^2 L^2 \sigma^2 k_1 \left( (1-\alpha_1)^2 \alpha_2^2 + 2(1-\alpha_2)^2 + \frac{4}{3} \alpha_1^2 (1-\alpha_2)^2 k_2^2 \right) > \\
    > \frac{2(f(\theta_0) - f_{inf})}{\gamma \alpha_1 \alpha_2 T} + \gamma \alpha_1 \alpha_2 L \sigma^2 \geq |\text{by Cauchy's inequality}| \geq \frac{2\sigma \sqrt{2 L (f(\theta_0) - f_{inf})}}{\sqrt{T}}
\end{gather*}

which completes the proof.
\end{proof}

\begin{proof}[Proof of Corollary \ref{corollary:2}]
For any choice of $\gamma$, we get:

\begin{gather*}
    \frac{2(f(\theta_0) - f_{inf})}{\gamma \alpha_1 \alpha_2 T} + \gamma \alpha_1 \alpha_2 L \sigma^2
    + 2 \gamma^2 L^2 \sigma^2 k_1 \left( (1-\alpha_1)^2 \alpha_2^2 + 2(1-\alpha_2)^2 + \frac{4}{3} \alpha_1^2 (1-\alpha_2)^2 k_2^2 \right) \geq \\
    \geq \frac{2(f(\theta_0) - f_{inf})}{\gamma \alpha_1 \alpha_2 T} + \gamma \alpha_1 \alpha_2 L \sigma^2 \geq |\text{by Cauchy's inequality}| \geq \frac{2\sigma \sqrt{2 L (f(\theta_0) - f_{inf})}}{\sqrt{T}}
\end{gather*}

so the constant before $\frac{1}{\sqrt{T}}$ indeed cannot be decreased. By substituting the proposed value of $\gamma$, which was found as the equality point of Cauchy's inequality, we get the declared result.
\end{proof}

\begin{theorem} \label{theorem:3}
Let us call the \emph{run} to be an interval between two consecutive restarts. Consider 2-layers Lookahead with restarts defined as follows. For each $m \geq 0$, we start the run $m$ from the fixed initial point $z_0$ and perform $R_m = 4^m$ rounds of 2-layers Lookahead using learning rate $\gamma_m = \frac{\gamma_*}{\sqrt{R_m}} = \frac{\gamma_*}{2^m}$. As usual, denote by $R$ the total number of rounds and by  $T = R k_1 k_2$ the total number of iterations. Then, we have:

\begin{gather}
    \nonumber \frac{1}{T} \sum_{t=0}^{T-1} \mathbb{E}[\lVert\nabla f(\theta_{t})\rVert^2] = \mathcal{O}\left(\frac{1}{\sqrt{T}}\right), T \rightarrow +\infty
\end{gather}

\end{theorem}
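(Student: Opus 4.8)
The plan is to apply Theorem~\ref{theorem:2} separately to each run, treating every run as an independent instance of 2-layers Lookahead with a learning rate that is constant within that run, and then aggregate the per-run bounds over the geometrically growing schedule. The crucial feature to exploit is that each run starts afresh from the same point $z_0$, so that $\theta_0 = z_0$ at the start of every run (since $a$ sums to $1$, the combination of $(z_0,z_0,z_0)$ is again $z_0$) and the initial gap $\Delta := f(z_0) - f_{inf}$ is a constant independent of $m$. Moreover $\gamma_m = \gamma_*/2^m \leq \gamma_*$, so the constraint~(\ref{gamma:constraint_2}) holds on every run and Theorem~\ref{theorem:2} is applicable throughout.

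First I would fix run $m$, which has $T_m = R_m k_1 k_2 = 4^m k_1 k_2$ iterations and learning rate $\gamma_m = \gamma_*/2^m$. Writing $C = (1-\alpha_1)^2\alpha_2^2 + 2(1-\alpha_2)^2 + \frac{4}{3}\alpha_1^2(1-\alpha_2)^2 k_2^2$ and multiplying the bound~(\ref{stat conv theorem:2 formula}) by $T_m$, the contribution of run $m$ is
\begin{gather*}
    Q_m := \sum_{t \in \text{run } m} \mathbb{E}[\lVert \nabla f(\theta_t)\rVert^2] \leq \frac{2\Delta}{\gamma_m \alpha_1\alpha_2} + \gamma_m \alpha_1\alpha_2 L\sigma^2 T_m + 2\gamma_m^2 L^2\sigma^2 k_1 C\, T_m.
\end{gather*}
Substituting $\gamma_m = \gamma_*/2^m$ and $T_m = 4^m k_1 k_2$, the first two terms become $\frac{2\Delta}{\gamma_* \alpha_1\alpha_2}\, 2^m$ and $\gamma_* \alpha_1\alpha_2 L\sigma^2 k_1 k_2\, 2^m$, both of order $2^m$, while the third term collapses to the $m$-independent constant $2\gamma_*^2 L^2\sigma^2 k_1^2 k_2 C$. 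Hence $Q_m \leq a\, 2^m + b$ for constants $a,b$ independent of $m$.

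Next I would sum over the completed runs $0, \ldots, M-1$, so that the geometric sum gives $\sum_{m=0}^{M-1} Q_m \leq a(2^M - 1) + bM$. The total iteration count after these runs is $T = k_1 k_2 \sum_{m=0}^{M-1} 4^m = \frac{(4^M-1)k_1 k_2}{3}$, hence $4^M = \Theta(T)$, which yields $2^M = \Theta(\sqrt{T})$ and $M = \Theta(\log T)$. Dividing, the leading $a\,2^M$ term produces the target rate while the $bM = \mathcal{O}(\log T)$ term is of lower order, giving
\begin{gather*}
    \frac{1}{T}\sum_{t=0}^{T-1} \mathbb{E}[\lVert \nabla f(\theta_t)\rVert^2] \leq \frac{a(2^M-1) + bM}{T} = \mathcal{O}\left(\frac{1}{\sqrt{T}}\right).
\end{gather*}

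The main obstacle, and the only point requiring care, is that an arbitrary $T = R k_1 k_2$ need not coincide with a run boundary: it may fall after only $R' \leq R_M$ rounds of the current run $M$. To handle this I would invoke the Main Lemma (Lemma~\ref{main lemma}) with $R'$ in place of $R$ on run $M$; since its telescoping argument only uses $\mathbb{E}[f(\theta_{R' k_1 k_2})] \geq f_{inf}$, the same per-run estimate holds with $T' = R' k_1 k_2$ substituted for $T_m$, and because $R' \leq R_M$ the partial-run contribution is again at most $a\,2^M + b$. Adding this single extra term changes the numerator only by $\mathcal{O}(2^M) = \mathcal{O}(\sqrt{T})$, so the conclusion is unaffected. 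The balance that makes the whole scheme work is precisely that the per-run error grows like $2^m$ whereas the per-run length grows like $4^m$, so their ratio decays like $1/\sqrt{T_m}$, and the restart schedule thus preserves the $\mathcal{O}(1/\sqrt{T})$ rate as $T \to \infty$ without needing $T$ to be fixed in advance.
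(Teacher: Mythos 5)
Your proposal is correct and follows essentially the same route as the paper: apply Theorem~\ref{theorem:2} to each run to get a per-run bound of the form $a\,2^m + b$, exploit that run lengths grow like $4^m$ while per-run errors grow like $2^m$, and sum geometrically. The only (minor) difference is the partial final run: you re-invoke the Main Lemma with $R' \leq R_M$ rounds, whereas the paper simply extends the sum to the end of run $M$ using non-negativity of the summands and bounds the full run $M$ by Theorem~\ref{theorem:2} — both are valid.
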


\begin{proof}
Let fix the number of rounds $R$ and consider a number of run $M$ at which round $R$ occurs. Such $M$ satisfies $\sum_{m=0}^{M-1} 4^m < R \leq \sum_{m=0}^{M} 4^m$. Denote

\begin{gather*}
    S_m = \sum_{t'=0}^{R_m k_1 k_2-1} \mathbb{E}[\lVert\nabla f(\theta_{t'})\rVert^2]
\end{gather*}

where $\{\theta_{t'}\}_{t'=0}^{R_m k_1 k_2-1}$ - sequence produced by 2-layers Lookahead, starting from the point $z_0$ and using learning rate $\gamma_m$. Then $\{\theta_{t'}\}_{t'=0}^{R_m k_1 k_2-1}$ coincides with the sequence produced by the $m$-th run of 2-layers Lookahead with restarts. By \autoref{theorem:2}, we have:

\begin{gather*}
    \frac{1}{k_1 k_2 R_m} S_m \leq \frac{b_0}{\sqrt{k_1 k_2 R_m}} + \frac{b_1}{k_1 k_2 R_m} = \frac{b_0}{2^m \sqrt{k_1 k_2}} + \frac{b_1}{4^m k_1 k_2}
\end{gather*}

where $b_0$ and $b_1$ - some constants, defined from (\ref{corrolary:1 formula}). Multiplying by $k_1 k_2 R_m = k_1 k_2 4^m$, we get:

\begin{gather*}
    S_m \leq c_0 2^m + c_1
\end{gather*}

where $c_0 = b_0 \sqrt{k_1 k_2}$, $c_1 = b_1$.

Now, using that $R\leq \sum_{m=0}^{M} 4^m =: B$, we can estimate:

\begin{gather*}
    \sum_{t=0}^{T - 1} \mathbb{E}[\lVert\nabla f(\theta_{t})\rVert^2] =
    \sum_{t=0}^{R k_1 k_2 - 1} \mathbb{E}[\lVert\nabla f(\theta_{t})\rVert^2] \leq 
    \sum_{t=0}^{B k_1 k_2 - 1} \mathbb{E}[\lVert\nabla f(\theta_{t})\rVert^2] = 
    \sum_{m=0}^{M} S_m \leq \\
    \leq \sum_{m=0}^{M} (c_0 2^m + c_1) = 
    c_0 (2^{M+1}-1) + c_1 (M+1)
\end{gather*}

Further, we have $R \geq 1 + \sum_{m=0}^{M-1} 4^m = 1 + \frac{4^{M}-1}{3} > \frac{4^{M}}{3}$. Thus, $12 R > 4^{M+1} \Rightarrow 2^{M+1} < 2 \sqrt{3R}$. Also, $R > \frac{4^{M}}{3} > 4^{M-1} \Rightarrow M-1 < \log_4(R)$. Substituting it in the last expression, we get:

\begin{gather*}
    c_0 (2^{M+1}-1) + c_1 (M+1) < 2 c_0 \sqrt{3R} + c_1 (\log_4(R) + 2)
\end{gather*}

Finally, we obtain:

\begin{gather*}
    \sum_{t=0}^{R k_1 k_2 - 1} \mathbb{E}[\lVert\nabla f(\theta_{t})\rVert^2] < 2 c_0 \sqrt{3R} + c_1 (\log_4(R) + 2)
\end{gather*}

Dividing back by normalization constant, we get:

\begin{gather*}
    \frac{1}{T} \sum_{t=0}^{T-1} \mathbb{E}[\lVert\nabla f(\theta_{t})\rVert^2] =
    \frac{1}{R k_1 k_2} \sum_{t=0}^{R k_1 k_2 - 1} \mathbb{E}[\lVert\nabla f(\theta_{t})\rVert^2] < 
    \frac{2 c_0 \sqrt{3R}}{R k_1 k_2} + \frac{c_1 (\log_4(R) + 2)}{R k_1 k_2} = 
    \frac{2 c_0 \sqrt{3}}{\sqrt{R} k_1 k_2} + \frac{c_1 (\log_4(R) + 2)}{R k_1 k_2} = \\
    = \mathcal{O}(\frac{1}{\sqrt{R}}) =
    \mathcal{O}(\frac{1}{\sqrt{T}})
\end{gather*}

\end{proof}

\subsection{Proof of Theorem \ref{theorem:impl reg LA}} \label{proof theorem 4}

Denote by $LA^{(n)}(y_0, \{r, k_n, \ldots, k_1\}, \{\alpha_n, \ldots, \alpha_1\})$ the output of the $n$-layers Lookahead with parameters $k = (k_1, \ldots, k_n)$, $\alpha = (\alpha_1, \ldots, \alpha_n)$, which starts from the point $y_0$ and performs $r$ rounds using SGD as the base optimizer.

\begin{lemma}
For $y_r = LA^{(n)}(y_0, \{r, k_n, \ldots, k_1\}, \{\alpha_n, \ldots, \alpha_1\})$, we have:

\begin{gather}
    y_r - y_0 = -\gamma \alpha_{n} \ldots \alpha_1 k_{n} \ldots k_1 \sum_{i=0}^{r-1} \nabla f_i^{(k_{n} \ldots k_1)} (y_0) + \mathcal{O}(\gamma^2) \label{generalization: 24}
\end{gather}
\end{lemma}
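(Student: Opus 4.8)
The plan is to prove the expansion by induction on the number of layers $n$, peeling off the outermost Lookahead layer (the one governed by $\alpha_n$ and $k_n$) at each step and invoking the recursive definition that $LA^{(n)}$ is a Lookahead wrapped around $LA^{(n-1)}$. Throughout I would use that, over a fixed number of rounds, the total displacement of every weight is $\mathcal{O}(\gamma)$, so that by smoothness of $f$ (and of each mini-batch function $f_i$) any gradient evaluated at an intermediate iterate equals its value at $y_0$ up to an $\mathcal{O}(\gamma)$ correction; multiplied by the leading factor of $\gamma$, such a correction contributes only to the $\mathcal{O}(\gamma^2)$ remainder. This ``freezing at $y_0$'' is the mechanism that linearizes the whole recursion.

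For the base case $n=0$ (plain SGD with step size $\gamma$), the update $x_{j+1}=x_j-\gamma\nabla f_j(x_j)$ gives, after freezing each gradient at $y_0$, $x_r-x_0=-\gamma\sum_{i=0}^{r-1}\nabla f_i(y_0)+\mathcal{O}(\gamma^2)$, which is exactly the claimed formula with the empty products $\alpha_n\ldots\alpha_1=1$, $k_n\ldots k_1=1$ and $f_i^{(1)}=f_i$. (Equivalently, one may take $n=1$ as the base and compute a single Lookahead round directly, the synchronization $y_1=(1-\alpha_1)y_0+\alpha_1 x_{k_1}$ scaling the inner displacement by $\alpha_1$.)

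For the inductive step, write $K=k_n\ldots k_1$ and $K'=k_{n-1}\ldots k_1$, so $K=k_nK'$. One round of the $n$-layer optimizer starting from a weight $w$ consists of running $k_n$ rounds of $LA^{(n-1)}$ from $w$ to obtain an inner output $v$, followed by the synchronization $w'=(1-\alpha_n)w+\alpha_n v$. Applying the induction hypothesis to this inner run of $k_n$ rounds expresses $v-w$ to leading order as a sum of the level-$(n-1)$ block functions; the crucial bookkeeping observation is that the $k_n$ inner sub-blocks of size $K'$ exactly tile the $K$ mini-batches processed during the current outer round, so that the aggregated signal collapses via $K'\sum_{s=0}^{k_n-1}\nabla(\text{$s$-th inner block})(w)=\sum_{l=0}^{K-1}\nabla f_{iK+l}(w)=K\,\nabla f_i^{(K)}(w)$. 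This yields the per-round inner displacement $v-w=-\gamma\,\alpha_{n-1}\ldots\alpha_1\,K\,\nabla f_i^{(K)}(w)+\mathcal{O}(\gamma^2)$, and multiplying by $\alpha_n$ through the outer synchronization gives $w'-w=-\gamma\,\alpha_n\ldots\alpha_1\,K\,\nabla f_i^{(K)}(w)+\mathcal{O}(\gamma^2)$.

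Finally I would sum this per-round displacement over the $r$ outer rounds. Since each round moves the outer weight by $\mathcal{O}(\gamma)$ and $r$ is fixed, every intermediate outer weight lies within $\mathcal{O}(\gamma)$ of $y_0$, so each $\nabla f_i^{(K)}(w)$ may again be frozen at $y_0$ at the cost of an $\mathcal{O}(\gamma^2)$ term, producing
\[
y_r-y_0=-\gamma\,\alpha_n\ldots\alpha_1\,k_n\ldots k_1\sum_{i=0}^{r-1}\nabla f_i^{(k_n\ldots k_1)}(y_0)+\mathcal{O}(\gamma^2),
\]
which is the assertion. The main obstacle I anticipate is purely in the careful bookkeeping: tracking how the aggregated block functions at level $n-1$ recombine into the level-$n$ block function $f_i^{(K)}$, and verifying that all the freezing operations and the telescoping of the $\mathcal{O}(\gamma^2)$ remainders are uniform over the fixed number of rounds. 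The analytic content beyond this is just a first-order Taylor expansion together with the linearity, to leading order, of the convex-combination synchronization.
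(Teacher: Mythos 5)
Your proposal is correct and follows essentially the same route as the paper's own proof: induction on the number of layers with SGD as the base case, peeling the outer synchronization step to extract the factor $\alpha_n$, applying the induction hypothesis to the inner runs, recombining the inner block gradients into $\nabla f_i^{(k_n\ldots k_1)}$, and freezing all gradient arguments at $y_0$ at the cost of an $\mathcal{O}(\gamma^2)$ remainder. The only cosmetic difference is that you phrase the step as $n-1 \Rightarrow n$ while the paper writes it as $n \Rightarrow n+1$.
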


\begin{proof}
Base: $n = 0$ ($LA^{(n)}$ degenerates to SGD):

\begin{gather*}
    y_r - y_0 = \sum_{i=0}^{r-1} (y_{i+1} - y_i) = - \gamma \sum_{i=0}^{r-1} \nabla f_i(y_i) = - \gamma \sum_{i=0}^{r-1} \nabla f_i(y_0) + \mathcal{O}(\gamma^2)
\end{gather*}

Induction step: suppose the statement holds for $LA^{(n)}$, Let us prove it for $y_r = LA^{(n+1)}(y_0, \{r, k_{n+1}, \ldots, k_1\}, \{\alpha_{n+1}, \ldots, \alpha_1\})$.

\begin{gather}
    \nonumber y_r - y_0 =
    \sum_{i=0}^{r-1} (y_{i+1}-y_i) =
    \sum_{i=0}^{r-1} \Big( (1-\alpha_{n+1}) y_i + \alpha_{n+1} x_{i, k_{n+1}} - y_i \Big) =
    \sum_{i=0}^{r-1} \alpha_{n+1} (x_{i,k_{n+1}} - y_i) = \\
    = \alpha_{n+1} \sum_{i=0}^{r-1} (x_{i, k_{n+1}} - x_{i,0})
\end{gather}

Now, using the induction hypothesis for $x_{i,k_{n+1}} = LA^{(n)}(x_{i,0}, \{k_{n+1}, k_n, \ldots, k_1\}, \{\alpha_n, \ldots, \alpha_1\})$, we can continue:

\begin{gather}
    \nonumber y_r - y_0 = 
    \alpha_{n+1} \sum_{i=0}^{r-1} (-\gamma \alpha_{n} \ldots \alpha_1 k_{n} \ldots k_1 \sum_{j=0}^{k_{n+1}-1} \nabla f_{i k_{n+1}+j}^{(k_{n} \ldots k_1)} (x_{i,0}) + \mathcal{O}(\gamma^2)) = \\
    \nonumber = - \gamma \alpha_{n+1} \ldots \alpha_1 k_{n} \ldots k_1 \sum_{i=0}^{r-1} \sum_{j=0}^{k_{n+1}-1} \nabla f_{i k_{n+1}+j}^{(k_{n} \ldots k_1)} (y_0) + \mathcal{O}(\gamma^2) = \\
    = - \gamma \alpha_{n+1} \ldots \alpha_1 k_{n+1} \ldots k_1 \sum_{i=0}^{r-1} \nabla f_{i}^{(k_{n+1} \ldots k_1)} (y_0) + \mathcal{O}(\gamma^2)
\end{gather}

which finishes the proof.

\end{proof}

\begin{lemma}

For $y_r = LA^{(n)}(y_0, \{r, k_n, \ldots, k_1\}, \{\alpha_n, \ldots, \alpha_1\})$, we have:

\begin{gather}
    \nonumber \mathbb{E}[y_r] = y_0 
    - \gamma \alpha_n \ldots \alpha_1 r k_n \ldots k_1 \nabla f(y_0)
    + \frac{\gamma^2 \alpha_n^2 \ldots \alpha_1^2 r^2 k_n^2 \ldots k_1^2}{4} \nabla \| \nabla f(y_0) \|^2 +\\
    \nonumber + \frac{\gamma^2 \alpha_n (1-\alpha_n) \alpha_{n-1}^2 \ldots \alpha_1^2 k_n^2 \ldots k_1^2}{4} \mathbb{E} \left[ \sum_{i=0}^{r-1} \nabla \| \nabla f_i^{(k_n \ldots k_1)} (y_0) \|^2 \right] + \\
    \nonumber +  \frac{\gamma^2 \alpha_n \alpha_{n-1} (1-\alpha_{n-1}) \alpha_{n-2}^2 \ldots \alpha_1^2 k_{n-1}^2 \ldots k_1^2}{4} \mathbb{E} \left[ \sum_{i=0}^{rk_n-1} \nabla \| \nabla f_i^{(k_{n-1}\ldots k_1)} \|^2 \right] + \\
    \nonumber + \ldots + \\
    \nonumber + \frac{\gamma^2 \alpha_n \ldots \alpha_2 \alpha_1 (1-\alpha_1)k_1^2}{4} \mathbb{E} \left[ \sum_{i=0}^{r k_n \ldots k_2-1} \nabla \|\nabla f_i^{(k_1)}(y_0)\|^2 \right] - \\
    - \frac{\gamma^2 \alpha_n \ldots \alpha_1}{4} \mathbb{E} \left[ \sum_{i=0}^{r k_n \ldots k_1-1} \nabla \|\nabla f_i(y_0)\|^2 \right]
    + \mathcal{O}(\gamma^3) \label{generalization: 31}
\end{gather}

\end{lemma}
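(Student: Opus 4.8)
The plan is to prove the expansion (\ref{generalization: 31}) by induction on the number of layers $n$, reusing the telescoping structure that produced the first-order expansion (\ref{generalization: 24}). For the base case $n=0$, where $LA^{(0)}$ is plain SGD, I would start from $y_r - y_0 = -\gamma\sum_{i=0}^{r-1}\nabla f_i(y_i)$ and Taylor-expand each $\nabla f_i(y_i)$ about $y_0$, inserting the first-order displacement $y_i - y_0 = -\gamma\sum_{j<i}\nabla f_j(y_0) + \mathcal{O}(\gamma^2)$. This gives $y_r - y_0 = -\gamma\sum_i\nabla f_i(y_0) + \gamma^2\sum_{i}\sum_{j<i}\nabla^2 f_i(y_0)\nabla f_j(y_0) + \mathcal{O}(\gamma^3)$. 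Taking the expectation over the random splitting (so that $\mathbb{E}[\nabla f_i(y_0)] = \nabla f(y_0)$ yields the first-order term) and symmetrizing the cross term via $\nabla^2 f_i\,\nabla f_j + \nabla^2 f_j\,\nabla f_i = \nabla\langle\nabla f_i,\nabla f_j\rangle$, the quadratic part becomes $\tfrac14\nabla\,\mathbb{E}[\sum_{i\neq j}\langle\nabla f_i,\nabla f_j\rangle]$. Using the epoch-level identity $\sum_i\nabla f_i = r\,\nabla f$ together with $\lVert\sum_i\nabla f_i\rVert^2 = \sum_i\lVert\nabla f_i\rVert^2 + \sum_{i\neq j}\langle\nabla f_i,\nabla f_j\rangle$, this collapses to exactly $\tfrac{\gamma^2 r^2}{4}\nabla\lVert\nabla f\rVert^2 - \tfrac{\gamma^2}{4}\mathbb{E}[\sum_i\nabla\lVert\nabla f_i\rVert^2]$, matching the claimed formula for $n=0$.

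For the inductive step I would use the recursive definition of the optimizer: one round of $LA^{(n+1)}$ obeys $y_{i+1} - y_i = \alpha_{n+1}(x_{i,k_{n+1}} - y_i)$ with $x_{i,k_{n+1}} = LA^{(n)}(y_i,\{k_{n+1},\ldots,k_1\},\{\alpha_n,\ldots,\alpha_1\})$, so summing gives $y_r - y_0 = \alpha_{n+1}\sum_{i=0}^{r-1}(x_{i,k_{n+1}} - y_i)$. I would then apply the induction hypothesis (with round count $k_{n+1}$ in place of $r$) to $\mathbb{E}[x_{i,k_{n+1}}\mid y_i] - y_i$. Because the result is only needed to order $\gamma^2$, every explicit $\gamma^2$-term from the hypothesis may be evaluated at $y_0$ directly; only the first-order drift $-\gamma\,\alpha_n\cdots\alpha_1 k_{n+1}\cdots k_1\,\nabla f(y_i)$ must be carried one Taylor order further, substituting $y_i - y_0 = -\gamma\,\alpha_{n+1}\cdots\alpha_1 k_{n+1}\cdots k_1\sum_{j<i}\nabla f_j^{(k_{n+1}\cdots k_1)}(y_0) + \mathcal{O}(\gamma^2)$ from (\ref{generalization: 24}). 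This manufactures one new Hessian cross term $\gamma^2(\cdots)\,\nabla^2 f(y_0)\sum_{j<i}\nabla f_j^{(k_{n+1}\cdots k_1)}(y_0)$, whose expectation collapses to a multiple of $\tfrac12\nabla\lVert\nabla f\rVert^2$ since $\mathbb{E}[\nabla f_j^{(\cdots)}] = \nabla f$ and $\nabla^2 f\,\nabla f = \tfrac12\nabla\lVert\nabla f\rVert^2$.

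Two consolidations then close the step. First, the partial-gradient terms inherited from the hypothesis are local to each round, with sums $\sum_{i=0}^{k_{n+1}-1}$; re-indexing these across the $r$ outer rounds turns them into global sums $\sum_{i=0}^{rk_{n+1}-1}$, and after the factor $\alpha_{n+1}$ they reproduce verbatim the layer-$n,\ldots,1$ terms of the target formula for $LA^{(n+1)}$. Second, the leading $\nabla\lVert\nabla f\rVert^2$ contribution inherited from the hypothesis (carrying a factor $\alpha_{n+1}r$) must be merged with the new Hessian cross term (carrying $\alpha_{n+1}^2 r(r-1)$) through the algebraic identity $\alpha_{n+1}r + \alpha_{n+1}^2 r(r-1) = \alpha_{n+1}^2 r^2 + \alpha_{n+1}(1-\alpha_{n+1})r$; the first summand is the updated leading coefficient $\propto(\alpha_{n+1}\cdots\alpha_1)^2 r^2$, while the residual $\alpha_{n+1}(1-\alpha_{n+1})r$ is precisely the prefactor of the new layer-$(n+1)$ regularizer. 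The step I expect to be most delicate is this final consolidation: one must simultaneously re-index the three nested levels (rounds, inner blocks, mini-batches) consistently and, through the sampling structure, justify that the $(1-\alpha_{n+1})$-weighted residual is exactly $\mathbb{E}[\sum_{i=0}^{r-1}\nabla\lVert\nabla f_i^{(k_{n+1}\cdots k_1)}\rVert^2]$ rather than merely $r\,\nabla\lVert\nabla f\rVert^2$, i.e. that the variance content of the block gradients is restored at each recursion level in the same way as in the base case.
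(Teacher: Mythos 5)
Your overall strategy---induction on $n$, writing $y_r - y_0 = \alpha_{n+1}\sum_{i=0}^{r-1}(x_{i,k_{n+1}}-y_i)$, applying the hypothesis within each round, and Taylor-expanding only the first-order drift with the displacement from (\ref{generalization: 24})---is exactly the paper's. But your inductive step has a genuine gap, and it is precisely the point you flag at the end as ``delicate'': with your bookkeeping, the residual can only come out as $r\,\nabla\|\nabla f(y_0)\|^2$, never as the required $\mathbb{E}\big[\sum_{i=0}^{r-1}\nabla\|\nabla f_i^{(k_{n+1}\ldots k_1)}(y_0)\|^2\big]$, and no re-indexing afterwards can repair this. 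The error enters in two linked places. First, the drift that the induction hypothesis delivers for round $i$ is $-\gamma\,\alpha_n\ldots\alpha_1 k_{n+1}\ldots k_1\,\nabla f_i^{(k_{n+1}\ldots k_1)}(x_{i,0})$---the gradient of the \emph{block} function seen by that round---not $\nabla f(y_i)$ as you wrote; likewise the inherited leading second-order term is $\nabla\|\nabla f_i^{(k_{n+1}\ldots k_1)}\|^2$, not $\nabla\|\nabla f\|^2$. Second, when you evaluate the new Hessian cross terms by ``$\mathbb{E}[\nabla f_j^{(\cdots)}]=\nabla f$, hence each pair contributes $\nabla^2 f\,\nabla f = \tfrac12\nabla\|\nabla f\|^2$,'' you are treating distinct blocks as if they were independent. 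They are not: the blocks partition the mini-batches of one epoch (sampling without replacement), so $\mathbb{E}\big[\nabla^2 f_i^{(\cdots)}\nabla f_j^{(\cdots)}\big]\neq\nabla^2 f\,\nabla f$ in general. The independence shortcut discards exactly the variance content of the block gradients, which is the substance of the theorem (it is what separates the $AN$ and $AI$ regularizers downstream).

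The fix is the move you already made correctly in your base case, lifted to block level; it is what the paper does in (\ref{generalization: 28}). Keep every gradient as a block gradient, write the cross term as $\mathbb{E}\big[\sum_{i=0}^{r-1}\sum_{j<i}\nabla^2 f_i^{(k_{n+1}\ldots k_1)}(y_0)\,\nabla f_j^{(k_{n+1}\ldots k_1)}(y_0)\big]$, symmetrize it to $\tfrac12\mathbb{E}\big[\sum_{i\neq j}\cdots\big]$, and then use the \emph{deterministic} partition identities $\sum_{i=0}^{r-1}\nabla f_i^{(k_{n+1}\ldots k_1)} = r\nabla f$ and $\sum_{i=0}^{r-1}\nabla^2 f_i^{(k_{n+1}\ldots k_1)} = r\nabla^2 f$ to rewrite the off-diagonal sum as full square minus diagonal:
\begin{gather*}
\frac12\,\mathbb{E}\Big[\,r^2\nabla^2 f(y_0)\nabla f(y_0) - \sum_{i=0}^{r-1}\nabla^2 f_i^{(k_{n+1}\ldots k_1)}(y_0)\,\nabla f_i^{(k_{n+1}\ldots k_1)}(y_0)\Big] = \\
= \frac{r^2}{4}\nabla\|\nabla f(y_0)\|^2 - \frac14\,\mathbb{E}\Big[\sum_{i=0}^{r-1}\nabla\|\nabla f_i^{(k_{n+1}\ldots k_1)}(y_0)\|^2\Big]
\end{gather*}
The first piece gives the $r^2$ leading term. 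The diagonal correction enters with coefficient $-(\alpha_{n+1}\ldots\alpha_1)^2$ (from the product of the two drift prefactors), and combines with the inherited per-round leading terms, which carry $+\alpha_{n+1}\alpha_n^2\ldots\alpha_1^2$ and are themselves block-gradient norms, to produce exactly the coefficient $\alpha_{n+1}(1-\alpha_{n+1})\alpha_n^2\ldots\alpha_1^2$ of $\mathbb{E}\big[\sum_{i}\nabla\|\nabla f_i^{(k_{n+1}\ldots k_1)}\|^2\big]$ in (\ref{generalization: 31}); your identity $\alpha r + \alpha^2 r(r-1) = \alpha^2 r^2 + \alpha(1-\alpha)r$ is then not needed. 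With this replacement your induction closes; without it, it does not.
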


\begin{proof}

Let us prove by induction by the number of layers $n$. 

Base ($n = 0$). For the corresponding result for SGD, see formula ($18$) in \citet{smith2021origin}.

Induction step ($n \Rightarrow n+1$). 

\begin{gather}
    \nonumber y_r - y_0 =
    \sum_{i=0}^{r-1} (y_{i+1}-y_i) =
    \sum_{i=0}^{r-1} \Big( (1-\alpha_{n+1}) y_i + \alpha_{n+1} x_{i, k_{n+1}} - y_i \Big) =
    \sum_{i=0}^{r-1} \alpha_{n+1} (x_{i,k_{n+1}} - y_i) = \\
    = \alpha_{n+1} \sum_{i=0}^{r-1} (x_{i, k_{n+1}} - x_{i,0})
\end{gather}

Taking the expectation of both sides, we get:

\begin{gather}
    \mathbb{E}[y_r] - y_0 = \alpha_{n+1} \sum_{i=0}^{r-1} (\mathbb{E}[x_{i, k_{n+1}}] - \mathbb{E}[x_{i,0}])
\end{gather}

Recall, one epoch of $LA^{(n+1)}$ contains $r$ rounds, and thus $r$ calls of $LA^{(n)}$ for updating its inner variable $x$. For each round $i$, denote by $\mathbb{E}_n$ the expectation w.t.r. the randomness inside of this round. In other words, we consider the input $x_{i,0}$ and the set of samples used during $i$-th round of $LA^{(n+1)}$ to be fixed, and the expectation is taken across all possible shuffles of these samples across the mini-bathes $\{f_{i k_n \ldots k_1 + j}\}_{j=0}^{k_n \ldots k_1-1}$.

By the induction hypothesis, applied to $x_{i, k_{n+1}} = LA^{(n)}(x_{i,0}, \{k_{n+1}, \ldots, k_1\}, \{\alpha_n, \ldots, \alpha_1\})$, for every $i \in \{0, \ldots, r-1\}$ we get:

\begin{gather}
    \nonumber \mathbb{E}_n[x_{i, k_{n+1}}] - x_{i,0} =
    - \gamma \alpha_n \ldots \alpha_1 k_{n+1} \ldots k_1 \nabla f_i^{(k_{n+1}\ldots k_1)}(x_{i,0}) +\\
    \nonumber + \frac{\gamma^2 \alpha_n^2 \ldots \alpha_1^2 k_{n+1}^2 \ldots k_1^2}{4} \nabla \|\nabla f_i^{(k_{n+1} \ldots k_1)}(x_{i,0}) \|^2 +\\
    \nonumber + \frac{\gamma^2 \alpha_n(1-\alpha_n) \alpha_{n-1}^2 \ldots \alpha_1^2 k_n^2 \ldots k_1^2}{4} \mathbb{E}_n \left[ \sum_{j=0}^{k_{n+1}-1} \nabla \| \nabla f_{i k_{n+1}+j}^{(k_n \ldots k_1)}(x_{i,0}) \|^2 \right] + \\
    \nonumber + \frac{\gamma^2 \alpha_n \alpha_{n-1} (1-\alpha_{n-1}) \alpha_{n-2}^2 \ldots \alpha_1^2 k_{n-1}^2 \ldots k_1^2}{4} \mathbb{E}_n \left[ \sum_{j=0}^{k_{n+1}k_n-1} \nabla \|\nabla f_{i k_{n+1} k_n +j}^{(k_{n-1}\ldots k_1)}(x_{i,0)} \|^2 \right] + \\
    \nonumber + \ldots + \\
    \nonumber + \frac{\gamma^2 \alpha_n \ldots \alpha_2 \alpha_1 (1-\alpha_1) k_1^2}{4} \mathbb{E}_n \left[ \sum_{j=0}^{k_{n+1}\ldots k_2-1} \nabla \|\nabla f_{i k_{n+1} \ldots k_2 +j}^{(k_1)}(x_{i,0})  \|^2 \right] -\\
    - \frac{\gamma^2 \alpha_n \ldots \alpha_1}{4} \mathbb{E}_n \left[ \sum_{j=0}^{k_{n+1} \ldots k_1-1} \nabla \| \nabla f_{i k_{n+1} \ldots k_1 +j}(x_{i,0}) \|^2 \right]
    + \mathcal{O}(\gamma^3)
\end{gather}

Summing up for $i = 0 \ldots r-1$, multiplying by $\alpha_{n+1}$, taking the total expectation, and using the law of total expectation ($\mathbb{E}[\mathbb{E}_n[\ldots]] = \mathbb{E}[\ldots]$), we get:

\begin{gather}
    \nonumber \mathbb{E}[y_r] - y_0 =
    - \gamma \alpha_{n+1} \ldots \alpha_1 k_{n+1} \ldots k_1 \mathbb{E} \left[ \sum_{i=0}^{r-1} \nabla f_i^{(k_{n+1} \ldots k_1)}(x_{i,0}) \right] + \\
    \nonumber + \frac{\gamma^2 \alpha_{n+1} \alpha_n^2 \ldots \alpha_1^2 k_{n+1}^2 \ldots k_1^2}{4} \mathbb{E} \left[ \sum_{i=0}^{r-1} \nabla \|\nabla f_i^{(k_{n+1} \ldots k_1)}(x_{i,0}) \|^2 \right]+\\
    \nonumber + \frac{\gamma^2 \alpha_{n+1} \alpha_n(1-\alpha_n) \alpha_{n-1}^2 \ldots \alpha_1^2 k_n^2 \ldots k_1^2}{4} \mathbb{E} \left[ \sum_{i=0}^{r-1} \sum_{j=0}^{k_{n+1}-1} \nabla \| \nabla f_{i k_{n+1}+j}^{(k_n \ldots k_1)}(x_{i,0}) \|^2 \right] + \\
    \nonumber + \ldots + \\
    \nonumber + \frac{\gamma^2 \alpha_{n+1} \ldots \alpha_2 \alpha_1 (1-\alpha_1) k_1^2}{4} \mathbb{E} \left[ \sum_{i=0}^{r-1} \sum_{j=0}^{k_{n+1}\ldots k_2-1} \nabla \|\nabla f_{i k_{n+1} \ldots k_2 +j}^{(k_1)}(x_{i,0})  \|^2 \right] -\\
    - \frac{\gamma^2 \alpha_{n+1} \ldots \alpha_1}{4} \mathbb{E} \left[ \sum_{i=0}^{r-1} \sum_{j=0}^{k_{n+1} \ldots k_1-1} \nabla \| \nabla f_{i k_{n+1} \ldots k_1 +j}(x_{i,0}) \|^2 \right]
    + \mathcal{O}(\gamma^3)
\end{gather}

Combining the sums over $i$ and $j$, and using that $x_{i,0} = y_0 + \mathcal{O}(\gamma)$, so that we can replace the function argument $x_{i,0}$ by $y_0$ in all second-order terms, we can rewrite the last formula in the following way: 

\begin{gather}
    \nonumber \mathbb{E}[y_r] - y_0 =
    - \gamma \alpha_{n+1} \ldots \alpha_1 k_{n+1} \ldots k_1 \mathbb{E} \left[ \sum_{i=0}^{r-1} \nabla f_i^{(k_{n+1} \ldots k_1)}(x_{i,0}) \right] + \\
    \nonumber + \frac{\gamma^2 \alpha_{n+1} \alpha_n^2 \ldots \alpha_1^2 k_{n+1}^2 \ldots k_1^2}{4} \mathbb{E} \left[ \sum_{i=0}^{r-1} \nabla \|\nabla f_i^{(k_{n+1} \ldots k_1)}(y_0) \|^2 \right] +\\
    \nonumber + \frac{\gamma^2 \alpha_{n+1} \alpha_n(1-\alpha_n) \alpha_{n-1}^2 \ldots \alpha_1^2 k_n^2 \ldots k_1^2}{4} \mathbb{E} \left[ \sum_{i=0}^{r k_{n+1}-1} \nabla \| \nabla f_{i}^{(k_n \ldots k_1)}(y_0) \|^2 \right] + \\
    \nonumber + \ldots + \\
    \nonumber + \frac{\gamma^2 \alpha_{n+1} \ldots \alpha_2 \alpha_1 (1-\alpha_1) k_1^2}{4} \mathbb{E} \left[ \sum_{i=0}^{r k_{n+1}\ldots k_2-1} \nabla \|\nabla f_{i}^{(k_1)}(y_0)  \|^2 \right] -\\
    - \frac{\gamma^2 \alpha_{n+1} \ldots \alpha_1}{4} \mathbb{E} \left[ \sum_{i=0}^{r k_{n+1} \ldots k_1-1} \nabla \| \nabla f_{i}(y_0) \|^2 \right]
    + \mathcal{O}(\gamma^3) \label{generalization: 30}
\end{gather}

Now let us transform the term $\mathbb{E} \left[ \sum_{i=0}^{r-1} \nabla f_i^{(k_{n+1} \ldots k_1)}(x_{i,0}) \right]$ to get rid of the dependency on $x_{i,0}$.

\begin{gather}
    \nabla f_i^{(k_{n+1} \ldots k_1)}(x_{i,0}) =
    \nabla f_i^{(k_{n+1} \ldots k_1)}(y_i) =
    \nabla f_i^{(k_{n+1} \ldots k_1)}(y_0)
    + \nabla^2 f_i^{(k_{n+1} \ldots k_1)}(y_0) (y_i - y_0)
    + \mathcal{O}(\gamma^2)
\end{gather}

Substituting $r=i$, $n=n+1$ in (\ref{generalization: 24}) to express $y_i - y_0$, we can continue:

\begin{gather}
    \nonumber \nabla f_i^{(k_{n+1} \ldots k_1)}(x_{i,0}) =
    \nabla f_i^{(k_{n+1} \ldots k_1)}(y_0) - \\
    - \gamma \alpha_{n+1} \ldots \alpha_1 k_{n+1} \ldots k_1 \sum_{j=0}^{i-1} \nabla^2 f_i^{(k_{n+1} \ldots k_1)}(y_0) \nabla f_j^{(k_{n+1} \ldots k_1)}(y_0)
    + \mathcal{O}(\gamma^2) \label{generalization: 25}
\end{gather}

Summing (\ref{generalization: 25}) for $i=0 \ldots r-1$ and taking the expectation, we get:

\begin{gather}
    \nonumber \mathbb{E} \left[ \sum_{i=0}^{r-1} \nabla f_i^{(k_{n+1} \ldots k_1)}(x_{i,0}) \right] =
    \mathbb{E} \left[ \sum_{i=0}^{r-1} \nabla f_i^{(k_{n+1} \ldots k_1)}(y_0) \right] - \\
    - \gamma \alpha_{n+1} \ldots \alpha_1 k_{n+1} \ldots k_1 \mathbb{E} \left[ \sum_{i=0}^{r-1} \sum_{j=0}^{i-1} \nabla^2 f_i^{(k_{n+1} \ldots k_1)}(y_0) \nabla f_j^{(k_{n+1} \ldots k_1)}(y_0) \right]
    + \mathcal{O}(\gamma^2) \label{generalization: 26}
\end{gather}

Let us simplify the right-hand side.

\begin{gather}
    \mathbb{E} \left[ \sum_{i=0}^{r-1} \nabla f_i^{(k_{n+1} \ldots k_1)}(y_0) \right] =
    \mathbb{E} \left[ r \nabla f(y_0) \right] =
    r \nabla f(y_0) \label{generalization: 27}
\end{gather}

\begin{gather}
    \nonumber \mathbb{E} \left[ \sum_{i=0}^{r-1} \sum_{j=0}^{i-1} \nabla^2 f_i^{(k_{n+1} \ldots k_1)}(y_0) \nabla f_j^{(k_{n+1} \ldots k_1)}(y_0) \right] =
    \frac{1}{2} \mathbb{E} \left[ \sum_{0 \leq i\neq j \leq r-1} \nabla^2 f_i^{(k_{n+1} \ldots k_1)}(y_0) \nabla f_j^{(k_{n+1} \ldots k_1)}(y_0) \right] = \\
    \nonumber = \frac{1}{2} \mathbb{E} \left[ \sum_{i=0}^{r-1} \nabla^2 f_i^{(k_{n+1} \ldots k_1)}(y_0) \sum_{j=0}^{r-1} \nabla f_j^{(k_{n+1} \ldots k_1)}(y_0) - \sum_{i=0}^{r-1} \nabla^2 f_i^{(k_{n+1} \ldots k_1)}(y_0) \nabla f_i^{(k_{n+1} \ldots k_1)}(y_0)\right] = \\
    \nonumber = \frac{1}{2} \mathbb{E} \left[ r^2 \nabla^2 f(y_0) \nabla f(y_0) - \sum_{i=0}^{r-1} \nabla^2 f_i^{(k_{n+1} \ldots k_1)}(y_0) \nabla f_i^{(k_{n+1} \ldots k_1)}(y_0) \right] = \\
    = \frac{r^2}{4} \nabla \| \nabla f(y_0) \|^2 - \frac{1}{4} \mathbb{E} \left[ \sum_{i=0}^{r-1} \nabla \|\nabla f_i^{(k_{n+1} \ldots k_1)}(y_0)\|^2 \right]
    \label{generalization: 28}
\end{gather}

Substituting (\ref{generalization: 27}) and (\ref{generalization: 28}) in (\ref{generalization: 26}), we obtain:

\begin{gather}
    \nonumber \mathbb{E} \left[ \sum_{i=0}^{r-1} \nabla f_i^{(k_{n+1} \ldots k_1)}(x_{i,0}) \right] =
    r \nabla f(y_0)
    - \frac{\gamma \alpha_{n+1} \ldots \alpha_1 r^2 k_{n+1} \ldots k_1}{4} \nabla \| \nabla f(y_0) \|^2 + \\
    + \frac{\gamma \alpha_{n+1} \ldots \alpha_1 k_{n+1} \ldots k_1}{4} \mathbb{E} \left[ \sum_{i=0}^{r-1} \nabla \|\nabla f_i^{(k_{n+1} \ldots k_1)}(y_0)\|^2 \right]
    \label{generalization: 29}
\end{gather}

Finally, substituting (\ref{generalization: 29}) in (\ref{generalization: 30}) and combining similar terms, we get the desired formula for $\mathbb{E}[y_r] - y_0$ for $(n+1)$-layers Lookahead.

\end{proof}

\begin{proof}[Proof of \autoref{theorem:impl reg LA}]

As before, let $m = r k_n \ldots k_1$, where $r$ - number of rounds of $n$-layers Lookahead per epoch, and denote $\beta = \alpha_n \ldots \alpha_1$. 
Then, we can rewrite the formula (\ref{generalization: 31}) in more compact way:

\begin{gather}
    \nonumber \mathbb{E}[y_r] = y_0
    - \gamma \beta m \nabla f(y_0)
    + \frac{\gamma^2 \beta^2 m^2}{4} \nabla \| \nabla f(y_0) \|^2 + \\
    \nonumber + \sum_{p=1}^{n} \frac{\gamma^2 \alpha_n \ldots \alpha_p (1 - \alpha_p) \alpha_{p-1}^2 \ldots \alpha_1^2 k_p^2 \ldots k_1^2}{4}
    \mathbb{E} \left[ \sum_{i=0}^{r k_n \ldots k_{p+1}-1} \nabla \| \nabla f_i^{(k_p \ldots k_1)} (y_0) \|^2 \right] - \\
    - \frac{\gamma^2 \beta}{4} \mathbb{E} \left[ \sum_{i=0}^{m-1} \nabla \| \nabla f_i(y_0) \|^2 \right]
    + \mathcal{O}(\gamma^3) \label{generalization: 32}
\end{gather}

Let us express $\mathbb{E} \left[ \sum_{i=0}^{r k_n \ldots k_{p+1}-1} \nabla \| \nabla f_i^{(k_p \ldots k_1)} (y_0) \|^2 \right]$ in terms of $ANG(y_0)$ and $AIG(y_0)$.

\begin{gather}
    \nonumber \mathbb{E} \left[ \sum_{i=0}^{r k_n \ldots k_{p+1}-1} \nabla \| \nabla f_i^{(k_p \ldots k_1)} (y_0) \|^2 \right] =
    r k_n \ldots k_{p+1} \mathbb{E} \left[ \nabla \| \nabla f_0^{(k_p \ldots k_1)} (y_0) \|^2 \right] = \\
    \nonumber = r k_n \ldots k_{p+1}\frac{1}{k_p^2 \ldots k_1^2} \mathbb{E} \left[ \nabla \| \sum_{j=0}^{k_p \ldots k_1-1} \nabla f_j(y_0) \|^2 \right] = \\
    \nonumber = r k_n \ldots k_{p+1}\frac{1}{k_p^2 \ldots k_1^2} 
    \mathbb{E} \left[ \sum_{j=0}^{k_p \ldots k_1-1} \nabla \| \nabla f_j(y_0) \|^2 + 
    \sum_{0\leq j \neq j' \leq k_p \ldots k_1-1} \nabla \langle \nabla f_j(y_0), \nabla f_{j'}(y_0) \rangle \right] = \\
    \nonumber = r k_n \ldots k_{p+1}\frac{1}{k_p^2 \ldots k_1^2} \left( k_p \ldots k_1 ANG(y_0) + k_p \ldots k_1 (k_p \ldots k_1-1) AIG(y_0) \right) = \\
    \nonumber = r k_n \ldots k_{p+1}\frac{1}{k_p \ldots k_1} \left( ANG(y_0) + (k_p \ldots k_1-1) AIG(y_0) \right) = \\
    = \frac{r^2 k_n^2 \ldots k_{p+1}^2}{m} \left( ANG(y_0) + (k_p \ldots k_1-1) AIG(y_0) \right) \label{generalization: 33}
\end{gather}

Now, combining expression (\ref{generalization: 33}) for $ \mathbb{E} \left[ \sum_{i=0}^{r k_n \ldots k_{p+1}-1} \nabla \| \nabla f_i^{(k_p \ldots k_1)} (y_0) \|^2 \right]$ with the coefficient before this term in (\ref{generalization: 32}), we obtain:

\begin{gather}
    \nonumber \frac{\gamma^2 \alpha_n \ldots \alpha_p (1 - \alpha_p) \alpha_{p-1}^2 \ldots \alpha_1^2 k_p^2 \ldots k_1^2}{4}
    \mathbb{E} \left[ \sum_{i=0}^{r k_n \ldots k_{p+1}-1} \nabla \| \nabla f_i^{(k_p \ldots k_1)} (y_0) \|^2 \right] = \\
    \nonumber = \frac{\gamma^2 \alpha_n \ldots \alpha_p (1 - \alpha_p) \alpha_{p-1}^2 \ldots \alpha_1^2 k_p^2 \ldots k_1^2}{4} \cdot
    \frac{r^2 k_n^2 \ldots k_{p+1}^2}{m} \left( ANG(y_0) + (k_p \ldots k_1-1) AIG(y_0) \right) = \\
    = \frac{\gamma^2 \beta m}{4} (1-\alpha_p) \alpha_{p-1} \ldots \alpha_1 \left( ANG(y_0) + (k_p \ldots k_1-1) AIG(y_0) \right) \label{generalization: 34}
\end{gather}

For the last term in (\ref{generalization: 32}), directly by the definition of $ANG(y)$ we get

\begin{gather}
    \frac{\gamma^2 \beta}{4} \mathbb{E} \left[ \sum_{i=0}^{m-1} \nabla \| \nabla f_i(y_0) \|^2 \right] =
    \frac{\gamma^2 \beta m}{4} ANG(y_0) \label{generalization: 35}
\end{gather}

Substituting (\ref{generalization: 34}) and (\ref{generalization: 35}) in (\ref{generalization: 32}), we have:

\begin{gather}
    \nonumber \mathbb{E}[y_r] = y_0
    - \gamma \beta m \nabla f(y_0)
    + \frac{\gamma^2 \beta^2 m^2}{4} \nabla \| \nabla f(y_0) \|^2 + \\
    \nonumber + \frac{\gamma^2 \beta m}{4} \sum_{p=1}^n (1-\alpha_p) \alpha_{p-1} \ldots \alpha_1 \left( ANG(y_0) + (k_p \ldots k_1-1) AIG(y_0) \right) - \\
    - \frac{\gamma^2 \beta m}{4} ANG(y_0)
    + \mathcal{O}(\gamma^3) \label{generalization: 36}
\end{gather}

Let us collect the coefficients before $ANG(y_0)$ and $AIG(y_0)$.

For $ANG(y_0)$: $- \frac{\gamma^2 \beta m}{4} + \frac{\gamma^2 \beta m}{4} \sum_{p=1}^n (1 - \alpha_p) \alpha_{p-1} \ldots \alpha_1 = - \frac{\gamma^2 \beta m}{4} + \frac{\gamma^2 \beta m}{4} \sum_{p=1}^n (\alpha_{p-1} \ldots \alpha_1 - \alpha_p \ldots \alpha_1)$ = $- \frac{\gamma^2 \beta m}{4} + \frac{\gamma^2 \beta m}{4} (1 - \alpha_n \ldots \alpha_1)$ = $- \frac{\gamma^2 \beta m}{4} + \frac{\gamma^2 \beta m}{4} (1 - \beta)$ = $- \frac{\gamma^2 \beta m}{4} + \frac{\gamma^2 \beta m}{4} - \frac{\gamma^2 \beta^2 m}{4}$ = $- \frac{\gamma^2 \beta^2 m}{4}$.

For $AIG(y_0)$: $\frac{\gamma^2 \beta m}{4} \sum_{p=1}^n (1-\alpha_p) \alpha_{p-1} \ldots \alpha_1 (k_p \ldots k_1-1)$.

Hence, (\ref{generalization: 36}) can be simplified in the following way:

\begin{gather}
    \nonumber \mathbb{E}[y_r] = y_0
    - \gamma \beta m \nabla f(y_0)
    + \frac{\gamma^2 \beta^2 m^2}{4} \nabla \| \nabla f(y_0) \|^2 - \\
    - \frac{\gamma^2 \beta^2 m}{4} ANG(y_0)
    + \frac{\gamma^2 \beta m}{4} \sum_{p=1}^n (1-\alpha_p) \alpha_{p-1} \ldots \alpha_1 (k_p \ldots k_1-1) AIG(y_0)
    + \mathcal{O}(\gamma^3) \label{generalization: 37}
\end{gather}

Now, substituting $\epsilon = \gamma \beta m$ in (\ref{generalization: 13}) (we instantly get $h(y) = - \nabla f(y)$ from equating the first order terms, so it was directly substituted in the formula below), we obtain that continuous solution of the ODE for the modified flow satisfies:

\begin{gather}
    y(\gamma \beta m) = y_0
    - \gamma \beta m \nabla f(y_0)
    + \gamma^2 \beta^2 m^2 \left( h_1(y_0) + \frac{1}{4} \nabla \| \nabla f(y_0) \|^2 \right) 
    + \mathcal{O}(\gamma^3)\label{generalization: 38}
\end{gather}

Equating the right-hand sides of (\ref{generalization: 37}) and (\ref{generalization: 38}), we can find $h_1(y)$:

\begin{gather}
    h_1(y) = -\frac{1}{4m} ANG(y) + \frac{1}{4\beta m} \sum_{p=1}^n (1-\alpha_p) \alpha_{p-1} \ldots \alpha_1 (k_p \ldots k_1-1) AIG(y)
\end{gather}

Thus, we recover the formula of the modified flow for $n$-layers Lookahead:

\begin{gather}
    \nonumber \Tilde{h}_{LA-n}(y) = -\nabla f(y) + \gamma \beta m h_1(y) + \mathcal{O}(\gamma^2) = \\
    = -\nabla f(y)
    -\frac{\gamma \beta}{4} ANG(y)
    + \frac{\gamma}{4} \sum_{p=1}^n (1-\alpha_p) \alpha_{p-1} \ldots \alpha_1 (k_p \ldots k_1-1) AIG(y)
    + \mathcal{O}(\gamma^2)
\end{gather}

Finally, from $\Tilde{h}_{LA-n}(y) = -\nabla \Tilde{f}_{LA-n}(y)$, we get:

\begin{gather}
    \Tilde{f}_{LA-n}(y) = f(y)
    + \frac{\gamma \beta}{4} AN(y)
    - \frac{\gamma}{4} \sum_{p=1}^n (1-\alpha_p) \alpha_{p-1} \ldots \alpha_1 (k_p \ldots k_1-1) AI(y)
    + \mathcal{O}(\gamma^2)
\end{gather}

and unrolling $\beta = \alpha_n \ldots \alpha_1$, we get the desired result.

\end{proof}

\subsection{Multilayer Lookahead Preserves Linear Convergence Rate} \label{MLA linear rate}

In this section, we show that if the inner optimizer has a linear convergence rate (either in terms of distance to the optimal point or in terms of the function value), then Lookahead (with any number of layers) preserves linear convergence rate.

Recall the update rule of the Lookahead:

\begin{gather}
    \nonumber x_{t,i+1} = x_{t,i} - \gamma g_{t,i}, \: i=0, ..., k-1 \\
    y_{t+1} = (1 - \alpha) y_t + \alpha x_{t,k} \label{59}
\end{gather}

In the following, we denote by $f$ the function to be minimized, $x^*$ its global minimum, and $f^*$ its minimum value.

\begin{claim}
Suppose that the inner optimizer of Lookahead has a linear convergence rate in terms of the distance to the optimal point; that is, it gives a sequence of iterations $(x_t, t\geq0)$, such that $\lVert x_{t+1} - x^* \rVert \leq c \lVert x_t - x^*\rVert$, $c < 1$. Then Lookahead also has a linear convergence rate.
\end{claim}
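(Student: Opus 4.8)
The plan is to track how one full round of Lookahead contracts the distance of the slow weights $y_t$ to the minimizer $x^*$, and to show that the resulting per-round factor is strictly below $1$. Recall from the update rule (\ref{59}) that at the start of round $t$ the fast weights are initialized as $x_{t,0} = y_t$, the inner optimizer then produces $x_{t,1}, \ldots, x_{t,k}$, and finally the synchronization step sets $y_{t+1} = (1-\alpha) y_t + \alpha x_{t,k}$.

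First I would apply the linear-rate hypothesis of the inner optimizer to the fast-weight subsequence generated \emph{within} round $t$. Since $\lVert x_{t,i+1} - x^* \rVert \leq c \lVert x_{t,i} - x^* \rVert$ for each $i = 0, \ldots, k-1$, iterating this bound $k$ times and using $x_{t,0} = y_t$ yields $\lVert x_{t,k} - x^* \rVert \leq c^k \lVert y_t - x^* \rVert$. Thus one inner loop contracts the distance to $x^*$ by a factor $c^k$.

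Next I would handle the synchronization step. Writing $y_{t+1} - x^* = (1-\alpha)(y_t - x^*) + \alpha (x_{t,k} - x^*)$ and applying the triangle inequality together with the previous bound gives
\begin{gather}
    \nonumber \lVert y_{t+1} - x^* \rVert \leq (1-\alpha) \lVert y_t - x^* \rVert + \alpha \lVert x_{t,k} - x^* \rVert \leq \left( (1-\alpha) + \alpha c^k \right) \lVert y_t - x^* \rVert .
\end{gather}
Setting $\tilde{c} = (1-\alpha) + \alpha c^k$, and using $0 < \alpha \leq 1$ together with $c < 1$ (so that $c^k < 1$), we obtain $\tilde{c} < (1-\alpha) + \alpha = 1$. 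Hence $\lVert y_{t+1} - x^* \rVert \leq \tilde{c} \lVert y_t - x^* \rVert$ with $\tilde{c} < 1$, which is exactly a linear convergence rate for the slow weights of Lookahead, possibly with a worse constant than $c$.

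The computation is entirely elementary, so there is no genuine analytical obstacle; the only points requiring care are to apply the inner optimizer's contraction to the fast-weight sequence \emph{restarted} at $y_t$ in each round (rather than to the slow-weight sequence itself), and to keep $\alpha$ in $(0,1]$ so that the per-round factor $\tilde{c}$ stays strictly below $1$. The analogous claim for $n$-layer Lookahead then follows by induction on the number of layers: the inner optimizer of the outermost layer is $(n-1)$-layer Lookahead, which converges linearly by the inductive hypothesis, so the single-layer argument above preserves the linear rate at each level.
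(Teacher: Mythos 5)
Your proposal is correct and follows essentially the same route as the paper's proof: contract the inner loop $k$ times to get $\lVert x_{t,k} - x^*\rVert \leq c^k \lVert y_t - x^*\rVert$, then apply the triangle inequality to the synchronization step to obtain the per-round factor $(1-\alpha) + \alpha c^k = 1 - \alpha(1-c^k) < 1$. Your closing remark on the $n$-layer case by induction also matches the paper, which handles it the same way in a separate remark.
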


\begin{proof}

From the linear convergence speed of the inner optimizer, we  have:

\begin{gather*}
    \lVert x_{t,k} - x^* \rVert \leq c^k \lVert x_{t,0} - x^* \rVert = c^k \lVert y_t - x^* \rVert
\end{gather*}

Hence,

\begin{gather*}
    \lVert y_{t+1} - x^* \rVert \leq \lVert (1-\alpha) y_t + \alpha x_{t,k} - x^* \rVert \leq (1-\alpha) \lVert y_t - x^* \rVert + \alpha \lVert x_{t,k} - x^*\rVert \leq \\
    \leq (1-\alpha) \lVert y_t - x^* \rVert + \alpha c^k \lVert y_t - x^* \rVert = (1-\alpha + \alpha c^k) \lVert y_t - x^* \rVert
\end{gather*}

Thus, we get:

\begin{gather*}
    \lVert y_{t+1} - x^* \rVert \leq (1-\alpha(1 - c^k)) \lVert y_t - x^* \rVert
\end{gather*}

and $(1-\alpha(1 - c^k)) < 1$ since $c^k < 1$.

\end{proof}

\begin{claim} \label{lin_conv:lemma1}
Let $f$ be a convex function and assume that the inner optimizer of Lookahead has a linear rate of convergence in terms of function value; that is, it gives a sequence of iterations $(x_t, t\geq0)$, such that $f(x_{t+1}) - f^* \leq c (f(x_t) - f(x^*))$, $c < 1$. Then Lookahead also has a linear convergence rate.
\end{claim}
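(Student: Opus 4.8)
The plan is to mirror the structure of the preceding distance-based claim, with the single essential modification that convexity of $f$ now plays the role that the triangle inequality (i.e., convexity of the norm) played before. The whole argument rests on the observation that the synchronization step $y_{t+1} = (1-\alpha) y_t + \alpha x_{t,k}$ produces a convex combination of $y_t$ and $x_{t,k}$, so I can pass $f$ inside via Jensen's inequality.

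First I would unroll the linear rate of the inner optimizer over one outer round. Since the inner loop is initialized at $x_{t,0} = y_t$ and performs $k$ steps, the hypothesis $f(x_{t,i+1}) - f^* \leq c\,(f(x_{t,i}) - f^*)$ iterated $k$ times gives
\begin{gather*}
    f(x_{t,k}) - f^* \leq c^k \big(f(x_{t,0}) - f^*\big) = c^k \big(f(y_t) - f^*\big).
\end{gather*}
Next I would apply convexity of $f$ to the synchronization step:
\begin{gather*}
    f(y_{t+1}) = f\big((1-\alpha) y_t + \alpha x_{t,k}\big) \leq (1-\alpha) f(y_t) + \alpha f(x_{t,k}).
\end{gather*}

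Subtracting $f^*$ from both sides and substituting the first bound yields
\begin{gather*}
    f(y_{t+1}) - f^* \leq (1-\alpha)\big(f(y_t) - f^*\big) + \alpha c^k \big(f(y_t) - f^*\big) = \big(1 - \alpha(1 - c^k)\big)\big(f(y_t) - f^*\big),
\end{gather*}
and since $c^k < 1$ and $0 < \alpha \leq 1$ we have $1 - \alpha(1 - c^k) < 1$, which is exactly a linear convergence rate in function value (with a constant closer to $1$, hence possibly worse than $c$). I expect no real obstacle here: the only subtlety is that, unlike the distance version where the norm is automatically convex, this statement genuinely requires the convexity assumption on $f$ to justify pushing $f$ through the convex combination, which is precisely why the hypothesis is stated. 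The multilayer case then follows by the same recursive unwrapping used elsewhere in the paper, applying this single-layer estimate at each level.
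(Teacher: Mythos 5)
Your proposal is correct and matches the paper's own proof essentially step for step: unrolling the inner optimizer's linear rate to get $f(x_{t,k}) - f^* \leq c^k\,(f(y_t) - f^*)$, applying convexity of $f$ to the synchronization step, and combining to obtain the contraction factor $1 - \alpha(1 - c^k) < 1$. Your closing remarks --- that convexity is exactly what replaces the triangle inequality from the distance-based claim, and that the multilayer case follows by recursive application --- also mirror the paper's surrounding remarks, so there is nothing to correct.
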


\begin{proof}
From the linear convergence speed of the inner optimizer, we  have:

\begin{gather*}
    f(x_{t,k}) - f^* \leq c^k (f(x_{t,0}) - f^*) = c^k (f(y_t) - f^*)
\end{gather*}

Hence,

\begin{gather*}
    f(y_{t+1}) - f^* = f((1-\alpha)y_t + \alpha x_{t,k}) - f^* \leq |\text{using convexity}| \leq (1-\alpha)f(y_t) + \alpha f(x_{t,k}) - f^* = \\
    = (1-\alpha)(f(y_t) - f^*) + \alpha (f(x_{t,k}) - f^*) \leq (1-\alpha + \alpha c^k)(f(y_t) - f^*)
\end{gather*}

Thus, we get:

\begin{gather*}
    f(y_{t+1}) - f^* \leq (1-\alpha (1 - c^k))(f(y_t) - f^*)
\end{gather*}

and $(1-\alpha (1 - c^k)) < 1$ since $c^k < 1$.

\end{proof}

\begin{remark}
Both claims remain valid when the inner optimizer is stochastic and the linear convergence holds in expectation. The proof for this case is identical, thus omitted.
\end{remark}

\begin{remark}
Since we can consider Multilayer Lookahead with $n$ layers as a 1-layer Lookahead with $n-1$-layers Lookahead as the inner optimizer, we can extend the result of both claims to the Multilayer Lookahead by induction over the number of layers. However, with each new layer, the constant of convergence degrades.
\end{remark}

\begin{remark}
As an application of Claim \ref{lin_conv:lemma1}, we get a linear convergence of Lookahead with gradient descent as the inner optimizer for smooth and strongly convex functions.
\end{remark}

Both lemmas prove linear convergence of Lookahead with the same constant $1-\alpha(1 - c^k)$ (but in different senses). Since $c^k < 1$, the constant of convergence decreases as alpha increases, so it achieves the minimum value for $\alpha = 1$, that is when Lookahead degenerates to its base optimizer. We should have expected this because we made only minimal assumptions on the loss function and the inner optimizer.

\section{ADDITIONAL EXPERIMENTS}

\subsection{Additional details on classification on CIFAR-10 and CIFAR-100} \label{cifar experiments details}

Here, we present additional details for the Section \ref{sec: experiments}.

For both CIFAR-10 and CIFAR-100, for Multilayer Lookahead we fixed weight decay to be optimal for SGD. The other parameters (for both CIFAR-10 and CIFAR-100) are:

LA-1 (Lookahead): 
$\alpha \in \{0.3, 0.5, 0.7\}$, 
$k \in \{5, 10\}$.

LA-2: 
fixed $k = (5,5)$, 
$\alpha = (\alpha_1, \alpha_2)$ for
$\alpha_1 \in \{0.5, 0.6, 0.7, 0.8\}$,
$\alpha_2 \in \{0.5, 0.6, 0.7, 0.8\}$
if $\alpha_1 \leq \alpha_2$.

LA-3: 
fixed $k = (5,5,5)$, 
$\alpha = (\alpha_1, \alpha_2, \alpha_3)$ for
$\alpha_1 \in \{0.6\}$,
$\alpha_2 \in \{0.7, 0.75, 0.8, 0.85\}$,
$\alpha_3 \in \{0.75, 0.8, 0.85\}$
if $\alpha_1 \leq \alpha_2 \leq \alpha_3$.

LA-4:
fixed $k = (5,5,5,5)$, 
$\alpha = (\alpha_1, \alpha_2, \alpha_3, \alpha_4)$ for
$\alpha_1 \in \{0.6\}$,
$\alpha_2 \in \{0.7, 0.75, 0.8, 0.85\}$,
$\alpha_3 \in \{0.75, 0.8, 0.85\}$,
$\alpha_4 \in \{0.75, 0.8, 0.85\}$
if $\alpha_1 \leq \alpha_2 \leq \alpha_3 \leq \alpha_4$.

and $lr \in \{0.1, 0.2, 0.3\}$ for CIFAR-10, 
$lr \in \{0.01, 0.03, 0.05\}$ for CIFAR-100.

For each number of layers, we compared all configurations using random seed = 42 and selected 2-3 the best candidates based on this seed. Then, we compared the selected candidates by the average performance on seed $\in \{1,2,3\}$ to select the best configuration. Finally, we compared the best configurations for each optimizer based on the average performance on seed $\in \{4,5,6\}$. See additional plots for comparing best configurations in Figure \ref{fig:CIFAR-10 additional plots} (CIFAR-10) and Figure \ref{fig:CIFAR-100 additional plots} (CIFAR-100).

We emphasize that the performance of Multilayer Lookahead for both CIFAR-10 and CIFAR-100 was robust towards the choice of parameters ($\alpha$, $k$) in the considered range.

\begin{figure}[h]
     \centering
     \hfill
     \begin{subfigure}[b]{0.45\textwidth}
         \centering
         \includegraphics[width=\textwidth]{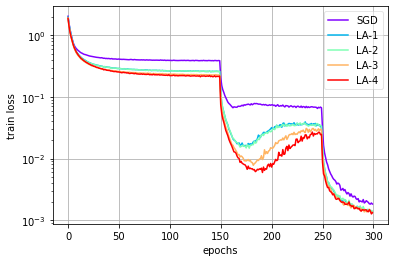}
         \caption{\small Train Loss}
     \end{subfigure}
     \hfill
     \begin{subfigure}[b]{0.45\textwidth}
         \centering
         \includegraphics[width=\textwidth]{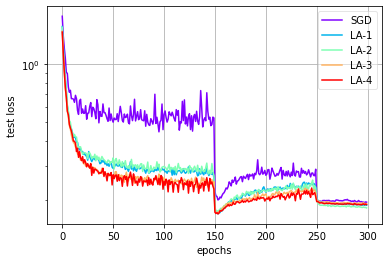}
         \caption{\small Test Loss}
     \end{subfigure}
     \hfill
     \begin{subfigure}[b]{0.45\textwidth}
         \centering
         \includegraphics[width=\textwidth]{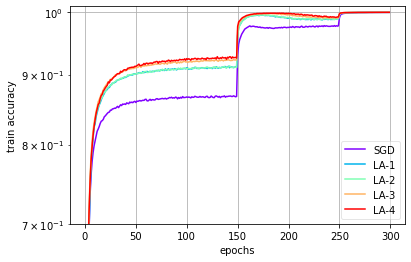}
         \caption{\small Train Accuracy}
     \end{subfigure}
    \caption{\small Additional Training Plots for CIFAR-10 Classification}
    \label{fig:CIFAR-10 additional plots}
\end{figure}

\begin{figure}[h]
     \centering
     \hfill
     \begin{subfigure}[b]{0.45\textwidth}
         \centering
         \includegraphics[width=\textwidth]{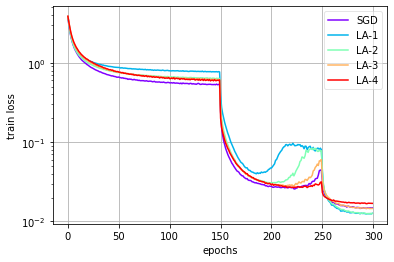}
         \caption{\small Train Loss}
     \end{subfigure}
     \hfill
     \begin{subfigure}[b]{0.45\textwidth}
         \centering
         \includegraphics[width=\textwidth]{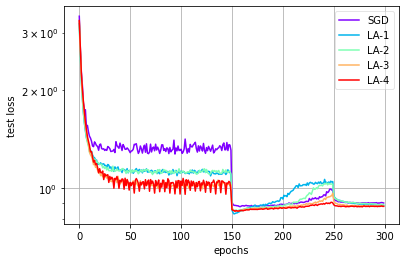}
         \caption{\small Test Loss}
     \end{subfigure}
     \hfill
     \begin{subfigure}[b]{0.45\textwidth}
         \centering
         \includegraphics[width=\textwidth]{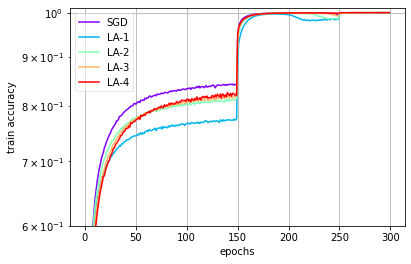}
         \caption{\small Train Accuracy}
     \end{subfigure}
    \caption{\small Additional Training Plots for CIFAR-100 Classification}
    \label{fig:CIFAR-100 additional plots}
\end{figure}

\subsection{Training GANs on MNIST} \label{GANs on MNIST}

In this section, we present results of training GANs for digits generation on MNIST dataset. We chose Deep Convolutional GANs (DCGANs) model \citep{radford2015unsupervised}), where both Discriminator and Generator entirely consist of Convolutional (Transposed Convolutional) layers. You can find the architectures for Generator and Discriminator in Figure  \ref{fig:GANs models}. The Generator takes a noise vector of $128\times1\times1$ size from the normal distribution as an input. We used the non-saturating loss function for GANs \citep{goodfellow2014generative} and Adam optimizer \citep{kingma2014adam}, which also served as the inner optimizer for Multilayer Lookahead. We fixed Adam parameters to be $\beta_1 = 0.5$, $\beta_2 = 0.999$, and tried learning rate in $\{0.0002, 0.001\}$. Such choice of parameters (with learning rate 0.0002) was suggested by \citet{radford2015unsupervised} as the default choice for DCGANs. However, we expected the optimal learning rate for Multilayer Lookahead to be larger. We used the same number of steps $k_i=5$ and synchronization parameter $\alpha_i=0.5$ for each layer of the Multilayer Lookahead, and tried number of layers from 1 to 6.

\begin{figure}
     \centering
     \begin{subfigure}[b]{0.45\textwidth}
         \centering
         \includegraphics[width=\textwidth]{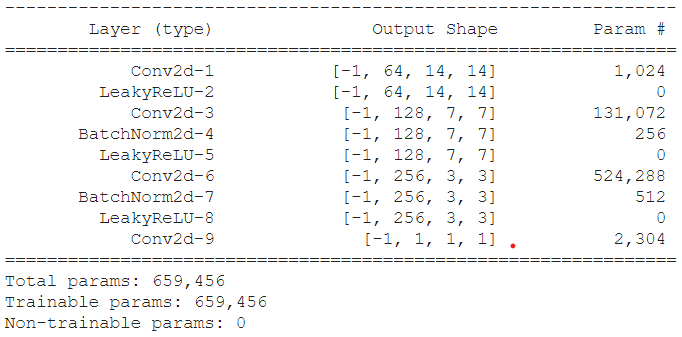}
         \caption{\small Discriminator Architecture}
     \end{subfigure}
     \hfill
     \begin{subfigure}[b]{0.45\textwidth}
         \centering
         \includegraphics[width=\textwidth]{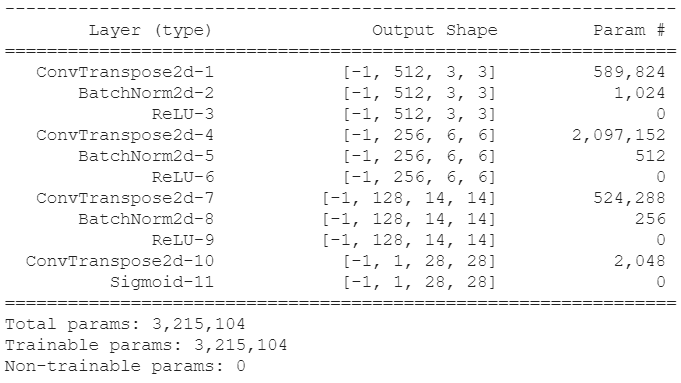}
         \caption{\small Generator Architecture}
     \end{subfigure}
    \caption{\small Models Used for GANs}
    \label{fig:GANs models}
\end{figure}

For each optimizer, we trained our models on 100 epochs using batch size = 50. We found the learning rate = 0.0002 to be better for Adam and Multilayer Lookahead with $\leq 3$ layer, and the learning rate = 0.001 to be better for the number of layers from 4 to 6. For comparing the models, we used the Inception Score metric \citep{salimans2016improved}. In the report, we do not use another popular metric - Frechet Inception Distance, because we found this metric to be unstable for MNIST. Finally, we compared the optimizers with tuned learning rate in Figure \ref{fig: MNIST_IS}. Our preliminary analysis shows that Multilayer Lookahead can significantly improve both Adam and Lookahead.

\begin{figure}[h]
\centering
\includegraphics[width=0.45\textwidth]{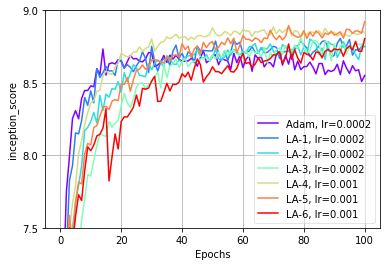}
\caption{\small Comparing Adam and Multilayer Lookahead with Adam as base optimizer on training GANs on MNIST. For each optimizer, the best learning rate from \{0.0002, 0.001\} is chosen.}
\label{fig: MNIST_IS}
\end{figure}

\end{document}